\theoremstyle{definition}
\newtheorem{definition}{Definition}[section]
\newtheorem{lemma}[definition]{Lemma}
\newtheorem{proposition}[definition]{Proposition}
\newtheorem{claim}[definition]{Claim}
\newtheorem{corollary}[definition]{Corollary}
\newtheorem{remark}[definition]{Remark}
\newtheorem{calculation}[definition]{Calculation}
\DeclareMathOperator*{\argmax}{argmax}
\newcommand{\Uni}{\mathrm{Uni}}
\newcommand{\D}{\mathcal{D}}
\newcommand{\N}{\mathcal{N}}
\newcommand{\X}{\mathcal{X}}
\newcommand{\E}{\mathbb{E}}
\newcommand{\Prob}{\mathbb{P}}
\newcommand{\R}{\mathbb{R}}
\newcommand{\Ind}{\mathbbm{1}}
\newcommand{\SM}{\phi}
\newcommand{\BetaD}{\mathrm{Beta}}
\newcommand{\Id}{\mathrm{I}_d}
\newcommand{\poly}{\mathrm{poly}}
\newcommand{\polylog}{\mathrm{polylog}}
\newcommand{\inrprod}[2]{\left\langle #1, #2 \right\rangle}
\newcommand{\relu}{\widetilde{\mathrm{ReLU}}}
\newcommand\numberthis{\addtocounter{equation}{1}\tag{\theequation}}
\numberwithin{equation}{section}
\newenvironment{subproof}[1][\proofname]{%
  \begin{proof}[#1]%
}{%
  \end{proof}%
}
\icmltitlerunning{Provably Learning Diverse Features in Multi-View Data with Midpoint Mixup}
\begin{document}

\twocolumn[
\icmltitle{Provably Learning Diverse Features in Multi-View Data with Midpoint Mixup}



\icmlsetsymbol{equal}{*}

\begin{icmlauthorlist}
\icmlauthor{Muthu Chidambaram}{duke}
\icmlauthor{Xiang Wang}{duke}
\icmlauthor{Chenwei Wu}{duke}
\icmlauthor{Rong Ge}{duke}
\end{icmlauthorlist}

\icmlaffiliation{duke}{Department of Computer Science, Duke University}

\icmlcorrespondingauthor{Muthu Chidambaram}{muthu@cs.duke.edu}

\icmlkeywords{Mixup, Feature Learning, Ensemble, Deep Learning Theory, Optimization}

\vskip 0.3in
]



\printAffiliationsAndNotice{}  

\begin{abstract}
Mixup is a data augmentation technique that relies on training using random convex combinations of data points and their labels. In recent years, Mixup has become a standard primitive used in the training of state-of-the-art image classification models due to its demonstrated benefits over empirical risk minimization with regards to generalization and robustness. In this work, we try to explain some of this success from a feature learning perspective. We focus our attention on classification problems in which each class may have multiple associated features (or \textit{views}) that can be used to predict the class correctly. Our main theoretical results demonstrate that, for a non-trivial class of data distributions with two features per class, training a 2-layer convolutional network using empirical risk minimization can lead to learning only one feature for almost all classes while training with a specific instantiation of Mixup succeeds in learning both features for every class. We also show empirically that these theoretical insights extend to the practical settings of image benchmarks modified to have multiple features.
\end{abstract}

\section{Introduction}
Data augmentation techniques have been a mainstay in the training of state-of-the-art models for a wide array of tasks - particularly in the field of computer vision - due to their ability to artificially inflate dataset size and encourage model robustness to various transformations of the data. 

One such technique that has achieved widespread use is Mixup \citep{zhang2018mixup}, which constructs new data points as convex combinations of pairs of data points and their labels from the original dataset. Mixup has been shown to empirically improve generalization and robustness when compared to standard training over different model architectures, tasks, and domains \citep{liang2018understanding, He_2019_CVPR, thulasidasan2019Mixup, lamb2019interpolated, arazo2019unsupervised, guo20text, verma2021graphmix, wang2021mixup}. It has also found applications to distributed private learning \citep{huang2021instahide}, learning fair models \citep{fairmixup}, semi-supervised learning \citep{mixmatch, fixmatch, berthelot2019remixmatch}, self-supervised (specifically contrastive) learning \citep{verma2021domainagnostic, imix, kalantidis2020hard}, and multi-modal learning \citep{so2022multi}.

The success of Mixup has instigated several works attempting to theoretically characterize its potential benefits and drawbacks \citep{guo2019mixup, carratino2020mixup, zhang2020does, zhang2021mixup, muthu2021}. These works have focused mainly on analyzing, at a high-level, the beneficial (or detrimental) behaviors encouraged by the Mixup-version of the original empirical loss for a given task.

As such, none of these previous works (to the best of our knowledge) have provided an algorithmic analysis of Mixup training in the context of non-linear models (i.e. neural networks), which is the main use case of Mixup. In this paper, we begin this line of work by theoretically separating the \textit{full training dynamics} of Mixup (with a specific set of hyperparameters) from empirical risk minimization (ERM) for a 2-layer convolutional network (CNN) architecture on a class of data distributions exhibiting a multi-view nature. This multi-view property essentially requires (assuming classification data) that each class in the data is well-correlated with multiple features present in the data.

Our analysis is heavily motivated by the recent work of \citet{allenzhu2021understanding}, which showed that this kind of multi-view data can provide a fruitful setting for theoretically understanding the benefits of ensembles and knowledge distillation in the training of deep learning models. We show that Mixup can, perhaps surprisingly, capture some of the key benefits of ensembles explained by \citet{allenzhu2021understanding} despite only being used to train a single model.

\subsection{Main Contributions}

Our main results (Theorem~\ref{ermresult} and Theorem~\ref{mixupresult}) give a clear separation between Mixup training and regular training. They show that for data with two different features (or views) per class, ERM learns only one feature with high probability, while Mixup training can guarantee the learning of both features with high probability. Our analysis focuses on Midpoint Mixup, in which training is done on the midpoints of data points and their labels. While this seems extreme, we motivate this choice by proving several nice properties of Midpoint Mixup and giving intuitions on why it favors learning all features in the data in Section~\ref{mmlinsep}. 

In particular, Section \ref{mmlinsep} highlights the main ideas behind why this multi-view learning is possible in the relatively simple to understand setting of linearly separable data. We prove in this section that the Midpoint Mixup gradient descent dynamics can push towards learning all features in the data (for our notion of multi-view data) so long as there are dependencies between the features. Furthermore, we show that models that have learned all features can achieve arbitrarily small pointwise loss on Midpoint-Mixup-augmented data points, and that this property is unique to Midpoint Mixup.

In Section \ref{theorems}, we show that the ideas developed for the linearly separable case can be extended to a noisy, non-linearly-separable class of data distributions with two features per class. We prove in our main results that for such distributions, minimizing the empirical cross-entropy using gradient descent can lead to learning only one of the features in the data (Theorem \ref{ermresult}) while minimizing the Midpoint Mixup cross-entropy succeeds in learning both features (Theorem \ref{mixupresult}). While our theory in this section focuses on the case of two features/views per class to be consistent with \citet{allenzhu2021understanding}, our techniques can readily be extended to more general multi-view data distributions.

Last but not least, we show in Section \ref{experiments} that our theory extends to practice by training models on image classification benchmarks that are modified to have additional spurious features correlated with the true class labels. We find in our experiments that Midpoint Mixup outperforms ERM, and performs comparably to the previously used Mixup settings in \citet{zhang2018mixup}. A primary goal of this section is to illustrate that Midpoint Mixup is not just a toy theoretical setting, but rather one that can be of practical interest. 



\subsection{Related Work}
\textbf{Mixup.} The idea of training on midpoints (or approximate midpoints) is not new; both \citet{guo2021midpoint} and \citet{muthu2021} empirically study settings resembling what we consider in this paper, but they do not develop theory for this kind of training (beyond an information theoretic result in the latter case). As mentioned earlier, there are also several theoretical works analyzing the Mixup formulation and it variants \citep{carratino2020mixup, zhang2020does, zhang2021mixup, muthu2021, park2022unified}, but none of these works contain optimization results (which are the focus of this work). Additionally, we note that there are many Mixup-like data augmentation techniques and training formulations that are not (immediately) within the scope of the theory developed in this paper. For example, Cut Mix \citep{yun2019cutmix}, Manifold Mixup \citep{verma2019manifold}, Puzzle Mix \citep{kim2020puzzle}, SaliencyMix \cite{saliencymix}, Co-Mixup \citep{kim2021comixup}, AutoMix \citep{automix}, and Noisy Feature Mixup \citep{noisymixup} are all such variations.

\textbf{Data Augmentation.} Our work is also influenced by the existing large body of work theoretically analyzing the benefits of data augmentation \citep{bishop1995training, dao2019kernel, wu2020generalization, hanin2021data, rajput2019does, yang2022sample, wang2022data, chen2020group, mei2021learning}. The most relevant such work to ours is the recent work of \citet{desertcrows}, which also studies the impact of data augmentation on the learning dynamics of a 2-layer network in a setting motivated by that of \citet{allenzhu2021understanding}. However, Midpoint Mixup differs significantly from the data augmentation scheme considered in \citet{desertcrows}, and consequently our results and setting are also of a different nature (we stick much more closely to the setting of \citet{allenzhu2021understanding}). As such, our work can be viewed as a parallel thread to that of \citet{desertcrows}.


\section{Background on Mixup}\label{prelim}
We will introduce Mixup in the context of $k$-class classification, although the definitions below easily extend to regression. As a notational convenience, we will use $[k]$ to indicate $\{1, 2, ..., k\}$.

Recall that, given a finite dataset $\X \subset \R^d \times [k]$ with $\abs{X} = N$, we can define the empirical cross-entropy loss $J(g, \X)$ of a model $g: \R^d \to \R^k$ as:
\begin{align*}
    J(g, \X) &= -\frac{1}{N} \sum_{i \in [N]} \log \SM^{y_i}\big(g(x_i)\big), \\
    \text{where}\quad \SM^y(g(x)) &= \frac{\exp(g^{y}(x))}{\sum_{s \in [k]} \exp(g^{s}(x))}. \numberthis \label{erm}
\end{align*}
With $\SM$ being the standard softmax function and the notation $g^y, \ \SM^y$ indicating the $y$-th coordinate functions of $g$ and $\SM$ respectively. Now let us fix a distribution $\D_{\lambda}$ whose support is contained in $[0, 1]$ and introduce the notation $z_{i, j}(\lambda) = \lambda x_i + (1 - \lambda) x_j$ (using $z_{i, j}$ when $\lambda$ is clear from context) where $(x_i, y_i), \ (x_j, y_j) \in \X$. Then we may define the Mixup cross-entropy $J_M(g, \X, \D_{\lambda})$ as:
\begin{align*}
    \ell(\lambda, i, j) &= \lambda \log \SM^{y_i}(g(z_{i, j})) \\
    &\ +(1 - \lambda) \log \SM^{y_j}(g(z_{i, j})), \\
    J_M(g, \X, \D_{\lambda}) &= -\frac{1}{N^2} \sum_{i \in [N]} \sum_{j \in [N]} \E_{\lambda \sim \D_{\lambda}} \left[\ell(\lambda, i, j)\right]. \numberthis \label{mix}
\end{align*}
We mention a minor differences between Equation \ref{mix} and the original formulation of \citet{zhang2018mixup}. \citet{zhang2018mixup} consider the expectation term in Equation \ref{mix} over $N$ randomly sampled pairs of points from the original dataset $\X$, whereas we explicitly consider mixing all $N^2$ possible pairs of points. This is, however, just to make various parts of our analysis easier to follow - one could also sample $N$ mixed points uniformly, and the analysis would still carry through with an additional high probability qualifier (the important aspect is the \textit{proportions} with which different mixed points show up; i.e. mixing across classes versus mixing within a class).

\section{Motivating Midpoint Mixup: The Linear Regime}\label{linearcase}
As can be seen from Equation \ref{mix}, the Mixup cross-entropy $J_M(g, \X, \D_{\lambda})$ depends heavily on the choice of mixing distribution $\D_{\lambda}$. \citet{zhang2018mixup} took $\D_{\lambda}$ to be $\BetaD(\alpha, \alpha)$ with $\alpha$ being a hyperparameter. In this work, we will specifically be interested in the case of $\alpha \to \infty$, for which the distribution $\D_{\lambda}$ takes the value $1/2$ with probability 1. We refer to this special case as \textit{Midpoint Mixup}, and note that it can also be viewed as a case of the Pairwise Label Smoothing strategy introduced by \cite{guo2021midpoint}. We will write the Midpoint Mixup loss as $J_{MM}(g, \X)$ (here $z_{i, j} = (x_i + x_j)/2$ and there is no $\D_{\lambda}$ dependence as the mixing is deterministic):
\begin{align*}
    \ell(i, j) &= \log \SM^{y_i}(g(z_{i, j})) + \log \SM^{y_j}(g(z_{i, j})), \\
    J_{MM}(g, \X) &= -\frac{1}{2N^2} \sum_{i \in [N]} \sum_{j \in [N]} \ell(i, j). \numberthis \label{mmix}
\end{align*}

We focus on this version of Mixup for the following key reasons. 

\textbf{Equal Feature Learning.} Firstly, we will show that $J_{MM}(g, \X)$ exhibits the nice property that its global minimizers correspond to models in which all of the features in the data are learned equally (in a sense to be made precise in Section \ref{mmlinsep}). 

\textbf{Pointwise Optimality.} We show that for Midpoint Mixup, it is possible to learn a classifier (with equal feature learning) that achieves arbitrarily small loss for every Midpoint-Mixup-augmented point. We will also show that this is \textit{not} possible for $J_M(g, \X, \D_{\lambda})$ when $\D_{\lambda}$ is \textit{any other} non-trivial distribution (i.e. non-point-mass distribution).

\textbf{Cleaner Optimization Analysis.} Additionally, from a technical perspective, the Midpoint Mixup loss lends itself to a simpler optimization analysis due to the fact that the structure of its gradients is not changing with each optimization iteration (we do not need to sample new mixing proportions at each optimization step). Indeed, we see that Equation \ref{mmix} circumvents the expectation with respect to $\lambda$ that arose in $J_M(g, \X, \D_{\lambda})$.

\textbf{Empirically Viable.} While we are not trying to claim that Midpoint Mixup is a superior alternative to standard Mixup settings considered in the literature, we will show in Section \ref{experiments} that it can still significantly outperform empirical risk minimization in practice, and in fact performs quite closely to known good settings of Mixup. 
\subsection{Midpoint Mixup with Linear Models on Linearly Separable Data}\label{mmlinsep}
To make clear what we mean by feature learning, we first turn our attention to the simple setting of learning linear models $g^y(x) = \inrprod{w_y}{x}$ (i.e. one weight vector associated per class) on linearly separable data, as this setting will serve as a foundation for our main results. Namely, we consider $k$-class classification with a dataset $\X$ of $N$ labeled data points generated according to the following data distribution (with $N$ sufficiently large).

\begin{restatable}{definition}{lineardata}[Simple Multi-View Setting]\label{lineardata}
For each class $y \in [k]$, let $v_{y, 1}, v_{y, 2} \in \R^d$ be orthonormal unit vectors also satisfying $v_{y, \ell} \perp v_{s, \ell'}$ when $y \neq s$ for any $\ell, \ell' \in [2]$. Each point $(x, y) \sim \D$ is then generated by sampling $y \in [k]$ uniformly and constructing $x$ as:
\begin{align*}
    \beta_y \sim \Uni([0.1, 0.9]) \quad x = \beta_y v_{y, 1} + (1 - \beta_y) v_{y, 2}. \numberthis \label{linearmirror}
\end{align*}
\end{restatable}

Definition \ref{lineardata} is \textbf{multi-view} in the following sense: for any class $y$, it suffices (from an accuracy perspective) to learn a model $g$ that has a significant correlation with \textit{either} the feature vector $v_{y, 1}$ or $v_{y, 2}$. In this context, one can think of feature learning as corresponding to how positively correlated the weight $w_y$ is with each of the same class feature vectors $v_{y, 1}$ and $v_{y, 2}$ (we provide a more rigorous definition in our main results).

If one now considers the empirical cross-entropy loss $J(g, \X)$, it is straightforward to see that it is possible to achieve the global minimum of $J(g, \X)$ by just considering models $g$ in which we take $\inrprod{w_y}{v_{y, 1}} \to \infty$ for every class $y$. This means we can minimize the usual cross-entropy loss without learning both features for each class in $\X$.

However, this is \textit{not} the case for Midpoint Mixup. Indeed, we show below that a necessary (with extremely high probability) and sufficient condition for a linear model $g$ to minimize $J_{MM}$ (when taking its scaling to $\infty$) is that it has equal correlation with both features for every class (sufficiency relies also on having weaker correlations with other class features). In what follows, we use $\inf J_{MM}(h, \X)$ to indicate the global minimum of $J_{MM}$ over \textit{all} functions $h: \R^d \to \R^k$ (i.e. this is the smallest achievable loss). Full proofs of all of the following results can be found in Section \ref{auxiliaryproofs} of the Appendix.

\begin{restatable}{lemma}{mixalign}[Midpoint Mixup Optimal Direction]\label{mixalign}
A linear model $g$ satisfies the following
\begin{align*}
    \lim_{\gamma \to \infty} J_{MM}(\gamma g, \X) = \inf J_{MM}(h, \X), \numberthis \label{mixaligneq}
\end{align*}
if $g$ has the property that for every class $y$ we have $\inrprod{w_y}{v_{y, \ell_1}} = \inrprod{w_s}{v_{s, \ell_2}} > 0$ and $\inrprod{w_y}{v_{s, \ell_2}} = 0$ for every $s \neq y$ and $\ell_1, \ell_2 \in [2]$. Furthermore, with probability $1 - \exp(-\Theta(N))$ (over the randomness of $\X$), the condition $\inrprod{w_y}{v_{y, \ell_1}} = \inrprod{w_s}{v_{s, \ell_2}}$ is necessary for $g$ to satisfy Equation \ref{mixaligneq}.
\end{restatable}
\textbf{Proof Sketch.} The idea is that if $g$ has equal correlation with both features for every class, its predictions will be constant on the original data points due to the fact that the coefficients for both features in each data point are mirrored as per Equation \ref{linearmirror}. With the condition $\inrprod{w_y}{v_{s, \ell}} = 0$ (this can be weakened significantly), this implies the softmax output of $g$ on the Midpoint Mixup points will be exactly $1/2$ for each of the classes being mixed in the scaling limit (and 0 for all other classes), which is optimal.

Note that Lemma~\ref{mixalign} implies two properties mentioned earlier for Midpoint Mixup: Equal Feature Learning and Pointwise Optimality. 
Furthermore, we can also show that if we consider $J_M(g, \X, \D_{\lambda})$ for any other non-point-mass distribution, we can prove that the analogue of Lemma \ref{mixalign} does not hold true (because Pointwise Optimality would be impossible).

\begin{restatable}{proposition}{conmixalign}\label{conmixalign}
For any distribution $\D_{\lambda}$ that is not a point mass on $0, 1$, or $1/2$, and any linear model $g$ satisfying the conditions of Lemma \ref{mixalign}, we have that with probability $1 - \exp(-\Theta(N))$ (over the randomness of $\X$) there exists an $\epsilon_0 > 0$ depending only on $\D_{\lambda}$ such that:
\begin{align*}
    J_{M}(g, \X, \D_{\lambda}) \geq \inf J_{M}(h, \X, \D_{\lambda}) + \epsilon_0. \numberthis \label{conmixaligneq}
\end{align*}
\end{restatable}
\textbf{Proof Sketch.} In the case of general mixing distributions, we cannot achieve the Mixup optimal behavior of $\phi^{y_i}(g(z_{i, j}(\lambda))) = \lambda$ for every $\lambda$ if the outputs $g^y$ are constant on the original data points.

Lemma \ref{mixalign} outlines the key theoretical benefit of Midpoint Mixup - namely that its global minimizers exist within the class of models that we consider, and such minimizers learn all features in the data equally. And although Lemma \ref{mixalign} is stated in the context of linear models, the result naturally carries through to when we consider two-layer neural networks of the type we define in the next section. That being said, the interpretation of Proposition \ref{conmixalign} is not intended to disqualify the possibility that the minimizer of $J_M(g, \X, \D_{\lambda})$ when restricted to a specific model class is a model in which all features are learned near-equally (we expect this to be the case in fact for any reasonable $\D_{\lambda}$). Proposition \ref{conmixalign} is moreso intended to motivate the study of Midpoint Mixup as a particularly interesting choice of the mixing distribution $\D_{\lambda}$.

We now proceed one step further from the above results and show that the feature learning benefit of Midpoint Mixup manifests itself even in the optimization process (when using gradient-based methods). We show that, if significant separation between feature correlations exists, the Midpoint Mixup gradients correct the separation. For simplicity, we suppose WLOG that $\inrprod{w_y}{v_{y, 1}} > \inrprod{w_y}{v_{y, 2}}$. Now letting $\Delta_y = \inrprod{w_y}{v_{y, 1} - v_{y, 2}}$ and using the notation $\grad_{w_y}$ for $\pdv{}{w_y}$, we can prove:

\begin{restatable}{proposition}{linearmixupcase}[Mixup Gradient Lower Bound]\label{linearmixupcase}
Let $y$ be any class such that $\Delta_y \geq \log k$, and suppose that both $\inrprod{w_y}{v_{y, 1}} \geq 0$ and the cross-class orthogonality condition $\inrprod{w_s}{v_{u, \ell}} = 0$ hold for all $s \neq u$ and $\ell \in [2]$. Then we have with high probability that:
\begin{align*}
    \inrprod{-\grad_{w_y} J_{MM}(g, \X)}{v_{y, 2}} \geq \Theta\left(\frac{1}{k^2}\right). \numberthis \label{gradlbsimple}
\end{align*}
\end{restatable}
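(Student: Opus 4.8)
The plan is to write $\grad_{w_y}J_{MM}$ in closed form, project it onto $v_{y,2}$, and show that the pairs consisting of two class-$y$ points already contribute $\Theta(1/k^2)$ to the inner product while every other surviving term is nonnegative. Since $g^s(x)=\inrprod{w_s}{x}$, only the $y$-th head depends on $w_y$ and $\grad_{w_y}g^y(z)=z$, so differentiating \eqref{mmix} through the softmax gives, with $z_{ij}=(x_i+x_j)/2$,
\[
-\grad_{w_y}J_{MM}(g,\X)=\frac{1}{2N^2}\sum_{i\in[N]}\sum_{j\in[N]}\Big[\big(\Ind[y_i=y]-\SM^y(g(z_{ij}))\big)+\big(\Ind[y_j=y]-\SM^y(g(z_{ij}))\big)\Big]z_{ij}.
\]
For a point $x_i$ of class $y_i$, orthonormality of $v_{y,1},v_{y,2}$ gives $\inrprod{x_i}{v_{y,2}}=1-\beta_i$ when $y_i=y$, while cross-class orthogonality of the feature vectors gives $\inrprod{x_i}{v_{y,2}}=0$ when $y_i\neq y$; hence $\inrprod{z_{ij}}{v_{y,2}}=\tfrac12[(1-\beta_i)\Ind[y_i=y]+(1-\beta_j)\Ind[y_j=y]]$, which vanishes unless $y_i=y$ or $y_j=y$, so I only need to analyze those pairs.

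Next I would use the cross-class orthogonality hypothesis $\inrprod{w_s}{v_{u,\ell}}=0$ (for $s\neq u$) to pin down the softmax on the surviving pairs. It forces $g^s(x_i)=0$ for all $s\neq y$ on any class-$y$ point $x_i$, with $g^y(x_i)=\inrprod{w_y}{v_{y,2}}+\beta_i\Delta_y$, and $g^y\equiv0$ on points of other classes. Thus if $y_i=y_j=y$ then $g^s(z_{ij})=0$ for all $s\neq y$ and $g^y(z_{ij})=\inrprod{w_y}{v_{y,2}}+\tfrac{\beta_i+\beta_j}{2}\Delta_y$, so that $\SM^y(g(z_{ij}))=\exp(g^y(z_{ij}))/(\exp(g^y(z_{ij}))+k-1)$; and if exactly one of $x_i,x_j$ has class $y$ then at most one further coordinate of $g(z_{ij})$ is nonzero. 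The quantitative claim I want here is $\SM^y(g(z_{ij}))\leq 1/2$ on all surviving pairs, equivalently $g^y(z_{ij})\leq\log(k-1)$, so that each same-class pair has $1-\SM^y(g(z_{ij}))\geq 1/2$ and each mixed pair has $1-2\SM^y(g(z_{ij}))\geq 0$. This is the step that uses $\inrprod{w_y}{v_{y,2}}\geq 0$, the range $\beta\in[0.1,0.9]$, and the size of $\Delta_y$: in the regime in which the proposition is invoked $\Delta_y$ is $\Theta(\log k)$ and $\inrprod{w_y}{v_{y,2}}$ is correspondingly bounded, so $g^y(z_{ij})$ cannot exceed $\log k$ by more than a constant.

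Given this, every mixed pair contributes a nonnegative amount to $\inrprod{-\grad_{w_y}J_{MM}}{v_{y,2}}$, and each of the $N_y^2$ ordered same-class pairs (with $N_y=\abs{\{i:y_i=y\}}$) contributes $(1-\SM^y(g(z_{ij})))\big[(1-\beta_i)+(1-\beta_j)\big]\geq \tfrac12\cdot\tfrac15=\tfrac{1}{10}$. A Chernoff bound on the Binomial$(N,1/k)$ variable $N_y$ gives $N_y\geq N/(2k)$ with probability $1-e^{-\Omega(N/k)}$, so with high probability (using that $N$ is large) $\inrprod{-\grad_{w_y}J_{MM}(g,\X)}{v_{y,2}}\geq\frac{1}{2N^2}\cdot\frac{N^2}{4k^2}\cdot\frac{1}{10}=\frac{1}{80k^2}$, which is the claimed $\Theta(1/k^2)$ bound.

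The hard part will be the uniform estimate $\SM^y(g(z_{ij}))\leq 1/2$ on the mixed pairs: there the competing coordinate $g^{y_j}(z_{ij})=\tfrac12\inrprod{w_{y_j}}{x_j}$ is not controlled by the stated hypotheses, and I must simultaneously rule out $g^y(z_{ij})$ being so large that $\SM^y\to 1$ and the same-class contribution collapses (note there are $\Theta(N^2/k)$ mixed pairs against only $\Theta(N^2/k^2)$ same-class ones, so mixed pairs going even slightly negative would dominate). This is where the magnitude of $\Delta_y$ and the sign constraint $\inrprod{w_y}{v_{y,2}}\geq 0$ are genuinely needed, and it ties the estimate to the specific point on the gradient-descent trajectory at which the proposition is applied.
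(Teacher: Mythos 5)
Your closed form for the gradient and the reduction to pairs containing a class-$y$ point match Calculation \ref{mmgrad}, but the step you yourself flag as ``the hard part'' is a genuine gap, and it is not how the paper argues. The uniform estimate $\SM^y(g(z_{i,j})) \leq 1/2$ on every surviving pair cannot be derived from the stated hypotheses: the proposition assumes only $\Delta_y \geq \log k$ and $\inrprod{w_y}{v_{y,2}} \geq 0$, with no upper bound on either quantity and no control on the competing class's logit at a mixed point. Already on a same-class pair with $\beta_i,\beta_j$ near $0.9$ one has $g^y(z_{i,j}) = \inrprod{w_y}{v_{y,2}} + \tfrac{\beta_i+\beta_j}{2}\Delta_y$, which exceeds $\log (k-1)$ as soon as $\Delta_y \geq 1.2\log k$ or $\inrprod{w_y}{v_{y,2}}$ is itself of order $\log k$; and on mixed pairs with $\beta_i$ near $0.9$ nothing prevents $g^{y_j}(z_{i,j})$ from being negligible, so $\SM^y$ can be close to $1$ and the bracket $1-2\SM^y$ close to $-1$. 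Since there are $\Theta(N^2/k)$ mixed pairs against only $\Theta(N^2/k^2)$ same-class ones, pointwise nonnegativity of the mixed pairs is exactly the statement you cannot afford to lose, and appealing to ``the regime in which the proposition is invoked'' does not help: that regime is not among the hypotheses, and even along the trajectory $\Delta_y$ may be a larger multiple of $\log k$, in which case your bound fails on a constant fraction of pairs.

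The paper's proof avoids any pointwise sign control. It recenters all logits by $\tfrac12\inrprod{w_y}{v_{y,1}+v_{y,2}}$, passes (by concentration in $N$) to expectations of the form $\E_{\beta_s,\beta_y}\left[A_1(\beta_s,\beta_y)(\beta_y-\tfrac12)\right]$ per mixing class $s$ plus the within-class $A_2$ term, and exploits two structural facts: $\beta_y - \tfrac12$ is symmetric and mean zero, and $\SM^y(g(z_{y,s}))$ is monotone increasing in $\beta_y$, so whatever negative mass $1-2\SM^y$ carries is attached to small $v_{y,2}$-coefficients $1-\beta_y$. It then splits into two cases according to whether $\tilde\beta_s\Delta_s - D_{y,s}/2 \leq \tilde\beta_y\Delta_y$ holds on a set of constant probability, obtaining in each case a $\Theta(1)$ expectation of the favorable sign. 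In other words, the $\Theta(1/k^2)$ comes from an averaging/symmetry (covariance) argument over the cross-class pairs, not from discarding them as pointwise nonnegative and counting only same-class pairs; to close your gap you would have to replace, not merely supplement, the $\SM^y \leq 1/2$ step with an argument of this averaged type.
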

\textbf{Proof Sketch.} The key idea is to analyze the gradient correlation with the direction $v_{y, 1} - v_{y, 2}$ via a concentration of measure argument. We show that either this correlation is significantly negative under the stated conditions (which will imply Equation \ref{gradlbsimple}), or that the gradient correlation with $v_{y, 2}$ is already large. 

Proposition \ref{linearmixupcase} shows that, assuming nonnegativity of within-class correlations and an orthogonality condition across classes (which we will show to be approximately true in our main results), the feature correlation that is lagging behind for any class $y$ will receive a significant gradient when optimizing the Midpoint Mixup loss. On the other hand, we can also prove that this need not be the case for empirical risk minimization:

\begin{restatable}{proposition}{linearerm}[ERM Gradient Upper Bound]\label{linearerm}
For every $y \in [k]$, assuming the same conditions as in Proposition \ref{linearmixupcase}, if $\Delta_y \geq C\log k$ for any $C > 0$ then with high probability we have that:
\begin{align*}
    \inrprod{-\grad_{w_y} J(g, \X)}{v_{y, 2}} \leq O\left(\frac{1}{k^{0.1C-1}}\right). \numberthis \label{ermubsimple}
\end{align*}
\end{restatable}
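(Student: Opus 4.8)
The plan is to compute the ERM gradient in closed form, project it onto the lagging feature direction $v_{y,2}$, and show the projection is tiny because every example that contributes to it already has a large margin on its true coordinate. First I would invoke the standard softmax cross-entropy gradient identity for a linear model,
\[
-\grad_{w_y} J(g,\X) = \frac{1}{N}\sum_{i \in [N]} \big(\Ind[y_i = y] - \SM^y(g(x_i))\big)\, x_i,
\]
and then take its inner product with $v_{y,2}$. The orthogonality structure of Definition \ref{lineardata} ($v_{s,\ell} \perp v_{y,2}$ whenever $s \neq y$) kills every term with $y_i \neq y$, since then $\inrprod{x_i}{v_{y,2}} = 0$; for a point with $y_i = y$, written $x_i = \beta_i v_{y,1} + (1-\beta_i)v_{y,2}$ with $\beta_i \in [0.1,0.9]$, we have $\inrprod{x_i}{v_{y,2}} = 1 - \beta_i \le 0.9$. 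Using also that each weight $1 - \SM^y(g(x_i)) \ge 0$, this gives
\[
\inrprod{-\grad_{w_y} J(g,\X)}{v_{y,2}} = \frac{1}{N}\sum_{i:\,y_i=y}\big(1-\SM^y(g(x_i))\big)(1-\beta_i) \;\le\; \frac{0.9}{N}\sum_{i:\,y_i=y}\big(1-\SM^y(g(x_i))\big).
\]

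Next I would control $1 - \SM^y(g(x_i))$ for a same-class point $x_i$ using the margin hypothesis. The cross-class condition $\inrprod{w_s}{v_{u,\ell}} = 0$ for $s \neq u$ forces $g^s(x_i) = 0$ for every $s \neq y$, so $1 - \SM^y(g(x_i)) = (k-1)/(\exp(g^y(x_i)) + k - 1) \le (k-1)\exp(-g^y(x_i))$. Writing $g^y(x_i) = \inrprod{w_y}{v_{y,2}} + \beta_i \Delta_y$ and using $\inrprod{w_y}{v_{y,2}} \ge 0$, $\beta_i \ge 0.1$, and $\Delta_y \ge C\log k$, I obtain $g^y(x_i) \ge 0.1 C\log k$, hence $1 - \SM^y(g(x_i)) \le (k-1)k^{-0.1C}$. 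Substituting into the previous display and bounding $\abs{\{i : y_i = y\}} \le N$ yields $\inrprod{-\grad_{w_y} J(g,\X)}{v_{y,2}} \le 0.9(k-1)k^{-0.1C} = O(1/k^{0.1C-1})$, which is precisely \eqref{ermubsimple}. High probability enters only through the harmless concentration of the per-class sample counts around $N/k$ — which, if used, would sharpen the exponent by one — so the bound is essentially deterministic under the stated assumptions.

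There is no genuine obstacle here; the only place demanding care is the softmax-denominator bookkeeping. One must use cross-class orthogonality of \emph{both} the features and the weight vectors to conclude $g^s(x_i) = 0$ for $s \neq y$ (which collapses the softmax normalizer to $\exp(g^y(x_i)) + (k-1)$), and one must check that discarding the $\inrprod{w_y}{v_{y,2}}$ term in the margin is legitimate, which it is by the assumed nonnegativity $\inrprod{w_y}{v_{y,2}} \ge 0$. After that the argument reduces to the elementary inequality $1 - \SM^y \le (k-1)\exp(-g^y)$ together with the fact that the $v_{y,2}$-coefficient of a same-class point never exceeds $0.9$. The contrast with Proposition \ref{linearmixupcase} is exactly that the Midpoint Mixup gradient lacks this ``already-fit'' cancellation, because the alignment condition $g^{y_i}(z_{i,j}) = g^{y_j}(z_{i,j})$ is violated whenever $\Delta_y > 0$.
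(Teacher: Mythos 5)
Your proof is correct and follows essentially the same route as the paper's: the paper's own (very terse) argument likewise notes that $\inrprod{w_y}{v_{y,2}} \geq 0$ and $\Delta_y \geq C\log k$ force $g^y(x_i) \geq \beta_{i} C\log k \geq 0.1C\log k$ on every class-$y$ point, and then combines the softmax bound $1 - \SM^y(g(x_i)) = O(1/k^{0.1C-1})$ (the logic of Claim \ref{ermsmax}, which you rederive elementarily via cross-class orthogonality collapsing the normalizer) with the ERM gradient formula (Calculation \ref{ermgrad}) exactly as you do. You also correctly treat the $J_{MM}$ in the displayed statement as standing for the ERM loss $J$, consistent with the proposition's title and the paper's proof, and your observation that per-class count concentration would sharpen the exponent by one is accurate but unnecessary for the stated bound.
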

\textbf{Proof Sketch.} This follows directly from the form of the gradient for $J(g, \X)$ and the fact that there is a constant lower bound on the weight associated with each feature in every data point, as per Definition \ref{lineardata}.

While Proposition \ref{linearerm} demonstrates that training using ERM can possibly fail to learn both features associated with a class due to increasingly small gradients, one can verify that this does not naturally occur in the optimization dynamics of linear models on linearly separable data of the type in Definition \ref{lineardata} (see for example, the related result in \citet{muthu2021}). On the other hand, if we move away from linearly separable data and linear models to more realistic settings, the situation described above does indeed show up, which motivates our main results.

\section{Analyzing Midpoint Mixup Training Dynamics on General Multi-View Data}\label{mainresults}
For our main results, we now consider a data distribution and class of models that are meant to more closely mimic practical situations.

\subsection{General Multi-View Data Setup}\label{mainsetup}
We adopt a slightly simplified version of the setting of \cite{allenzhu2021understanding}. We still consider the problem of $k$-class classification on a dataset $\X$ of $N$ labeled data points, but our data points are now represented as ordered tuples $x = (x^{(1)}, ..., x^{(P)})$ of $P$ input patches $x^{(i)}$ with each $x^{(i)} \in \R^d$ (so $\X \subset \R^{Pd} \times [k]$).

As was the case in Definition \ref{lineardata} and in \cite{allenzhu2021understanding}, we assume that the data is multi-view in that each class $y$ is associated with 2 orthonormal feature vectors $v_{y, 1}$ and $v_{y, 2}$, and we once again consider $N$ and $k$ to be sufficiently large. As mentioned in \cite{allenzhu2021understanding}, we could alternatively consider the number of classes $k$ to be fixed (i.e. binary classification) and the number of associated features to be large, and our theory would still translate. We now precisely define the data generating distribution $\D$ that we will focus on for the remainder of the paper.

\begin{restatable}{definition}{datadist}[General Multi-View Data Distribution]
\label{datadist}
Identically to Definition \ref{lineardata}, each class $y$ is associated with two orthonormal feature vectors, after which each point $(x, y) \sim \D$ is generated as:.
\begin{enumerate}
    \item Sample a label $y$ uniformly from $[k]$.
    \item Designate via any method two disjoint subsets $P_{y, 1}(x), P_{y, 2}(x) \subset [P]$ with $\abs{P_{y, 1}(x)} = \abs{P_{y, 2}(x)} = C_P$ for a universal constant $C_P$, and additionally choose via any method a bijection $\varphi: P_{y, 1}(x) \to P_{y, 2}(x)$. We then generate the \textbf{signal patches} of $x$ in corresponding pairs $x^{(p)} = \beta_{y, p} v_{y, 1}$ and $x^{(\varphi(p))} = (\delta_2 - \beta_{y, p}) v_{y, 2} = \beta_{y, \varphi(p)} v_{y, 2}$ for every $p \in P_{y, 1}(x)$ with the $\beta_{y, p}$ chosen according to a symmetric distribution (allowed to vary per class $y$) supported on $[\delta_1, \delta_2 - \delta_1]$ satisfying the anti-concentration property that $\beta_{y, p}$ takes values in a subset of its support whose Lebesgue measure is $O(1/\log k)$ with probability $o(1)$.\footnote{This assumption is true for any distribution with reasonable variance; for example, the uniform distribution.}
    \item Fix, via any method, $Q$ distinct classes $s_1, s_2, ..., s_Q \in [k] \setminus y$ with $Q = \Theta(1)$. The remaining $[P] \setminus (P_{y, 1}(x) \cup P_{y, 2}(x))$ patches not considered above are the \textbf{feature noise patches} of $x$, and are defined to be $x^{(p)} = \sum_{j \in [Q]} \sum_{\ell \in [2]} \gamma_{j, \ell} v_{s_j, \ell}$, where the $\gamma_{j, \ell} \in [\delta_3, \delta_4]$ can be arbitrary.
\end{enumerate}
\end{restatable}

Note that there are parts of the data-generating process that we leave underspecified, as our results will work for any choice. Henceforth, we use $\X$ to refer to a dataset consisting of $N$ i.i.d. draws from the distribution $\D$. Our data distribution represents a very low signal-to-noise (SNR) setting in which the true signal for a class exists only in a constant ($2C_P$) number of patches while the rest of the patches contain low magnitude noise in the form of other class features. 

We focus on the case of learning the data distribution $\D$ with the same two-layer CNN-like architecture used in \cite{allenzhu2021understanding}. We recall that this architecture relies on the following polynomially-smoothed ReLU activation, which we refer to as $\relu$:

\begin{align*}
    \relu(x) = \begin{cases}
    0 & \text{ if } x \leq 0 \\
    \frac{x^{\alpha}}{\alpha \rho^{\alpha - 1}} & \text{ if } x \in [0, \rho] \\
    x - \bigg(1 - \frac{1}{\alpha}\bigg) \rho & \text{ if } x \geq \rho 
    \end{cases}.
\end{align*}

The polynomial part of this activation function will be very useful for us in suppressing the feature noise in $\D$. Our full network architecture, which consists of $m$ hidden neurons, can then be specified as follows.
\begin{restatable}{definition}{cnn}[2-Layer Network]\label{cnn}
We denote our network by $g: \R^{Pd} \to \R^k$. For each $y \in [k]$, we define $g^y$ as follows.
\begin{align*}
    g^y(x) = \sum_{r \in [m]} \sum_{p \in [P]} \relu \bigg(\inrprod{w_{y, r}}{x^{(p)}}\bigg). \numberthis \label{network}
\end{align*}
\end{restatable}

We will use $w_{y, r}^{(0)}$ to refer to the weights of the network $g$ at initialization (and $w_{y, r}^{(t)}$ after $t$ steps of gradient descent), and similarly $g_t$ to refer to the model after $t$ iterations of gradient descent. We consider the standard choice of Xavier initialization, which, in our setting, corresponds to $w_{y, r}^{(0)} \sim \N(0, \frac{1}{d} \Id)$.

For model training, we focus on full batch gradient descent with a fixed learning rate of $\eta$ applied to $J(g, \X)$ and $J_{MM}(g, \X)$. Once again using the notation $\grad_{w_{y, r}^{(t)}}$ for $\pdv{}{w_{y, r}^{(t)}}$, the updates to the weights of the network $g$ are thus of the form:
\begin{align*}
    w_{y, r}^{(t + 1)} = w_{y, r}^{(t)} - \eta \grad_{w_{y, r}^{(t)}} J_{MM}(g, \X). \numberthis \label{gradupdate}
\end{align*}

In defining our data distribution and model above, we have introduced several hyperparameters. Throughout our results, we make the following assumptions about these hyperparameters.

\begin{restatable}{assumption}{hyperparam}[Choice of Hyperparameters]\label{as1}
We assume that:
\begin{align*}
    d &= \Omega(k^{20}), & P &= \Theta(k^2), & C_P &= \Theta(1), \\
    m &= \Theta(k), & \delta_1, \delta_2 &= \Theta(1), & \delta_3, \delta_4 &= \Theta(k^{-1.5}), \\
    \rho &= \Theta(1/k), & \alpha &= 8.
\end{align*}
\end{restatable}

\textbf{Discussion of Hyperparameter Choices.} We make concrete choices of hyperparameters above for the sake of calculations (and we stress that these are not close to the tightest possible choices), but only the relationships between them are important. Namely, we need $d$ to be a significantly larger polynomial of $k$ than $P$, we need $\delta_3, \delta_4 = o(1)$ but large enough so that $P\delta_3 \gg \delta_2$ (to avoid learnability by linear models, as shown below), we need $\alpha$ sufficiently large so that the network can suppress the low-magnitude feature noise, and we need $\delta_1, \ \delta_2 = \Theta(1)$ so that the signal feature coefficients significantly outweigh the noise feature coefficients.

To convince the reader that our choice of model is not needlessly complicated given the setting, we prove the following result showing that there exist realizations of the distribution $\D$ on which linear classifiers cannot achieve perfect accuracy.

\begin{restatable}{proposition}{nonlinsep}\label{nonlinsep}
There exists a $\D$ satisfying all of the conditions of Definition \ref{datadist} and Assumption \ref{as1} such that with probability at least $1 - k^2\exp(\Theta(-N/k^2))$, for any classifier $h: \R^{Pd} \to \R^k$ of the form $h^y(x) = \sum_{p \in [P]} \inrprod{w_y}{x^{(p)}}$ and any $\X$ consisting of $N$ i.i.d. draws from $\D$, there exists a point $(x, y) \in \X$ and a class $s \neq y$ such that $h^s(x) \geq h^y(x)$.
\end{restatable}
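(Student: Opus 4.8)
The plan is to reduce the statement to a question about bias‑free linear separability and then build an adversarial $\D$ for which a Farkas‑type certificate of non‑separability is present inside $\X$.

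First I would use that a classifier of the stated form satisfies $h^y(x) = \sum_{p \in [P]} \inrprod{w_y}{x^{(p)}} = \inrprod{w_y}{\bar x}$ with $\bar x := \sum_{p \in [P]} x^{(p)} \in \R^d$, so that $h$ achieving perfect accuracy on $\X$ is exactly the assertion that the point set $\{(\bar x^{(i)}, y_i)\}_{i \in [N]}$ is strictly separable by a bias‑free linear multiclass predictor $(w_1, \dots, w_k)$. By LP duality (Gordan's alternative), no such predictor exists precisely when there are nonnegative multipliers $\lambda_{i,s} \ge 0$ ($i \in [N]$, $s \ne y_i$), not all zero, with $\sum_{i,s} \lambda_{i,s}(e_{y_i} - e_s) \otimes \bar x^{(i)} = 0$; reading off the $\R^k$‑coordinates, this is the system: for every class $c$, $\sum_{i: y_i = c} \big(\sum_s \lambda_{i,s}\big)\, \bar x^{(i)} = \sum_i \lambda_{i,c}\, \bar x^{(i)}$. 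So it is enough to design $\D$ so that such a certificate occurs in $\X$ with the claimed probability.

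Next I would record the structure of $\bar x$: for $(x,y) \sim \D$ one has $\bar x = A_y v_{y,1} + (C_P \delta_2 - A_y) v_{y,2} + N$, where $A_y := \sum_{p \in P_{y,1}(x)} \beta_{y,p} \in [C_P \delta_1, C_P(\delta_2 - \delta_1)]$ and the feature‑noise sum $N$ is supported on the features of the $Q$ adversarially chosen noise classes, with every coefficient of order $\asymp P\delta_3 = \Theta(k^{1/2})$. The point of the hypothesis $P\delta_3 \gg \delta_2$ (Assumption \ref{as1}) is that the noise part of $\bar x$ dominates the signal part in magnitude, so that summing patches does not wash the noise out. I would weaponize this by choosing the noise classes and the coefficients $\gamma_{j,\ell}$ — both left free in Definition \ref{datadist} — so that the patch sums of a small collection of classes become linearly dependent in a sign‑compatible way: using several sampled points per class with signal coefficients $A$ that interleave suitably, one can form two nonnegative combinations of patch sums that coincide, the interleaving making the (otherwise sign‑definite) signal pieces cancel between the two combinations and the common routing making the uniform‑magnitude noise pieces cancel as well; this is exactly the certificate of the previous paragraph. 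For the probability: except with probability $k\exp(-\Theta(N/k))$ every class has $\Theta(N/k)$ samples, and conditioned on that, the anti‑concentration of the $\beta_{y,p}$ (hence of each $A_y$) ensures that for each of the $O(k)$ involved classes and each prescribed target window for $A_y$, some sampled point lands in it except with probability $\le \exp(-\Theta(N/k^2))$; a union bound over the $O(k^2)$ (class, window) pairs gives $1 - k^2\exp(-\Theta(N/k^2))$. On that event the certificate exists, and then for any $(w_1,\dots,w_k)$, pairing it with the certificate gives $\sum_{i,s} \lambda_{i,s}\big(h^{y_i}(x_i) - h^s(x_i)\big) = 0$, so some term with $\lambda_{i,s} > 0$ has $h^{y_i}(x_i) \le h^s(x_i)$, exhibiting the desired $(x,y) = (x_i,y_i)$ and $s$.

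The hard part will be the construction: choosing the noise routing together with the selection of sampled points so that the certificate equations are exactly solvable with nonnegative multipliers. This is delicate precisely because every feature coefficient appearing in any $\bar x$ is nonnegative (so a class's signal components cannot cancel among themselves) and because the signal coefficients are $\Theta(1)$ while the noise coefficients are forced to be $\Theta(k^{1/2})$; reconciling these requires simultaneously balancing signal against signal across the two combinations via interleaved $A$‑values and balancing noise against noise via a shared routing, and then checking that some admissible choice of $Q$, the $\gamma_{j,\ell}$, and the noise classes makes all of this consistent.
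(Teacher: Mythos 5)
Your reductions are fine as far as they go: rewriting $h^y(x)=\inrprod{w_y}{\bar{x}}$ with $\bar{x}=\sum_{p\in[P]}x^{(p)}$, invoking Gordan's alternative for strict bias-free separability, and the final pairing step $\sum_{i,s}\lambda_{i,s}\big(h^{y_i}(x_i)-h^s(x_i)\big)=0$ are all correct. But they only relocate the difficulty: the entire content of the proposition is the \emph{existence} of an admissible $\D$ for which a non-separability certificate is present in $\X$ with the claimed probability, and that is exactly the step you defer as ``the hard part.'' Nothing is actually constructed — you never fix $Q$, the noise routing, the $\gamma_{j,\ell}$, or the $\beta$-distribution, and you never verify that the Farkas system admits a nonzero nonnegative solution. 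The sketch of the mechanism is also not obviously salvageable as stated: since the features are orthonormal and every feature coefficient in every $\bar{x}^{(i)}$ is nonnegative, the certificate amounts to exact coordinatewise equalities between nonnegative combinations; your anti-concentration ``window'' argument can at best make the realized $A_y$'s approximately match prescribed targets, and upgrading approximate matching to an exact nonnegative solution of the dual system is precisely a proof of non-separability, i.e.\ the thing to be proved. (Also, the anti-concentration property in Definition \ref{datadist} only says $\beta_{y,p}$ avoids small sets; it does not by itself give positive mass in a prescribed window — you would additionally have to choose the $\beta$-distribution, which you are free to do but never do.) So as written this is a strategy, not a proof, with the central construction missing.

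For contrast, the paper's argument is much more elementary and avoids duality entirely. It chooses $\delta_1=\delta_2-\delta_1$ (so the signal coefficients are forced and each class-$y$ point contributes exactly $C_P(v_{y,1}+v_{y,2})$ to $\bar{x}$), sets $\delta_3=\delta_4=k^{-1.5}$, and lets each point carry a single uniformly random noise class $s\neq y$. A Chernoff bound plus a union bound over the $k^2$ ordered (label, noise class) pairs shows that with probability $1-k^2\exp(\Theta(-N/k^2))$ every such pair is realized in $\X$. Then for any linear $h$: either some class $u$ fails to maximize $s\mapsto\inrprod{w_s}{v_{y,1}+v_{y,2}}$ for some $y$, in which case the sample with label $u$ and noise class $y$ is not won by class $u$, because the noise enters $\bar{x}$ with total weight $\Theta(\sqrt{k})$ against the $\Theta(1)$ signal; or all these correlations coincide across classes, in which case all outputs are tied on every point and the non-strict inequality $h^s(x)\geq h^y(x)$ holds trivially — this tie case is exactly why the statement is stated with $\geq$. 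If you want to keep your LP-duality framing, the minimal fix is to actually instantiate $\D$ and exhibit the certificate (or otherwise prove non-separability of the realized $\bar{x}$'s), which in practice pushes you back toward an explicit geometric construction of this kind.
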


\textbf{Proof Sketch.} The idea, as was originally pointed out by \citet{allenzhu2021understanding}, is that there are $\Theta(k^2)$ feature noise patches with coefficients of order $\Theta(k^{-1.5})$. Thus, because the features are orthogonal, these noise patches can influence the classification by an order $\Theta(\sqrt{k})$ term away from the direction of the true signal.

The full proof can be found in Section \ref{auxiliaryproofs} of the Appendix.

\subsection{Main Results}\label{theorems}
Having established the setting for our main results, we now concretely define the notion of feature learning in our context.

\begin{restatable}{definition}{learning}[Feature Learning]\label{flearn}
Let $(x, y) \sim \D$. We say that feature $v_{y, \ell}$ is \textit{learned} by $g$ if $\argmax_s g^s(x') = y$ where $x'$ is $x$ with all instances of feature $v_{y, 3-\ell}$ replaced by the all-zero vector.  
\end{restatable}

Our definition of feature learning corresponds to whether the model $g$ is able to correctly classify data points in the presence of only a single signal feature instead of both (generalizing the notion of weight-feature correlation to nonlinear models). By analyzing the gradient descent dynamics of $g$ for the empirical cross-entropy $J$, we can then show the following.

\begin{restatable}{theorem}{ermtheorem}\label{ermresult}
For $k$ and $N$ sufficiently large and the settings stated in Assumption \ref{as1}, we have that the following hold with probability at least $1 - O(1/k)$ after running gradient descent with a step size $\eta = O(1/\poly(k))$ for $O(\poly(k)/\eta)$ iterations on $J(g, \X)$ (for sufficiently large polynomials in $k$):
\begin{enumerate}
    \item (Training accuracy is perfect): For all $(x_i, y_i) \in \X$, we have $\argmax_s g_t^s(x_i) = y_i$.
    \item (Only one feature is learned): For $(1 - o(1))k$ classes, there exists exactly one feature that is learned in the sense of Definition \ref{flearn} by the model $g_t$.
\end{enumerate}
Furthermore, the above remains true for all $t = O(\poly(k))$ for any polynomial in $k$.
\end{restatable}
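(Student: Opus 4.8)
The plan is to track, through gradient descent, the scalar correlations $\inrprod{w_{y,r}^{(t)}}{v_{s,\ell}}$ for all $y,s\in[k]$, $r\in[m]$, $\ell\in[2]$, separating them into \emph{within-class} correlations $c^{(t)}_{y,r,\ell}:=\inrprod{w_{y,r}^{(t)}}{v_{y,\ell}}$ and \emph{cross-class (noise)} correlations $n^{(t)}_{y,r,s,\ell}:=\inrprod{w_{y,r}^{(t)}}{v_{s,\ell}}$ with $s\neq y$. Since every signal patch of a label-$y$ point is an exact scalar multiple of $v_{y,1}$ or of $v_{y,2}$, the feature orthogonality in Definition~\ref{datadist} makes these updates nearly decoupled: projecting $-\grad_{w_{y,r}}J(g_t,\X)$ onto $v_{y,1}$, the leading term is $\tfrac1N\sum_{i:\,y_i=y}\bigl(1-\SM^{y}(g_t(x_i))\bigr)\sum_{p\in P_{y,1}(x_i)}\relu'\!\bigl(\beta_{y,p}c^{(t)}_{y,r,1}\bigr)\beta_{y,p}$, and every other contribution (from the $\Theta(k^2)$ feature-noise patches and from points of other labels) carries an extra factor that is $k^{-\Omega(1)}$ under Assumption~\ref{as1} — the noise coefficients $\delta_3,\delta_4=\Theta(k^{-1.5})$ together with the scale $\rho=\Theta(1/k)$ of the polynomial piece of $\relu$ suppress a noise patch by $k^{-\Omega(1)}$ once raised to the power $\alpha=8$. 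The whole argument is then a single induction over $t$ that maintains a quantitative invariant on these scalars, from which both conclusions are read off.

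\textbf{Initialization, the lottery ticket, and Phase~1.}
Under $w_{y,r}^{(0)}\sim\N(0,\tfrac1d\Id)$, all correlations have magnitude $O(\sqrt{\log(kmd)/d})$ with probability $1-O(1/k)$, and for each class $y$ the two maxima $M_{y,\ell}:=\max_r c^{(0)}_{y,r,\ell}$ are i.i.d.\ and positive; relabel so $M_{y,1}\ge M_{y,2}$. A Gaussian anti-concentration bound on $M_{y,1}-M_{y,2}$ (difference of i.i.d.\ continuous variables with density $O(\sqrt{d\log k})$) gives that for a $(1-o(1))$ fraction of classes the relative gap $M_{y,1}/M_{y,2}-1$ exceeds a threshold $\tau$ fixed to match Phase~1; call these the \emph{good} classes, with winning feature $v_{y,1}$ and losing feature $v_{y,2}$, and note that the unavoidable $o(1)$ fraction of near-degenerate classes is exactly why the conclusion is stated for $(1-o(1))k$ classes. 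In Phase~1, while all logits are $O(\log k)$ the softmax weights are near-uniform ($1-\SM^{y_i}(g_t(x_i))=\Theta(1)$, $\SM^{s}(g_t(x_i))=O(1/k)$), and while a within-class correlation stays below $\rho$ one has $\relu'(\cdot)=(\cdot)^{\alpha-1}/\rho^{\alpha-1}$; so $c^{(t)}_{y,r,1}$ follows, up to the lower-order terms above, the discretization of the tensor-power ODE $\dot c=\Theta(\eta k^{\alpha-2})\,c^{\alpha-1}$, whose solution $c(t)^{-(\alpha-2)}=c(0)^{-(\alpha-2)}-\Theta(\eta k^{\alpha-2})t$ reaches $\rho$ at a time inversely monotone in $c(0)^{\alpha-2}$. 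One shows: (i) for every good class the winning correlations of the leading neurons cross $\rho$ in $O(\poly(k)/\eta)$ steps and then grow linearly (the activation is linear above $\rho$), driving $g_t^{y}(x_i)$ past $\log k$ with any prescribed $\Theta(\log k)$ margin (the linear part grows like $\log t$, so a large enough iteration count suffices); (ii) because $\alpha-2\ge 6$ and the losing correlations start a factor $1+\tau$ behind, at the moment the winner matures the losing correlations are still $c(0)\cdot\poly(1/\tau)^{1/(\alpha-2)}=k^{-\Omega(1)}$, a further $\poly(1/k)$ factor below $\rho$; and (iii) once the label-$y$ training points are correctly classified, $1-\SM^{y}(g_t(x_i))$ decays, shutting off the drift in \emph{both} feature directions. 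An easy part of the same induction shows the noise correlations $n^{(t)}_{y,r,s,\ell}$ never increase in magnitude — their only drift comes multiplied by $-\SM^{y}(g_t(x_i))<0$ on label-$s$ points — so they remain at the $O(\sqrt{(\log k)/d})$ initialization scale.

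\textbf{Phase~2 and the two conclusions.}
After the training points are classified, all remaining motion is a convergent series (the surviving $1-\SM^{y}(g_t(x_i))$ factors decay like $1/t$), so the losing and noise correlations acquire only an $o(1)$ relative further change and stay $k^{-\Omega(1)}$ below $\rho$ for all $t=O(\poly(k))$, while the winning correlations keep increasing; this gives conclusion~1 and its persistence, since on every training point $g_t^{y_i}(x_i)=\Omega(\log k)$ is non-decreasing while $g_t^{s}(x_i)=k^{-\Omega(1)}$ for $s\neq y_i$ (the only $v_s$-aligned patches available to it are noise patches with $\Theta(k^{-1.5})$ coefficients). For conclusion~2, fix a good class $y$ and the point $(x,y)\sim\D$ of Definition~\ref{flearn}. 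Zeroing the losing feature leaves the winning signal patches of $x$ intact, so $g^{y}(x')=\Theta(\log k)$ while every $g^{s}(x')$, $s\neq y$, is at most the $\Theta(k^2)$ noise patches of coefficient $\Theta(k^{-1.5})$ passed through $\relu$, which is $k^{-\Omega(1)}$; hence the winning feature is learned. Zeroing the winning feature instead, $g^{y}(x'')$ is built only from losing correlations that are a $\poly(1/k)$ factor below $\rho$, hence $k^{-\Omega(1)}$ times \emph{smaller} than $g^{s}(x'')$ for any of the $Q=\Theta(1)$ noise classes $s$ of $x$, whose own mature winning correlations feed through those same noise patches; so some $s\neq y$ has $g^{s}(x'')\ge g^{y}(x'')$ and the losing feature is \emph{not} learned. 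Thus exactly one feature is learned for every good class, i.e.\ for $(1-o(1))k$ classes.

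\textbf{Main obstacle.}
The hard part is Phase~1: converting the clean ODE picture into a rigorous discrete induction that (a) provably keeps the losing correlations a $\poly(1/k)$ factor below $\rho$ until the gradients shut off — this is exactly where the polynomial degree $\alpha=8>2$ and the precise form of the initialization gap are used, as a degree-$\le 2$ activation would let the loser catch up — and (b) controls the accumulated lower-order error terms (noise patches, cross-class interference, discretization) over the full $\poly(k)/\eta$ horizon so their sum never overwhelms the tiny losing/noise correlations. Calibrating the anti-concentration threshold $\tau$ in the lottery-ticket step to the slack this comparison tolerates — so that only an $o(1)$ fraction of classes is lost — is the other delicate bookkeeping point.
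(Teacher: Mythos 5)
Your overall strategy is the same as the paper's: a Part-I-style analysis (near-uniform softmax in the early phase, $\delta_4^{\alpha}/\rho^{\alpha-1}$ suppression of feature-noise and cross-class terms, saturation of $1-\SM^{y}$ once outputs reach $\Theta(\log k)$) to get perfect training accuracy, and a Part-II-style ``lottery ticket plus tensor-power amplification'' argument — the paper invokes Lemma \ref{tensorlemma} (Lemma C.19 of Allen-Zhu and Li) for exactly your ODE comparison — to keep the losing correlation within a polylog factor of initialization for all $t=O(\poly(k))$, followed by the same test-point comparison showing the losing feature is not learned. The genuine gap is in your lottery-ticket step. You compare the raw maxima $M_{y,1}$ and $M_{y,2}$ and ask that the relative gap exceed a threshold $\tau$ for a $(1-o(1))$ fraction of classes. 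Both maxima concentrate at $\Theta(\sqrt{\log m}/\sqrt{d})$ with fluctuations of order $1/\sqrt{d\log m}$, so a constant relative gap occurs only with $o(1)$ probability; to have $(1-o(1))k$ good classes you are forced to take $\tau=O(1/\polylog k)$ (the paper uses $\Theta(1/\log^2 k)$, holding per class with probability $1-O(1/\log k)$). But with $\tau=o(1)$ your step (ii) fails as stated, because the two features do not share the same growth constant: the gradient along $v_{y,\ell}$ carries the data-dependent factor $S_{y,\ell}=\frac1N\sum_{i\in N_y}\sum_{p\in P_{y,\ell}(x_i)}\beta_{i,p}^{\alpha}$, and $S_{y,1}/S_{y,2}$ is a generic $\Theta(1)$ quantity, not $1+o(1)$. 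A $\Theta(1)$ mismatch in the constant of $\dot c=\eta C\,c^{\alpha-1}$ shifts the blow-up time by a constant factor and swamps a $1+o(1)$ initialization gap, so the ``loser'' by raw maximum can mature first or catch up. The paper's Claim \ref{ermsep} resolves this by applying anti-concentration to the weighted comparison of $\Lambda_{y,1}^{(0)}$ against $(S_{y,2}/S_{y,1})^{1/(\alpha-2)}\Lambda_{y,2}^{(0)}$, and Lemma \ref{tensorlemma} is stated with exactly this weighted gap; your argument needs the same recalibration of what ``winner'' means before the rest of your Phase 1 goes through.

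Two smaller inaccuracies are worth fixing but not fatal. First, the cross-class correlations $\inrprod{w_{y,r}^{(t)}}{v_{s,\ell}}$ do receive a \emph{positive} drift, from the feature-noise patches of label-$y$ points (the term with coefficient $1-\SM^{y}>0$), so ``their only drift comes multiplied by $-\SM^{y}$'' is false; they stay small because that term carries the factor $P\delta_4^{\alpha}/\rho^{\alpha-1}$, which is precisely what the paper's Claim \ref{offdiag} bounds. Second, in your Phase 2 the justification ``$1-\SM^{y}$ decays like $1/t$, so the remaining motion is a convergent series'' is not right — a harmonic-type sum diverges logarithmically; what actually controls the dormant correlations over every $O(\poly(k))$ horizon is the level argument of Claim \ref{ermbound}: the time for $g_t^{y}$ to advance from $a\log k$ to $(a+1)\log k$ grows by a factor of $k$ per level, so only $O(1)$ levels fit in any polynomial horizon and the accumulated drift on correlations still in the polynomial regime is a polylog factor at most. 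With the $S$-weighted winner selection added and these two repairs, your outline matches the paper's proof.
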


\textbf{Proof Sketch.} The proof is in spirit very similar to Theorem 1 in \cite{allenzhu2021understanding}, and relies on many of the tools therein. The main idea is that, with high probability, there exists a separation between the class $y$ weight correlations with the features $v_{y, 1}$ and $v_{y, 2}$ at initialization. This separation is then amplified throughout training due to the polynomial part of $\relu$. Once one feature correlation becomes large enough, the gradient updates to the class $y$ weights rapidly decrease, leading to the remaining feature not being learned.

Theorem \ref{ermresult} shows that only one feature is learned (in our sense) for the vast majority of classes. As mentioned, our proof is quite similar to \cite{allenzhu2021understanding}, but due to simplifications in our setting (no added Gaussian noise for example) and some different ideas the proof is much shorter - we hope this makes some of the machinery from \cite{allenzhu2021understanding} accessible to a wider audience.

The reason we prove Theorem \ref{ermresult} is in fact to highlight the contrast provided by the analogous result for Midpoint Mixup.

\begin{restatable}{theorem}{mixuptheorem}\label{mixupresult}
For $k$ and $N$ sufficiently large and the settings stated in Assumptions \ref{as1}, we have that the following hold with probability $1 - O(1/k)$ after running gradient descent with a step size $\eta = O(1/\poly(k))$ for $O(\poly(k)/\eta)$ iterations on $J_{MM}(g, \X)$ (for sufficiently large polynomials in $k$):
\begin{enumerate}
    \item (Training accuracy is perfect): For all $(x_i, y_i) \in \X$, we have $\argmax_s g^s(x_i) = y_i$.
    \item (Both features are learned): For each class $y \in [k]$, both $v_{y, 1}$ and $v_{y, 2}$ are learned in the sense of Definition \ref{flearn} by the model $g$.
\end{enumerate}
Furthermore, the above remains true for all $t = O(\poly(k))$ for any polynomial in $k$.
\end{restatable}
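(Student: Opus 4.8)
The plan is to mirror the structure of the proof of Theorem~\ref{ermresult} — an induction over the gradient-descent iterations that tracks the within-class feature correlations $\inrprod{w_{y,r}^{(t)}}{v_{y,1}}$, $\inrprod{w_{y,r}^{(t)}}{v_{y,2}}$ and the feature-noise correlations $\inrprod{w_{y,r}^{(t)}}{v_{s,\ell'}}$ for $s \neq y$ — but carrying one extra invariant that encodes the Midpoint Mixup alignment pressure: that the within-class imbalance $\Delta_y$ (defined, as in Section~\ref{linearcase}, through $\inrprod{w_{y,r}^{(t)}}{v_{y,1}-v_{y,2}}$) stays $O(\log k)$. First I would record the initialization facts, which are identical to those used for Theorem~\ref{ermresult}: under Xavier initialization each correlation with a fixed unit vector is $\Gauss(0,1/d)$, so all feature and cross-class correlations start at magnitude $\tilde{O}(1/\sqrt d)$ with the two within-class feature correlations of the same order, and the pre-activations $\inrprod{w_{y,r}^{(0)}}{x^{(p)}}$ at feature-noise patches are of order $\tilde{O}(k^{-1.5}/\sqrt d)$, well inside the polynomial regime of $\relu$.

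Next comes a two-phase analysis of the within-class correlations. In the first phase both correlations are small, the cross-entropy factors are $\Theta(1)$, and the polynomial part of $\relu$ drives the same tensor-power-iteration growth $\dot z_\ell \propto z_\ell^{\alpha-1}$ for both features as in the ERM analysis. The essential difference is in the second phase. Under ERM, once the leading feature for class $y$ reaches $\Omega(1)$ the class-$y$ data are fit, the loss factors on them collapse, and the lagging correlation freezes below the learning threshold — this is what Theorem~\ref{ermresult} and the upper bound of Proposition~\ref{linearerm} exploit. Under Midpoint Mixup this does not happen: by the mirrored signal-patch construction of Definition~\ref{datadist}, $g^{y}(z_{i,j})$ depends on the aggregate coefficient $\sum_p\beta_{y,p}$ of the class-$y$ point in the mix whenever the two within-class correlations differ, so the alignment identity $g^{y_i}(z_{i,j}) = g^{y_j}(z_{i,j})$ of Lemma~\ref{mixalign} is violated on a constant fraction of midpoints (here the anti-concentration of the $\beta_{y,p}$ from Definition~\ref{datadist} is used), keeping those loss terms active. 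I would then establish, as the nonlinear analogue of Proposition~\ref{linearmixupcase}, that the net effect of these active terms is a strictly positive drift on the lagging correlation $\inrprod{w_{y,r}}{v_{y,2}}$, bounded below by a $\poly(1/k)$ quantity that — unlike under ERM — does not decay as the leading correlation grows; the driving asymmetry is that under-prediction of class $y$ happens precisely on mixes whose $v_{y,2}$-coefficient $C_P\delta_2 - \sum_p\beta_{y,p}$ is large, so the correction routes strongly into $v_{y,2}$, while over-prediction happens on mixes whose $v_{y,2}$-coefficient is small. Summed over $O(\poly(k)/\eta)$ iterations this carries $\inrprod{w_{y,r}}{v_{y,2}}$ up to $\Omega(1)$ for enough neurons and every class, equivalently preserving $\Delta_y = O(\log k)$; Assumption~\ref{as2} supplies the monotone control on $\sum_p\beta_{y,p}^{\alpha}$ in terms of $\sum_p\beta_{y,p}$ needed when $C_P > 1$, the case $C_P = 1$ being automatic.

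Once both $\inrprod{w_{y,r}}{v_{y,1}}$ and $\inrprod{w_{y,r}}{v_{y,2}}$ are $\Omega(1)$ for enough neurons and every class, the two conclusions follow from the same logit comparisons as in Theorem~\ref{ermresult}. For a class-$y$ point $x'$ with one feature type zeroed out as in Definition~\ref{flearn}, the $C_P$ remaining signal patches contribute at least $\Omega(k)$ to $g^{y}(x')$, while the feature-noise patches contribute only $o(1)$ to $g^{s}(x')$ for $s\neq y$, since the coefficients $\delta_3,\delta_4 = \Theta(k^{-1.5})$ keep those pre-activations in the polynomial regime of $\relu$ and with $\alpha=8$, $d = \Omega(k^{20})$ the suppression is ample; hence $\argmax_s g^{s}(x') = y$ and both features are learned, and the analogous comparison with both features present gives perfect training accuracy. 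Persistence for all $t = O(\poly(k))$ follows from the usual argument: once all data and midpoints are approximately fit the loss is $o(1)$, all gradient increments are $\poly(1/k)$-small, and the inductive invariants — bounded within-class correlations, suppressed noise correlations, $\Delta_y = O(\log k)$ — are maintained.

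I expect the main obstacle to be the second-phase drift estimate: proving the $\poly(1/k)$ lower bound on the alignment-induced push to the lagging correlation \emph{uniformly over the iterations and over all classes at once}, which requires carrying the approximate cross-class orthogonality and within-class nonnegativity hypotheses of Proposition~\ref{linearmixupcase} through the full nonlinear dynamics, quantifying via the anti-concentration property what fraction of midpoints actively push the lagging feature, ruling out that the alignment pressure is instead absorbed by decreasing the leading correlation or amplifying a feature-noise direction, and propagating Assumption~\ref{as2} through the $C_P>1$ bookkeeping; bounding the error from mixes in which class $y$ itself appears among the feature-noise classes of the other point is a secondary point.
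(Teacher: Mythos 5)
Your outline does track the paper's strategy at a high level — a first phase in which the leading correlation grows out of the polynomial part of $\relu$, a second phase in which a symmetrization/monotonicity argument (the nonlinear analogue of Proposition \ref{linearmixupcase}) gives the lagging feature a persistent $\poly(1/k)$ drift, Assumption \ref{as2} to pass from $\sum_p \beta_{y,p}$ to $\sum_p \beta_{y,p}^{\alpha}$ when $C_P>1$, and an $O(\log k)$ control on the logits throughout. But the pieces you defer or dispatch "as in ERM" are precisely where the paper's work lies, and two of your shortcuts fail. First, the persistence argument is wrong for $J_{MM}$: the Midpoint Mixup loss is not $o(1)$ at the aligned solution, and on a cross-class midpoint the gradient weight $1-2\SM^{y}(g_t(z_{i,j}))$ stays $\Theta(1)$ whenever the two relevant logits differ by $\Theta(1)$, no matter how large both logits are — there is no ERM-style collapse of the loss factors once points are "fit." The paper instead proves that the gradient correlations never turn negative under small steps (Claim \ref{posgrad}, Corollary \ref{posgradcor}) and then runs a global sum-over-classes argument exploiting the antisymmetry $A_1(\beta_s,\beta_y)=-A_1(\beta_y,\beta_s)+o(\cdot)$ together with a covariance bound to conclude $\max_y C_{y,1}^{(t)}=O(\log k)$ by contradiction (Claim \ref{mixlogk}); your invariant $\Delta_y=O(\log k)$ cannot simply be carried by induction without such an argument, and "ruling out that the pressure is absorbed by decreasing the leading correlation" is exactly what Corollary \ref{posgradcor} is for — you name it as an obstacle but give no mechanism.

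Second, Phase 1 is not the same tensor-power iteration as in Theorem \ref{ermresult}: mixing a class-$y$ point with a class-$s$ point places $v_{s,\ell}$ with a \emph{constant} coefficient into half of the mixed input, so the off-diagonal correlations $\inrprod{w_{y,r}}{v_{s,\ell}}$ receive signal-sized (not $\delta_4^{\alpha}$-sized) gradient contributions. The paper can only show they lag the diagonal correlations by a factor $1/k$ — because only a $\Theta(1/k^2)$ fraction of pairs mixes the two specific classes (Claim \ref{mixoffdiag}) — and must additionally show they stay positive so that $\relu'$ does not vanish on mixed points (Corollary \ref{mixoffdiag2}, which also forces the softmax control to be stated in terms of the gap between the two mixed-class logits, Claim \ref{mixsmax}, rather than the own-class logit alone). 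Your invariant of "suppressed noise correlations" at the ERM scale is unattainable here. Finally, the central second-phase estimate that you flag as the main obstacle is genuinely the crux: in the paper it requires the measure-$0.01$ condition relating $\Delta_y^{(t)}$, $\Delta_s^{(t)}$ and $D_{y,s}^{(t)}$ with a case analysis when it fails (Claims \ref{linexpgap} and \ref{y2gradlb}), the covariance argument through Assumption \ref{as2}, and the standing nonnegativity hypothesis supplied by Corollary \ref{posgradcor}; without these your $\poly(1/k)$ drift is an assertion, not a proof. (A minor internal inconsistency: the claimed $\Omega(k)$ signal on a feature-zeroed point contradicts your own $O(\log k)$ logit invariant; what one actually gets, and needs, is $\Omega(\log k)$ against $O(\log k/k)$ for the other classes.)
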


\textbf{Proof Sketch.} The core idea of the proof relies on similar techniques to that of Proposition \ref{linearmixupcase}, but the nonlinear part of the $\relu$ activation introduces a few additional difficulties due to the fact that the gradients in the nonlinear par are much smaller than those in the linear part of $\relu$. Nevertheless, we show that even these smaller gradients are sufficient for the feature correlation that is lagging behind to catch up in polynomial time.

The full proofs of Theorems \ref{ermresult} and \ref{mixupresult} can be found in Section \ref{mainresultproofs} of the Appendix.
\begin{remark}
Theorems \ref{ermresult} and \ref{mixupresult} show a separation between ERM and Midpoint Mixup with respect to feature learning, as we have defined. They are \textit{not} results regarding the test accuracy of the trained models on the distribution $\D$; even learning only a single feature per class is sufficient for perfect test accuracy on $\D$. The significance (and our desired interpretation) of these results is that, when the training distribution $\D$ has some additional spurious features when compared to the testing distribution, ERM can potentially fail to learn the true signal features whereas Midpoint Mixup will likely learn all features (including the true signal). One may also interpret the results as generalization that is robust to distributional shift; the test distribution in this case has dropped some features present in the training distribution.
\end{remark}

\section{Experiments}\label{experiments}
The goal of the results of Sections \ref{linearcase} and \ref{mainresults} was to provide theory (from a feature learning and optimization perspective) for why Mixup has enjoyed success over ERM in many practical settings. The intuition is that, for image classification tasks, one could reasonably expect images from the same class to be generated from a shared set of latent features (much like our data distribution in Definition \ref{datadist}), in which case it may be possible to achieve perfect training accuracy by learning a strict subset of these features when doing empirical risk minimization. On the other hand, based on our ideas, we would expect Mixup to learn all such latent features associated with each class (assuming some dependency between them), and thus potentially generalize better.

A direct empirical verification of this phenomenon on image datasets is tricky (and a possible avenue for future work) due to the fact that one would need to clearly define a notion of latent features with respect to the images being considered, which is outside the scope of this work. Instead, we take for granted that such features \textit{exist}, and attempt to verify whether Mixup is able to learn the ``true'' features associated with each class better than ERM when spurious features are added.

For our experimental setup, we consider training ResNet-18 \citep{resnet} on versions of Fashion MNIST (FMNIST) \citep{fmnist}, CIFAR-10, and CIFAR-100 \citep{cifar} in which every training data point is transformed such that a randomly sampled training point from \textit{a different} (but randomly fixed) class is concatenated (along the channels dimension) to the original point (visualized in Figure \ref{imagemod}). Additionally, to introduce a dependency structure akin to what we have in Definitions \ref{lineardata} and \ref{datadist}, we sample a $\gamma \sim \Uni([0, 1])$ and scale the first part of the training point (the true image) by $\gamma$ while scaling the concatenated part by $1 - \gamma$ during training.

\begin{figure}[h]
    \centering
    \includegraphics[width=\columnwidth]{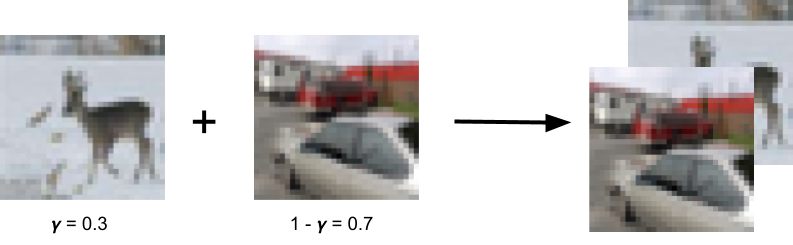}
    \caption{Visualization of data modification in CIFAR-10.}
    \label{imagemod}
\end{figure}

If we work under the intuition that images from each class are generated by relatively different latent features, then this modification process corresponds to adding patches of (fixed) spurious features to each class that have a dependency (from the scaling factor $\gamma$) on the original features of the data. We leave the test data for each dataset \textbf{unmodified}, except for the concatenation of an all-zeros vector of the same shape to each point so that the shape of the test data matches that of the training data (in effect, this penalizes models that learned only the spurious features we concatenated in the training data). This zeroing out of the additional channels is also intended to replicate Definition \ref{flearn} in our experimental setup. 

While we consider the above setup to be intuitive and resemble our theoretical setting, it is fair to ask why we chose this setup compared to the many possible alternatives. Firstly, we found that using synthetic spurious features (i.e. random orthogonal vectors scaled to have the same norm as the images) as opposed to images from different classes was far too noisy (training error went to 0 immediately); the test errors on each dataset degraded to near-random levels, so it was difficult to make comparisons. Additionally, we found the same to be true if we considered adding spurious features as opposed to concatenating them.

\begin{figure*}[htp]
\centering
    \subfigure[FMNIST]{\includegraphics[scale=0.21]{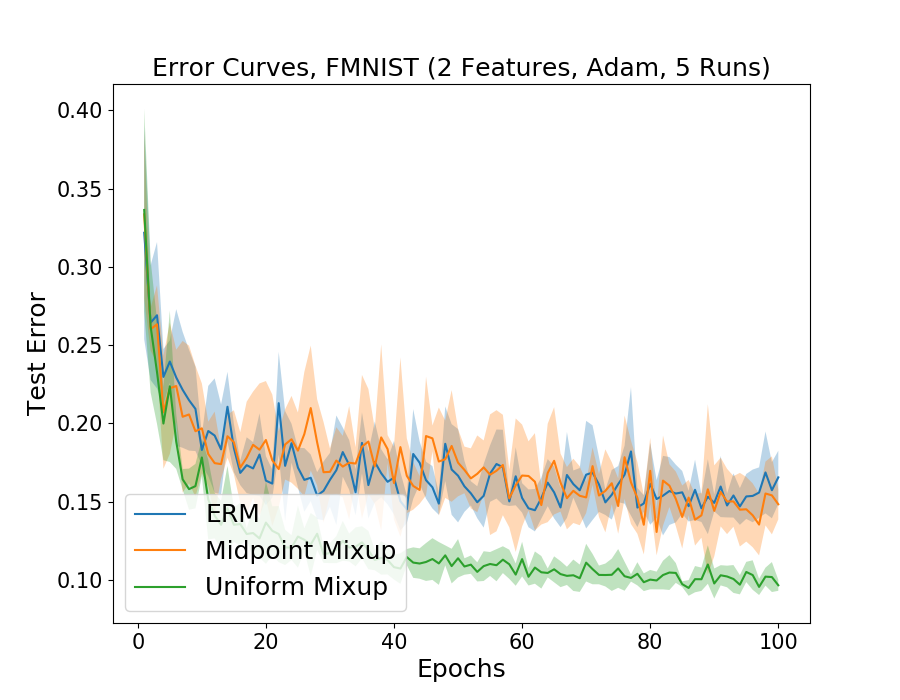}} \quad
    \subfigure[CIFAR-10]{\includegraphics[scale=0.21]{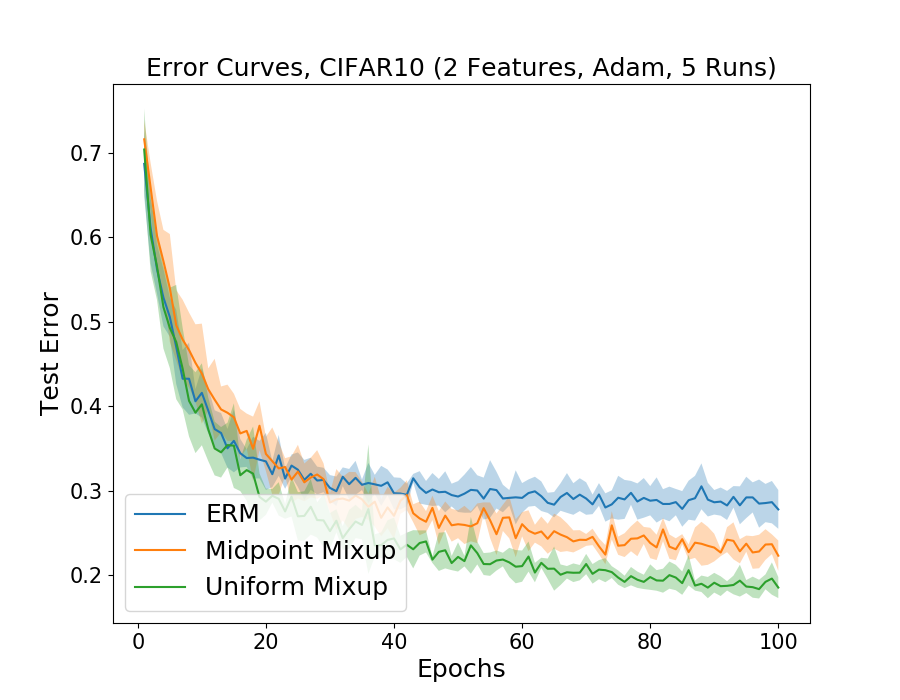}} \quad
    \subfigure[CIFAR-100]{\includegraphics[scale=0.21]{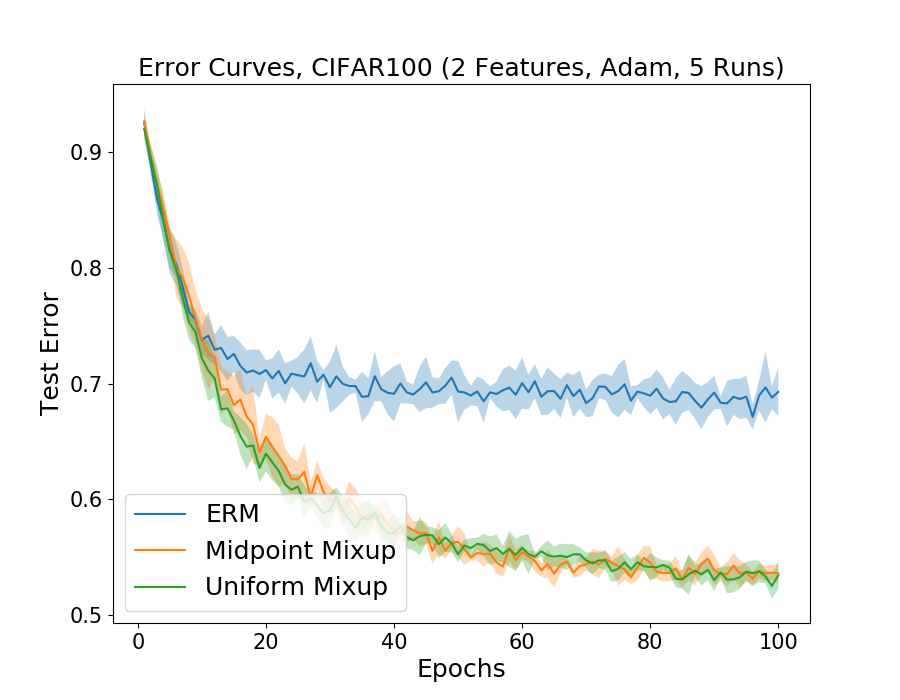}}
\caption{Test error comparison between Uniform Mixup (green), Midpoint Mixup (orange), and ERM (blue). Each curve represents the average of 5 model runs (over the randomness of the data augmentations and model initializations), while the surrounding area represents 1 standard deviation.}
\label{expplots}
\end{figure*}

For each of our image classification tasks, we train models using Mixup with $\D_{\lambda} = \BetaD(1, 1)$ (the choice used in \citet{zhang2018mixup} for CIFAR, which we refer to as Uniform Mixup), Midpoint Mixup, and ERM. Our implementation is in PyTorch \citep{pytorch} and uses the ResNet implementation of Kuang Liu, released under an MIT license. All models were trained for 100 epochs with a batch size of 750, which was the largest feasible size on our compute setup of a single P100 GPU (we use a large batch size to approximate the full batch gradient descent aspect of our theory). For optimization, we use Adam \citep{adam} with the default hyperparameters of $\beta_1 = 0.9, \beta_2 = 0.999$ and a learning rate of $0.001$. We did a modest amount of hyperparameter tuning in preliminary experiments, where we compared Adam and SGD with different log-spaced learning rates in the range $[0.001, 0.1]$, and found that Adam with the default hyperparameters almost always worked the best. We report our results for each dataset in Table \ref{expresults}, and accompanying test error plots are shown in Figure \ref{expplots}. Code for our experiments is available at: \url{https://github.com/2014mchidamb/midpoint-mixup-multi-view-icml}.

\begin{table}[h]
\centering
\caption{Final test errors on \textit{unmodified test data} (mean over 5 runs) along with 1 standard deviation range for Uniform Mixup, Midpoint Mixup, and ERM. \vspace{0.1in}}
\label{expresults}
\begin{tabular}{ |c|c|c|c| } 
 \hline
 Model & FMNIST & CIFAR-10 & CIFAR-100 \\
 \hline
 Uniform Mixup & \textbf{9.66} & \textbf{18.52} $\pm 1$ & \textbf{53.42} $\pm 1$ \\ 
 \hline
 Midpoint Mixup & 14.84 $\pm 1$ & 22.29 $\pm 2$ & 53.61 $\pm 2$ \\ 
 \hline
 ERM & 16.55 $\pm 2$ & 27.77 $\pm 2$ & 69.28 $\pm 2$ \\ 
 \hline
\end{tabular}
\end{table}

From Table \ref{expresults} we see that Uniform Mixup performs the best in all cases, and that Midpoint Mixup tracks the performance of Uniform Mixup reasonably closely. We stress that the ordering of model performance is unsurprising; a truly fair comparison with Midpoint Mixup would require training on all $N^2$ possible mixed points, which is infeasible in our compute setup (we opt to randomly mix points per batch, as is standard). Our experiments are intended to show that Midpoint Mixup still non-trivially captures the benefits of Mixup in an empirical setting that is far from the asymptotic regime of our theory, while Mixup using standard hyperparameter settings significantly outperforms ERM in the presence of spurious features. A final observation worth making is that we find Midpoint Mixup performs significantly better than ERM when moving from the 10-class settings of FMNIST and CIFAR-10 to the 100-class setting of CIFAR-100, and this is in line with what our theory predicts (a larger number of classes more closely approximates our setting).

\section{Conclusion}\label{conclusion}
To summarize, the main contributions of our work have been theoretical motivation for an extreme case of Mixup training (Midpoint Mixup), as well as an optimization analysis separating the learning dynamics of a 2-layer convolutional network trained using Midpoint Mixup and empirical risk minimization. Our results show that, for a class of data distributions satisfying the property that there are multiple, dependent features correlated with each class in the data, Midpoint Mixup can outperform ERM (both theoretically and empirically) in learning these features.

In proving these results, we have also shown along the way that Midpoint Mixup can be a useful theoretical lens in the study of Mixup, and exhibits several nice properties that may be of independent interest. Our results also raise a number of questions for future work. Namely, our work has only scratched the surface of exploring empirical properties of Midpoint Mixup, and there is much to be done in trying to understand how Midpoint Mixup is able to train so quickly in non-asymptotic settings. Additionally, it is natural to consider how our techniques for analyzing Midpoint Mixup can be extended to optimization analyses for Mixup using more general distributions, or even modified versions of Mixup.

Finally, we observe that, due to the mostly theoretical nature of this work, we do not anticipate any direct misuses of our findings or negative broader impacts.

\section*{Acknowledgements}
During this project, Rong Ge, Muthu Chidambaram, Xiang Wang, and Chenwei Wu were supported in part by NSF Award DMS-2031849, CCF-1704656, CCF-1845171 (CAREER), CCF-1934964 (Tripods), a Sloan Research Fellowship, and a Google Faculty Research Award.

\bibliography{icml2023}
\bibliographystyle{icml2023}
\newpage 

\appendix
\onecolumn

\section{Supporting Lemmas and Calculations}\label{supporting}

In this section we collect several technical lemmas and computations that will be necessary for the proofs of our main results.

\subsection{Gaussian Concentration and Anti-concentration Results}
The following are well-known concentration results for Gaussian random variables; we include proofs for the convenience of the reader.

\begin{proposition}\label{prop1}
Let $X_i \sim \N(0, \sigma_i^2)$ with $i \in [m]$ and let $\sigma = \max_i \sigma_i$. Then,
\begin{align*}
    \E [\max_i X_i] \leq \sigma \sqrt{2 \log m}.
\end{align*}
\end{proposition}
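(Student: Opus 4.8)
The plan is to use the standard exponential-moment (Chernoff) method, exploiting the fact that we have control over the moment generating function of each Gaussian. The key observation is that $x \mapsto \exp(tx)$ is convex for every $t > 0$, so Jensen's inequality gives $\exp\!\big(t\, \E[\max_i X_i]\big) \leq \E\big[\exp(t \max_i X_i)\big]$.

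First I would bound $\exp(t \max_i X_i) = \max_i \exp(t X_i) \leq \sum_{i \in [m]} \exp(t X_i)$. Taking expectations and using the Gaussian moment generating function $\E[\exp(t X_i)] = \exp(t^2 \sigma_i^2/2) \leq \exp(t^2 \sigma^2/2)$ then yields
\[
    \exp\!\big(t\, \E[\max_i X_i]\big) \leq m \exp\!\big(t^2 \sigma^2 / 2\big).
\]
Next I would take logarithms and divide by $t > 0$ to obtain $\E[\max_i X_i] \leq \frac{\log m}{t} + \frac{t\sigma^2}{2}$, valid for every $t > 0$. Optimizing the right-hand side over $t$ — setting $t = \sqrt{2\log m}/\sigma$ — gives exactly $\sigma\sqrt{2\log m}$, which is the claimed bound.

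The only mild subtlety is that the choice of optimal $t$ requires $\log m \geq 0$, i.e. $m \geq 1$; the edge case $m = 1$ is anyway immediate since $\E[X_1] = 0 = \sigma\sqrt{2\log 1}$. There is no genuine obstacle here: each step is a textbook inequality (Jensen, union bound over the max, the Gaussian MGF), and the only thing to be careful about is keeping $t$ as a free parameter to be optimized at the end rather than committing to a value prematurely.
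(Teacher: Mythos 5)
Your proposal is correct and follows essentially the same route as the paper's proof: Jensen's inequality on $\exp(t\,\E[\max_i X_i])$, bounding the maximum of exponentials by their sum, the Gaussian MGF, and optimizing over $t$ with $t = \sqrt{2\log m}/\sigma$. The remark about the $m=1$ edge case is a harmless addition; nothing further is needed.
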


\begin{proof}
Let $Z = \max_i X_i$. Then by Jensen's inequality and the MGF of $\N(0, \sigma_i^2)$, we have:
\begin{align*}
    \exp \big(t \E[Z]\big) &\leq \E \exp (tZ) = \E [\exp(t\max_i X_i)] \\
    &\leq \E \big[\sum_i \exp(t X_i)\big] = \sum_i \exp(t^2 \sigma_i^2/2) \\
    &\leq m \exp(t^2 \sigma^2/2) \\
    \implies \E [Z] &\leq \frac{\log m}{t} + \frac{t \sigma^2}{2}.
\end{align*}
Minimizing the RHS yields $t = \sqrt{2 \log m}/\sigma$, from which the result follows.
\end{proof}

\begin{proposition}\label{prop2}
Let $X_i$ be as in Proposition \ref{prop1}. Then,
\begin{align*}
    P(\max_i X_i \geq t + \sigma \sqrt{2\log m}) \leq \exp(-t^2/(2\sigma^2)).
\end{align*}
\end{proposition}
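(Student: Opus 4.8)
The plan is to reproduce the strategy of Proposition \ref{prop1}: control the moment generating function of $Z = \max_i X_i$ by a union-type bound, apply the exponential Markov inequality, and then pick the free parameter so that the stray $\log m$ is exactly absorbed. Concretely, for any $\lambda \geq 0$ and any threshold $u$,
\[
P(Z \geq u) = P\big(e^{\lambda Z} \geq e^{\lambda u}\big) \leq e^{-\lambda u}\,\E\big[e^{\lambda Z}\big] \leq e^{-\lambda u}\sum_{i} \E\big[e^{\lambda X_i}\big] = e^{-\lambda u}\sum_i e^{\lambda^2 \sigma_i^2/2} \leq m\, e^{-\lambda u + \lambda^2 \sigma^2/2},
\]
where we used $e^{\lambda Z} = \max_i e^{\lambda X_i} \leq \sum_i e^{\lambda X_i}$ (valid since $\lambda \geq 0$), the MGF of $\N(0,\sigma_i^2)$, and $\sigma_i \leq \sigma$.

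Next I would specialize to $u = t + \sigma\sqrt{2\log m}$ and take the minimizing choice $\lambda = u/\sigma^2 \geq 0$, which makes the exponent $\log m - u^2/(2\sigma^2)$. Expanding $u^2 = t^2 + 2t\sigma\sqrt{2\log m} + 2\sigma^2\log m$ shows the two $\log m$ contributions cancel, leaving
\[
P\big(Z \geq t + \sigma\sqrt{2\log m}\big) \leq \exp\!\left(-\frac{t^2}{2\sigma^2} - \frac{t\sqrt{2\log m}}{\sigma}\right) \leq \exp\!\left(-\frac{t^2}{2\sigma^2}\right),
\]
the final inequality holding because the discarded term is nonnegative for $t \geq 0$ (and the statement is trivial for $t < 0$ since then the event already contains $\{Z \ge 0\}$ type thresholds; in any case the standard convention takes $t \geq 0$).

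There is no real obstacle here — this is the textbook Gaussian-maximum tail bound. The only point demanding a bit of care is the bookkeeping in the exponent: choosing $\lambda$ precisely so the $\log m$ produced by the union step is annihilated by the cross term of $u^2$, and recording the $t \geq 0$ restriction. An alternative, essentially equivalent route is to note that $Z$ has a sub-Gaussian upper tail with proxy variance $\sigma^2$ and combine the sub-Gaussian deviation inequality with the bound $\E[Z] \leq \sigma\sqrt{2\log m}$ from Proposition \ref{prop1}; but the direct MGF computation above is self-contained and mirrors the preceding proof.
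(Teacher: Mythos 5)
Your proof is correct and is essentially the same argument as the paper's: the paper union-bounds $P(\max_i X_i \geq u)$ over $i$ and applies the per-coordinate Gaussian (Chernoff) tail bound, arriving at the identical intermediate quantity $m\exp(-u^2/(2\sigma^2))$ that your MGF computation with $\lambda = u/\sigma^2$ produces, and then does the same exponent expansion to cancel the $\log m$ and drop the nonnegative cross term (with the same implicit $t \geq 0$). The only cosmetic difference is that you perform the union bound inside the exponential-moment step (mirroring Proposition \ref{prop1}) rather than directly on the tail probabilities.
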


\begin{proof}
We simply union bound and use the fact that $P(X_i \geq t) \leq \exp(-t^2/(2\sigma^2))$ (Chernoff bound for zero mean Gaussians) to get:
\begin{align*}
    P(\max_i X_i \geq t + \sigma \sqrt{2\log m}) &\leq \sum_i P(X_i \geq t + \sigma \sqrt{2\log m}) \\
    &\leq m\exp(-(t + \sigma \sqrt{2\log m})^2/(2\sigma^2)) \\
    &\leq \exp(-t^2/(2\sigma^2)).
\end{align*}
\end{proof}

\begin{proposition}\label{prop3}
Let $X_1, X_2, ..., X_m$ be i.i.d. Gaussian variables with mean 0 and variance $\sigma^2$. Then we have that:
\begin{align*}
    P\left(\max_i X_i > \Theta\left(\sigma \sqrt{\log(m/\log(1/\delta))}\right)\right) = 1 - \Theta(\delta).
\end{align*}
\end{proposition}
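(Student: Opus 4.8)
The plan is to exploit independence to turn the maximum into a product and then invert a sharp Gaussian tail bound. Writing $\Phi$ for the standard normal CDF and $q := P(X_1 > t) = 1 - \Phi(t/\sigma)$, independence of the $X_i$ gives $P(\max_i X_i \le t) = (1 - q)^m$, so the whole question reduces to choosing $t$ so that $q$ has the right order and then evaluating $(1-q)^m$.

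First I would invoke the classical two-sided tail estimate $\frac{1}{\sqrt{2\pi}}\frac{u}{1+u^2}\exp(-u^2/2) \le 1 - \Phi(u) \le \frac{1}{\sqrt{2\pi}}\frac{1}{u}\exp(-u^2/2)$ for $u > 0$; for $u \ge 1$ this says $1 - \Phi(u) = \exp(-\Theta(u^2))$ with absolute implied constants. Working in the regime where $m/\log(1/\delta)$ is large (otherwise the claimed threshold is $O(\sigma)$ and the statement is degenerate), I would pick $t$ so that $q = \log(1/\delta)/m$ exactly; such a $t$ exists by the intermediate value theorem since $\Phi$ is continuous and $\log(1/\delta)/m \le 1/2$. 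Inverting the tail estimate at $u = t/\sigma$ gives $\log(1/q) = \Theta\big((t/\sigma)^2\big)$, hence $t = \Theta\big(\sigma\sqrt{\log(1/q)}\big) = \Theta\big(\sigma\sqrt{\log(m/\log(1/\delta))}\big)$, which is the threshold in the statement.

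It then remains to evaluate $(1-q)^m$ with $mq = \log(1/\delta)$. The upper bound $(1-q)^m \le \exp(-mq) = \delta$ is immediate. For the matching lower bound I would use $\log(1-q) \ge -q - q^2$ for $q \in [0, 1/2]$, giving $(1-q)^m \ge \exp(-m(q + q^2)) = \delta \exp(-mq^2) = \delta \exp(-\log^2(1/\delta)/m) = \Theta(\delta)$, where the last equality uses $\log(1/\delta) = O(\sqrt m)$, which holds in the regimes of interest. Combining the two bounds, $P(\max_i X_i > t) = 1 - (1-q)^m = 1 - \Theta(\delta)$.

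The main obstacle is the bookkeeping around the constant multiplying $\delta$: to land on $\Theta(\delta)$ rather than $\delta^{c}$ for some $c \neq 1$, one cannot merely fix $mq$ up to a multiplicative constant but must pin it to $\log(1/\delta)$ to leading order, which is why I fix $q = \log(1/\delta)/m$ exactly and then separately control the second-order term $\exp(-mq^2)$ — this is also where the mild side condition relating $\delta$ and $m$ enters. Everything else is routine manipulation of Gaussian tail estimates together with the sandwich $\exp(-m(q+q^2)) \le (1-q)^m \le \exp(-mq)$.
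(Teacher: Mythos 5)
Your proposal is correct and follows essentially the same route as the paper's proof: both use the Mills-ratio tail asymptotic $P(X_1 > t) = \Theta\left(\frac{\sigma}{t}e^{-t^2/(2\sigma^2)}\right)$ together with the exact product formula $P(\max_i X_i \le t) = (1 - P(X_1 > t))^m$, and choose the threshold so that the per-variable tail probability is of order $\log(1/\delta)/m$. If anything you are more careful than the paper, which only asserts $P(X_1 > t) = \Theta(\log(1/\delta)/m)$ and stops; your step of pinning $q = \log(1/\delta)/m$ exactly and controlling the second-order term $\exp(-mq^2)$ (under the mild condition $\log(1/\delta) = O(\sqrt{m})$) is exactly what is needed to get $\Theta(\delta)$ rather than $\delta^{\Theta(1)}$.
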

\begin{proof}
We recall that:
\begin{align*}
    P(X_i > x) = \Theta\left(\frac{\sigma}{x} e^{-x^2/(2\sigma^2)}\right).
\end{align*}
A proof of this fact can be found in \cite{vershynin2018high}. We additionally have that:
\begin{align*}
    P(\max X_i > x) = 1 - (1 - P(X_i > x))^m.
\end{align*}
So from the previous asymptotic characterization of $P(X_i > x)$ we have that choosing $x = \Theta\left(\sigma \sqrt{\log(m/\log(1/\delta))}\right)$ gives $P(X_i > x) = \Theta(\log(1/\delta)/m)$, from which the result follows.
\end{proof}


We will also have need for a recent anti-concentration result due to \cite{chernozhukov2014comparison}, which we restate below.

\begin{proposition}[Theorem 3 (i) \cite{chernozhukov2014comparison}]\label{anticoncentration}
Let $X_i \sim \N(0, \sigma^2)$ for $i \in [m]$ with $\sigma^2 > 0$. Defining $a_m = \E[\max_i X_i/\sigma]$, we then have for every $\epsilon > 0$:
\begin{align*}
    \sup_{x \in \R} P\big(\abs{\max_i X_i - x} \leq \epsilon \big) \leq 4\epsilon(1 + a_m)/\sigma.
\end{align*}
\end{proposition}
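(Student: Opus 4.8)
The plan is to reduce the anti-concentration bound to a bound on the density of the maximum. Since $\sigma>0$ is a fixed scalar, replacing each $X_i$ by $X_i/\sigma$ turns $\max_i X_i$ into $\max_i X_i/\sigma$, rescales $\epsilon$ to $\epsilon/\sigma$, and leaves $a_m=\E[\max_i X_i/\sigma]$ unchanged; so it suffices to take $\sigma=1$ and show that $Z:=\max_{i\in[m]}X_i$ (with each $X_i\sim\N(0,1)$) has a density $f_Z$ with $\sup_{x\in\R}f_Z(x)\le 2(1+a_m)$. Integrating over any interval of length $2\epsilon$ then gives $\sup_x P(\abs{Z-x}\le\epsilon)\le 4\epsilon(1+a_m)$, and undoing the rescaling yields the proposition; note $a_m=\E[Z]\ge\E[X_1]=0$, so the bound is never vacuous.

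\textbf{Independent case} (the only one used elsewhere in this paper, where the proposition is applied to maxima of i.i.d.\ Gaussians). If the $X_i$ are independent then $f_Z(z)=m\,\Phi(z)^{m-1}\varphi(z)$ with $\Phi,\varphi=\Phi'$ the standard normal CDF and density. From $f_Z'(z)=m\,\varphi(z)\,\Phi(z)^{m-2}\big[(m-1)\varphi(z)-z\Phi(z)\big]$ the bracket is positive for $z\le 0$, so (for $m\ge 2$; $m=1$ is immediate as $Z\sim\N(0,1)$) $f_Z$ attains its maximum at some $z_0>0$ with $(m-1)\varphi(z_0)=z_0\Phi(z_0)$, whence $f_Z(z_0)=\tfrac{m}{m-1}z_0\,\Phi(z_0)^m\le 2z_0$. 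If $z_0\le 1$ this is $\le 2\le 2(1+a_m)$; if $z_0>1$, combining $(m-1)\varphi(z_0)=z_0\Phi(z_0)\ge z_0/2$ with $\varphi(z_0)=\tfrac1{\sqrt{2\pi}}e^{-z_0^2/2}$ gives $z_0\le\sqrt{2\log(2m)}$, which is $\le 1+a_m$ for all large $m$ because $a_m=\sqrt{2\log m}\,(1-o(1))$ (the finitely many remaining $m$ are checked directly).

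\textbf{General (dependent) case.} Now $f_Z$ need not be unimodal, so I would follow the smoothing argument of \citet{chernozhukov2014comparison}: approximate the max by the soft-max $Z_\beta:=\beta^{-1}\log\sum_{i\in[m]}e^{\beta X_i}$, which satisfies $Z\le Z_\beta\le Z+\beta^{-1}\log m$, has gradient equal to the softmax vector $\pi=\pi(X)$ (so $\pi_i\ge 0$, $\sum_i\pi_i=1$), and Hessian $\nabla^2 Z_\beta=\beta\big(\mathrm{diag}(\pi)-\pi\pi^\top\big)\succeq 0$ with trace $\le\beta$. Mollify $\Ind(\cdot\le x)$ into a smooth $\psi_x$ with $\psi_x'\ge 0$ concentrated near $x$, so that $\E[\psi_x'(Z_\beta)]$ approximates $f_{Z_\beta}(x)$, and apply Gaussian integration by parts $\E[X_i h(X)]=\sum_j\mathrm{Cov}(X_i,X_j)\,\E[\partial_j h(X)]$ to $h(X)=\psi_x(Z_\beta)$. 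Using $\sum_i\pi_i=1$ and $\mathrm{Cov}(X_i,X_i)=1$, the resulting identity rewrites $\E[\psi_x'(Z_\beta)]$ as a first-order part controlled by $\E[\abs{Z_\beta}]\le 1+\E[Z_\beta]$ plus Hessian terms carrying a compensating factor $1/\beta$; sending the mollifier width to $0$ extracts $\sup_x f_{Z_\beta}(x)\le 2(1+\E[Z_\beta])+O(1/\beta)$, and then $\beta\to\infty$ gives $Z_\beta\to Z$, $\E[Z_\beta]\to a_m$, hence $\sup_x f_Z(x)\le 2(1+a_m)$ (if $Z$ is degenerate — e.g.\ all $X_i$ almost surely equal — it is a single $\N(0,1)$ variable and the bound is trivial).

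\textbf{Main obstacle.} The hard part is the integration-by-parts bookkeeping in the dependent case: one must arrange the first- and second-order Stein terms so that the first-order part genuinely collapses to size $1+\E[Z_\beta]$ rather than the trivial $m\,\varphi(x)$ — which is precisely where the simplex identity $\sum_i\pi_i=1$ (together with $\mathrm{Cov}(X_i,X_i)=1$) is essential — while killing the second-order part, which a priori inherits the factor $\beta$ from the Hessian; this forces the mollifier width and $\beta$ to be driven to their limits in a coordinated order so that every error term vanishes. The scaling reduction, the one-dimensional calculus in the independent case, and passing from the density bound to the interval probability are all routine.
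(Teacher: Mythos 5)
The paper does not actually prove this statement: Proposition \ref{anticoncentration} is imported verbatim as Theorem 3(i) of \citet{chernozhukov2014comparison}, with no proof given, so there is no internal argument to compare yours against. Judged on its own terms, your proposal is sound exactly where the paper needs it and incomplete where it does not. The rescaling to $\sigma=1$, the passage from a density bound $\sup_x f_Z(x)\le 2(1+a_m)$ to the interval probability, and the independent-case calculus are all correct: $f_Z'(z)=m\varphi(z)\Phi(z)^{m-2}[(m-1)\varphi(z)-z\Phi(z)]$ does force the maximizer $z_0>0$ with $(m-1)\varphi(z_0)=z_0\Phi(z_0)$, giving $f_Z(z_0)\le \tfrac{m}{m-1}z_0\le 2z_0$ and then $z_0\le\sqrt{2\log(2m)}$. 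The only looseness there is the final comparison $\sqrt{2\log(2m)}\le 1+a_m$ "for all large $m$, finitely many checked directly": to make this rigorous you need an effective lower bound on $a_m$ (or an explicit threshold beyond which the asymptotics kick in) so that the finite check is actually finite and identified; this is fixable with standard quantitative bounds on $\E[\max_i X_i]$. Since the paper only ever applies the proposition (via Corollary \ref{cor2}, inside Claim \ref{ermsep}) to maxima of i.i.d.\ correlations $\inrprod{w_{y,r}^{(0)}}{v_{y,\ell}}$, your elementary independent-case argument would in fact suffice to support everything in the paper, which is a genuinely more self-contained route than citing the external theorem.

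The general (dependent) case, however, is where the real content of Theorem 3(i) lies, and your treatment of it is a sketch with an unresolved core. The soft-max/Stein plan you describe is the machinery \citet{chernozhukov2014comparison} use for their Gaussian \emph{comparison} bounds, not for the anti-concentration theorem, and the difficulty you flag is precisely the missing proof: the second-order terms produced by integrating by parts on $h=\psi_x(Z_\beta)$ scale like $\beta$ (trace of the Hessian) and like the inverse mollifier width, so the claimed intermediate bound $\sup_x f_{Z_\beta}(x)\le 2(1+\E[Z_\beta])+O(1/\beta)$ does not follow from the bookkeeping as stated, and "driving the limits in a coordinated order" is an assertion rather than an argument. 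The published proof instead works directly with the distribution of the maximum (a conditioning/shift argument yielding a density bound in terms of $\E[\max_i X_i/\sigma]$), so if you want the full statement you should either reproduce that argument or restrict the proposition to the i.i.d.\ case, which is all this paper uses.
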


\begin{corollary}\label{cor2}
Applying Proposition \ref{prop1}, we have $\sup_{x \in \R} P\big(\abs{\max_i X_i - x} \leq \epsilon \big) \leq 4\epsilon (1 + \sqrt{2\log m})/\sigma$.
\end{corollary}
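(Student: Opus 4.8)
The plan is to simply chain together the two results already in hand. Proposition~\ref{anticoncentration} gives, verbatim, the bound
\begin{align*}
    \sup_{x \in \R} P\big(\abs{\max_i X_i - x} \leq \epsilon\big) \leq \frac{4\epsilon(1 + a_m)}{\sigma}, \qquad a_m = \E\big[\tfrac{1}{\sigma}\max_i X_i\big],
\end{align*}
so the only work is to control the single scalar $a_m$. Since the $X_i$ are i.i.d.\ $\N(0,\sigma^2)$, the rescaled variables $X_i/\sigma$ are i.i.d.\ $\N(0,1)$; applying Proposition~\ref{prop1} to $X_1/\sigma, \dots, X_m/\sigma$ (all having standard deviation $1$, so the ``$\sigma$'' in that proposition equals $1$) yields $a_m = \E[\max_i X_i/\sigma] \leq \sqrt{2\log m}$. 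Substituting $a_m \leq \sqrt{2\log m}$ into the displayed inequality and using monotonicity of $\epsilon \mapsto 4\epsilon(1+\epsilon')/\sigma$ in $\epsilon'$ gives exactly $\sup_{x \in \R} P(\abs{\max_i X_i - x} \leq \epsilon) \leq 4\epsilon(1 + \sqrt{2\log m})/\sigma$, as claimed.

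There is essentially no obstacle here: the statement is a one-line composition of Proposition~\ref{anticoncentration} (the anti-concentration input) with Proposition~\ref{prop1} (the expectation bound). The only point requiring a moment's care is the rescaling step — making sure that $a_m$, defined with the normalization $\max_i X_i/\sigma$, is matched against Proposition~\ref{prop1} applied to the \emph{standardized} variables rather than to the $X_i$ themselves, since otherwise one would pick up a spurious extra factor of $\sigma$. Everything else is immediate, and no new probabilistic estimate is needed.
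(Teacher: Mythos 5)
Your proposal is correct and follows exactly the paper's (implicit) argument: the corollary is just Proposition \ref{anticoncentration} with $a_m = \E[\max_i X_i/\sigma] \leq \sqrt{2\log m}$ supplied by Proposition \ref{prop1}, which is all the paper intends by ``Applying Proposition \ref{prop1}.'' The only cosmetic remark is that your appeal to independence is unnecessary (Proposition \ref{prop1} needs only that each $X_i/\sigma \sim \N(0,1)$), but this changes nothing.
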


\subsection{Gradient Calculations}
Here we collect the gradient calculations used in the proofs of the main results. We recall that we use $\grad_{w_{y, r}^{(t)}}$ to indicate $\pdv{}{w_{y, r}^{(t)}}$ and $z_{i, j} = (x_i + x_j)/2$. Additionally, we will omit parentheses after $\relu$ when function application is clear.

\begin{calculation}\label{gpdv}
For any $(x_i, y_i) \in \X$:
\begin{align*}
    \grad_{w_{y_i, r}^{(t)}} g_t^{y_i}(x_i) = \sum_{p \in [P]} \relu'\inrprod{w_{y_i, r}}{x_i^{(p)}} x_i^{(p)}.
\end{align*}
\end{calculation}

\begin{proof}
\begin{align*}
    \grad_{w_{y_i, r}^{(t)}} g_t^y(x_i) = \pdv{}{w_{y_i, r}^{(t)}} \sum_{u \in [m]} \sum_{p \in [P]} \relu\inrprod{w_{y_i, u}^{(t)}}{x_i^{(p)}} = \sum_{p \in [P]} \relu'\inrprod{w_{y_i, r}^{(t)}}{x_i^{(p)}} x_i^{(p)}.
\end{align*}
\end{proof}

\begin{calculation}\label{ermgpdv1}
For any $(x_i, y_i) \in \X$, if $\max_{u \in [k]} \inrprod{w_{y_i, r}^{(t)}}{v_{u, \ell}} < \rho/(\delta_2 - \delta_1)$ and $s \neq y_i$, then:
\begin{align*}
    \inrprod{\grad_{w_{y_i, r}^{(t)}} g_t^{y_i}(x_i)}{v_{y_i, \ell}} &= \sum_{p \in P_{y_i, \ell}(x_i)} \frac{\beta_{i, p}^{\alpha} \inrprod{w_{y_i, r}^{(t)}}{v_{y_i, \ell}}^{\alpha - 1}}{\rho^{\alpha - 1}}, \\
    \inrprod{\grad_{w_{y_i, r}^{(t)}} g_t^{y_i}(x_i)}{v_{s, \ell}} &\leq \Theta\left(\frac{P\delta_4^{\alpha} \max_{u \neq y} \inrprod{w_{y_i, r}^{(t)}}{v_{u, \ell}}^{\alpha - 1}}{\rho^{\alpha - 1}}\right).
\end{align*}
\end{calculation}
\begin{proof}
When $\max_{u \in [k]} \inrprod{w_{y_i, r}^{(t)}}{v_{u, \ell}} < \rho/(\delta_2 - \delta_1)$, we are in the polynomial part of $\relu$ for every patch in $x_i$, since $\max_{p \in P_{y_i, \ell}(x_i)} \inrprod{w_{y_i, r}^{(t)}}{x_i^{(p)}} < \rho$ since $\beta_{i, p} \leq \delta_2 - \delta_1$. The first line then follows from Calculation \ref{gpdv} and the fact that all of the feature vectors are orthonormal (so only those patches that have the features $v_{y_i, \ell}$ are relevant). The second line follows from the fact that there are at most $P - 2C_P$ feature noise patches containing the vector $v_{s, \ell}$, and in each of these patches there are only a constant number of feature vectors (which we do not constrain).
\end{proof}

\begin{calculation}\label{ermgpdv2}
For any $(x_i, y_i) \in \X$, if $\inrprod{w_{y_i, r}^{(t)}}{v_{y_i, \ell}} \geq \rho/\delta_1$, then:
\begin{align*}
    \inrprod{\grad_{w_{y_i, r}^{(t)}} g_t^{y_i}(x_i)}{v_{y_i, \ell}} = \sum_{p \in P_{y_i, \ell}(x_i)} \beta_{i, p}.
\end{align*}
\end{calculation}
\begin{proof}
When $\inrprod{w_{y_i, r}^{(t)}}{v_{y_i, \ell}} \geq \rho/\delta_1$ we necessarily have $\min_{p \in P_{y_i, \ell}(x_i)} \inrprod{w_{y_i, r}^{(t)}}{x_i^{(p)}} \geq \rho$ since $\beta_{i, p} \geq \delta_1$, and then the result again follows from Calculation \ref{gpdv} and the fact that $\relu' = 1$ in the linear regime.
\end{proof}

\begin{calculation}[ERM Gradient]\label{ermgrad}
\begin{align*}
    \grad_{w_{y, r}^{(t)}} J(g_t, \X) = -\frac{1}{N}\sum_{i \in [N]}\Big(\Ind_{y_i = y} -  \SM^y\big(g(x_i)\big)\Big)\grad_{w_{y, r}^{(t)}} g_t^y(x_i).
\end{align*}
\end{calculation}

\begin{proof}
First let us observe that:
\begin{align*}
    \log \SM^{y_i} (g_t(x_i)) &= g_t^{y_i}(x_i) - \log \sum_s \exp(g_t^s(x_i)) \\
    \implies \pdv{\log \SM^{y_i} (g_t(x_i))}{w_{y, r}} &= \Ind_{y_i = y}\grad_{w_{y, r}^{(t)}} g_t^y(x_i) - \SM^y(g(x_i)) \grad_{w_{y, r}^{(t)}} g_t^y(x_i).
\end{align*}
Summing (and negating) the above over all points $x_i$ gives the result.
\end{proof}

\begin{calculation}[Midpoint Mixup Gradient]\label{mmgrad}
\begin{align*}
    \grad_{w_{y, r}^{(t)}} J_{MM}(g_t, \X) = -\frac{1}{2N^2} \sum_{i \in [N]} \sum_{j \in [N]} \Big(\Ind_{y_i = y} + \Ind_{y_j = y} - 2\SM^y\big(g_t(z_{i, j})\big)\Big)\grad_{w_{y, r}^{(t)}} g_t^y(z_{i, j}).
\end{align*}
\end{calculation}

\begin{proof}
Follows from applying Calculation \ref{ermgrad} to each part of the summation in $J_{MM}(g, \X)$.
\end{proof}

\section{Proofs of Main Results}\label{mainresultproofs}

This section contains the proofs of the main results in this paper. We have opted to present the proofs in a linear fashion - inlining several claims and their proofs along the way - as we find this to be more readable than the alternative. The proofs of inlined claims are ended with the $\blacksquare$ symbol, while the proofs of the overarching results are ended with the $\square$ symbol.

For convenience, we recall the assumptions (as they were stated in the main body) that are used in these results:
\hyperparam*

\subsection{Proof of Theorem \ref{ermresult}}

\ermtheorem*
\begin{proof}
We break the proof into two parts. In part one, we prove that (with high probability) each class output $g_t^y$ becomes large (but not too large) on data points belonging to class $y$ and stays small on other data points, which consequently allows us to obtain perfect training accuracy at the end (thereby proving the first half of the theorem). In part two, we show that (again with high probability) the max correlations with features $v_{y, 1}$ and $v_{y, 2}$ for a class $y$ have a separation at initialization that gets amplified over the course of training, and due to this separation one of the feature correlations becomes essentially irrelevant, which will be used to prove the second half of the theorem.

\underline{\textbf{Part I.}}

In this part, we show that the network output $g_t^{y_i}(x_i)$ reaches and remains $\Theta(\log k)$ while $g_t^{s}(x_i) = o(1)$ for all $t = O(\poly(k))$ and $s \neq y_i$. These two facts together allow us to control the $1 - \SM^{y_i}(g_t(x_i))$ terms that show up throughout our analysis (see Calculation \ref{ermgrad}), while also being sufficient for showing that we get perfect training accuracy. The intuition behind these results is that, when $g_t^{y_i}(x_i) > c \log k$, we have that $\exp(g_t^{y_i}(x_i)) > k^c$ so the $1 - \SM^{y_i}(g_t(x_i))$ terms in the gradient updates quickly become small and $g_t^{y_i}$ stops growing. Throughout this part of the proof and the next, we will use the following notation (some of which has been introduced previously) to simplify the presentation. 
\begin{align*}
    N_y &= \{i: i \in [N] \text{ and } y_i = y\}, \quad \quad P_{y, \ell}(x_i) = \{p: p \in [P] \text{ and } \inrprod{x_i^{(p)}}{v_{y, \ell}} > 0\}, \\
    B_{y, \ell}^{(t)} &= \{r: r \in [m] \text{ and } \inrprod{w_{y, r}^{(t)}}{v_{y, \ell}} \geq \rho/\delta_1 \}. \numberthis \label{notation}
\end{align*}

Here, $N_y$ represents the indices corresponding to class $y$ points, $P_{y, \ell}(x_i)$ (as used in Definition \ref{datadist}) represents the patch support of the feature $v_{y, \ell}$ in $x_i$ (recall the features are orthonormal), and $B_{y, \ell}^{(t)}$ represents the set of class $y$ weights that have achieved a big enough correlation with the feature $v_{y, \ell}$ to necessarily be in the linear regime of $\relu$ on all class $y$ points at iteration $t$. 

Prior to beginning our analysis of the network outputs $g_t^y$, we first prove a claim that will serve as the setting for the rest of the proof.
In what follows, and also throughout the rest of this proof and the proof of Theorem \ref{mixupresult}, we will abuse asymptotic notation in inequalities. When we consider a quantity to be bounded by an asymptotic term, we mean there exists some universal constant for which the inequality is true for sufficiently large choices of the parameters involved, so that the negation of the inequality is well-defined (i.e. we use the same constant in the negation).

\begin{claim}\label{ermsetting}
With probability $1 - O(1/k)$, all of the following are (simultaneously) true for every class $y \in [k]$:
\begin{itemize}
    \item $\abs{N_y} = \Theta(N/k)$
    \item $\max_{s \in [k], r \in [m], \ell \in [2]} \inrprod{w_{s, r}^{(0)}}{v_{y, \ell}} = O(\log k/\sqrt{d})$
    \item $\forall \ell \in [2], \ \max_{r \in [m]} \inrprod{w_{y, r}^{(0)}}{v_{y, \ell}} = \Omega(1/\sqrt{d})$
\end{itemize}
\end{claim}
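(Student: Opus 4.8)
The three bullets in Claim~\ref{ermsetting} are each standard concentration/anti-concentration facts about the i.i.d. draws $\X$ and the Xavier initialization $w_{y,r}^{(0)} \sim \N(0, \tfrac{1}{d}\Id)$, so the whole claim is a union bound over a few high-probability events. The plan is to prove each bullet holds for a single fixed class $y$ with failure probability $O(1/k^2)$, and then union bound over the $k$ classes (and the internal $O(\poly(k))$ choices of $r, \ell, s$, which only cost logarithmic factors in the exponent and are easily absorbed by taking the polynomials in Assumption~\ref{as1} large enough). Since $d = \Omega(k^{20})$ by Assumption~\ref{as1}, the $1/\sqrt{d}$-scale quantities are polynomially small in $k$, which is what makes the bounds comfortable.

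\textbf{Bullet 1: $\abs{N_y} = \Theta(N/k)$.} Each of the $N$ i.i.d. points has label $y$ with probability exactly $1/k$ (labels are uniform by Definition~\ref{datadist}), so $\abs{N_y} \sim \mathrm{Binomial}(N, 1/k)$ with mean $N/k$. A Chernoff bound gives $\Prob(\abs{\,\abs{N_y} - N/k\,} \geq \tfrac{1}{2} N/k) \leq 2\exp(-\Theta(N/k))$, and since $N$ is taken sufficiently large (polynomially large in $k$ suffices), this is $o(1/k^2)$; union bounding over $y \in [k]$ keeps it $o(1/k)$.

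\textbf{Bullets 2 and 3: behavior of $\inrprod{w_{s,r}^{(0)}}{v_{y,\ell}}$.} Fix $y, \ell$. Since $v_{y,\ell}$ is a unit vector and $w_{s,r}^{(0)} \sim \N(0, \tfrac1d \Id)$ independently across $(s,r)$, the projections $\inrprod{w_{s,r}^{(0)}}{v_{y,\ell}}$ are i.i.d.\ $\N(0, 1/d)$ over the $km = \Theta(k^2)$ pairs $(s,r)$. For bullet~2 (the upper bound over all $s,r,y,\ell$), apply Proposition~\ref{prop2} with $\sigma = 1/\sqrt{d}$, $m \leftarrow km$, and $t = \Theta(\log k /\sqrt d)$: this gives $\max \inrprod{w_{s,r}^{(0)}}{v_{y,\ell}} = O(\sqrt{\log(km)}/\sqrt d) = O(\log k/\sqrt d)$ except with probability $\exp(-\Theta(\log^2 k)) = o(1/k)$, simultaneously over all $y,\ell$ (one application over the joint collection of $O(k^2)$ Gaussians suffices). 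For bullet~3 (the lower bound on $\max_{r\in[m]} \inrprod{w_{y,r}^{(0)}}{v_{y,\ell}}$ for each class $y$): restrict to the $m = \Theta(k)$ weights of class $y$, which give $m$ i.i.d.\ $\N(0,1/d)$ projections; by Proposition~\ref{prop3} with $\sigma = 1/\sqrt d$, $\delta = \Theta(1/k^2)$, the maximum exceeds $\Theta\big(\tfrac{1}{\sqrt d}\sqrt{\log(m/\log(k^2))}\big) = \Omega(1/\sqrt d)$ with probability $1 - \Theta(1/k^2)$ (using $m = \Theta(k)$, so $\log(m/\log k^2) = \Theta(1)$ is indeed bounded below by a positive constant). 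Union bounding over $y \in [k]$ costs an extra factor $k$, leaving failure probability $o(1/k)$. Combining all three bullets via a final union bound yields the claimed $1 - O(1/k)$.

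\textbf{Main obstacle.} There is no real obstacle — this is a bookkeeping lemma — but the one point requiring a little care is bullet~3, since it is an anti-concentration-type \emph{lower} bound on a maximum of only $\Theta(k)$ Gaussians, so one must check that $\log(m/\log(1/\delta))$ in Proposition~\ref{prop3} stays bounded away from zero for the chosen $\delta = \Theta(1/k^2)$; since $m = \Theta(k)$ and $\log(1/\delta) = \Theta(\log k)$, the ratio $m/\log(1/\delta) = \Theta(k/\log k) \to \infty$, so its logarithm is eventually a positive constant and the bound $\Omega(1/\sqrt d)$ holds for $k$ large. Everything else is a direct application of Propositions~\ref{prop2}--\ref{prop3} and a Chernoff bound, all of which are stated in the excerpt.
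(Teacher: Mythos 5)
Your proposal is correct and follows essentially the same route as the paper: a Chernoff bound for $\abs{N_y}$, Proposition~\ref{prop2} with a union bound for the upper bound on the off-diagonal/initialization correlations, and Proposition~\ref{prop3} (using orthonormality of the features to get i.i.d.\ $\N(0,1/d)$ projections) for the lower bound. The only quibbles are cosmetic bookkeeping: the joint collection in bullet~2 has $O(k^3)$ Gaussians rather than $O(k^2)$ (immaterial since the bound depends only logarithmically on the count), and the bullet-3 union bound gives failure probability $\Theta(1/k)$ rather than $o(1/k)$, which still matches the claimed $1-O(1/k)$.
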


\begin{subproof}[Proof of Claim \ref{ermsetting}]
We prove each part of the claim in order, starting with showing that $\abs{N_y} = \Theta(N/k)$ with the desired probability for every $y$.  
To see this, we note that the joint distribution of the $\abs{N_y}$ is multinomial with uniform probability $1/k$. 
Now by a Chernoff bound, for each $y \in [k]$ we have that $\abs{N_y} = \Theta(N/k)$ with probability at least $1 - \exp(\Theta(-N/k))$. Taking a union bound over all $y$ gives that this holds simultaneously for every $y$ with probability at least $1 - k\exp(\Theta(-N/k))$.

The fact that for every $y$ we have $\max_{s \in [k], r \in [m], \ell \in [2]} \inrprod{w_{s, r}^{(0)}}{v_{y, \ell}} = O(\log k/\sqrt{d})$ with probability $1 - O(1/k)$ follows from Proposition \ref{prop2}. Namely, using Proposition \ref{prop2} with $t = 2\sigma \sqrt{2\log m}$ (here $\sigma = 1/\sqrt{d}$ by our choice of initialization) yields that $\max_r \inrprod{w_{s, r}^{(0)}}{v_{y, \ell}} \geq 3\sqrt{2\log k}/\sqrt{d}$ with probability bounded above by $1/k^3$ for any $s, y$. Taking a union bound over $s, y$ then gives the result. The final fact follows by near identical logic but using Proposition \ref{prop3} (note that the correlations $\inrprod{w_{s, r}^{(0)}}{v_{y, \ell}}$ are i.i.d. $\N(0, 1/d)$ due to the fact that the features are orthonormal and the weights themselves are i.i.d.). 
\end{subproof}

In everything that follows, we will \textbf{always} assume the conditions of Claim \ref{ermsetting} unless otherwise stated. We  begin by proving a result concerning the size of softmax outputs $\SM^y(g_t(x))$ that we will repeatedly use throughout the rest of the proof.

\begin{claim}\label{ermsmax}
Consider $i \in N_y$ and suppose that both $\max_{s \in [k], \ r \in [m], \ \ell \in [2]} \inrprod{w_{s, r}^{(t)}}{v_{s, \ell}} = O(\log k)$ and $\max_{s \neq y, \ r \in [m], \ \ell \in [2]} \inrprod{w_{s, r}^{(t)}}{v_{y, \ell}} = O(\log(k)/\sqrt{d})$ hold true.
If we have $g_t^y(x_i) \geq a\log k$ for some $a \in [0, \infty)$, then:
\begin{align*}
    1 - \SM^y(g_t(x_i)) = \begin{cases}
        O\left(1/k^{a - 1}\right) & \text{if } a > 1 \\
        \Theta\left(1\right) & \text{otherwise}
    \end{cases}.
\end{align*}
\end{claim}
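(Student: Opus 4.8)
The plan is to bound $1 - \SM^y(g_t(x_i))$ from above by comparing the numerator of the relevant softmax fraction against the denominator. Writing out
\begin{align*}
    1 - \SM^y(g_t(x_i)) = \frac{\sum_{s \neq y} \exp(g_t^s(x_i))}{\sum_{s \in [k]} \exp(g_t^s(x_i))} \leq \frac{\sum_{s \neq y} \exp(g_t^s(x_i))}{\exp(g_t^y(x_i))},
\end{align*}
the strategy reduces to two tasks: controlling the off-class outputs $g_t^s(x_i)$ for $s \neq y$ from above, and using the hypothesis $g_t^y(x_i) \geq a \log k$ to control the denominator from below. For the numerator, I would use Calculation \ref{gpdv} together with the fact that, for $s \neq y_i = y$, the only contributions to $g_t^s(x_i)$ come from the feature noise patches of $x_i$ (by orthonormality of the features, only patches carrying $v_{s, \ell}$ for $\ell \in [2]$ can produce a nonzero inner product with $w_{s, r}^{(t)}$). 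Each such inner product is bounded by $\delta_4 \cdot \max_{r, \ell} \inrprod{w_{s, r}^{(t)}}{v_{s, \ell}}$ plus lower-order cross terms; but actually the cleaner bound to invoke is the second hypothesis $\max_{s \neq y, r, \ell} \inrprod{w_{s, r}^{(t)}}{v_{y, \ell}} = O(\log k / \sqrt{d})$ applied at the noise patches of $x_i$, so that each $\inrprod{w_{s, r}^{(t)}}{x_i^{(p)}} = O(\delta_4 \log k / \sqrt d)$. Passing through $\relu$ (which is at most its argument), summing over the at most $P$ patches and $m$ neurons, and using Assumption \ref{as1} (so $d = \Omega(k^{20})$ dwarfs $P = \Theta(k^2)$, $m = \Theta(k)$, $\delta_4 = \Theta(k^{-1.5})$), I would get $g_t^s(x_i) = o(1)$ uniformly over $s \neq y$, hence $\exp(g_t^s(x_i)) = \Theta(1)$.

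Plugging these in, the numerator is $\sum_{s \neq y} \exp(g_t^s(x_i)) = \Theta(k)$, and the denominator is at least $\exp(g_t^y(x_i)) \geq k^a$, so $1 - \SM^y(g_t(x_i)) = O(k/k^a) = O(1/k^{a-1})$, which handles the $a > 1$ case. For the case $a \leq 1$, I need both an upper and a lower bound showing $1 - \SM^y(g_t(x_i)) = \Theta(1)$. The upper bound is trivial since $\SM^y \geq 0$ implies $1 - \SM^y \leq 1$. For the lower bound $\Theta(1)$, I would bound $g_t^y(x_i)$ from \emph{above} using the first hypothesis $\max_{s, r, \ell} \inrprod{w_{s,r}^{(t)}}{v_{s,\ell}} = O(\log k)$: on the signal patches of $x_i$ the inner products are $O(\log k)$ and on the noise patches they are $o(1)$, so $g_t^y(x_i) = O(\log k)$, hence $\exp(g_t^y(x_i)) = k^{O(1)} = \poly(k)$. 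Combined with $\sum_{s \neq y}\exp(g_t^s(x_i)) = \Theta(k)$, this gives $\SM^y(g_t(x_i)) \leq 1 - \Theta(k)/(\poly(k) + \Theta(k))$; I'd need to be slightly careful that the $O(\log k)$ constant doesn't let the denominator dominate too much — but since the off-class mass is $\Theta(k)$ and $g_t^y(x_i) = O(\log k)$ means $\exp(g_t^y(x_i)) \le k^{c}$ for a fixed constant $c$, the ratio $\Theta(k)/(k^c + \Theta(k))$ is at least $\Theta(k^{1-c})$ — so actually the lower bound requires knowing $c$ is close enough to $1$, i.e. a more precise statement of the $O(\log k)$ bound on $g_t^y(x_i)$. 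The likely resolution is that in the regime where the claim is applied with $a \le 1$, the weights have not yet grown much, so $g_t^y(x_i)$ is itself $O(\log k)$ with a small constant, or even better $g_t^y(x_i) = a \log k + O(1)$ and $\exp(g_t^y(x_i)) = \Theta(k^a)$, giving $1 - \SM^y = \Theta(k/(k^a + k)) = \Theta(1)$ precisely because $a \le 1$.

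The main obstacle is therefore the lower bound in the $a \le 1$ branch: ensuring the off-class cumulative exponential mass $\Theta(k)$ is genuinely comparable to $\exp(g_t^y(x_i))$ and not overwhelmed by it. This hinges on having a two-sided estimate $\exp(g_t^y(x_i)) = k^{\Theta(1)}$ with the implied constant at most $1$ (or, ideally, $\exp(g_t^y(x_i)) = \Theta(k^a)$), which in turn needs the hypothesis $\max_{s,r,\ell}\inrprod{w_{s,r}^{(t)}}{v_{s,\ell}} = O(\log k)$ to be quantitatively matched to the $a \log k$ lower bound — i.e. the claim is really being invoked in a regime where $g_t^y(x_i)$ and $a\log k$ agree up to constants. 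Everything else (the numerator estimate, the $a > 1$ branch, the trivial upper bound) is a routine chain of inequalities driven by the separation of scales in Assumption \ref{as1}. I would structure the write-up as: (i) bound off-class outputs via Calculation \ref{gpdv} and orthonormality to get $\exp(g_t^s(x_i)) = \Theta(1)$ for $s \neq y$; (ii) for $a > 1$, divide and conclude $O(1/k^{a-1})$; (iii) for $a \le 1$, upper bound by $1$ and lower bound using the $O(\log k)$ ceiling on $g_t^y(x_i)$ to get $\Theta(1)$.
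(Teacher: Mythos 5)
There is a genuine gap in your numerator estimate, i.e.\ in the bound on $g_t^s(x_i)$ for $s \neq y$. You propose to control the feature noise patches by invoking the hypothesis $\max_{s \neq y, r, \ell} \inrprod{w_{s, r}^{(t)}}{v_{y, \ell}} = O(\log k/\sqrt{d})$, concluding that each $\inrprod{w_{s, r}^{(t)}}{x_i^{(p)}} = O(\delta_4 \log k/\sqrt{d})$. But that hypothesis only bounds correlations of other-class weights with the \emph{label class's} features $v_{y, \ell}$, which appear in the (constant number of) signal patches of $x_i$. The feature noise patches of $x_i$ instead carry features $v_{s_j, \ell}$ of classes $s_j \neq y$; in particular, when $s$ is one of the noise classes of $x_i$, the relevant correlation is the diagonal one $\inrprod{w_{s, r}^{(t)}}{v_{s, \ell}}$, which is only $O(\log k)$ by the first hypothesis — there is no $1/\sqrt{d}$ factor available there. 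With the corrected per-patch bound $O(\delta_4 \log k)$, your next step, ``passing through $\relu$ (which is at most its argument)'', gives only $g_t^s(x_i) = O(mP\delta_4 \log k) = O(k^{1.5}\log k)$ under Assumption \ref{as1}, which is nowhere near $o(1)$. Your arithmetic closes only because of the erroneous $1/\sqrt{d}$ factor. (Relatedly, the parenthetical claim that orthonormality makes all patches not carrying $v_{s,\ell}$ give zero inner product with $w_{s,r}^{(t)}$ is false: $w_{s,r}^{(t)}$ is a trained weight vector with components along all features.)

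The missing idea is the one the paper's proof leans on: since $\delta_4 = \Theta(k^{-1.5})$ and the correlations are $O(\log k)$, the noise-patch pre-activations are below the threshold $\rho = \Theta(1/k)$, so the \emph{polynomial} regime of $\relu$ applies (as in Calculation \ref{ermgpdv1}), and each patch contributes only $O(\delta_4^{\alpha}\log^{\alpha}(k)/\rho^{\alpha-1})$; summing over $m = \Theta(k)$ neurons and $P = \Theta(k^2)$ patches with $\alpha = 8$ gives $g_t^s(x_i) = o(1)$. The linear upper bound $\relu(z) \leq z$ is simply too weak here — the whole point of the smoothed activation in this setting is to suppress the feature noise. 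Once $\exp(g_t^s(x_i)) = 1 + o(1)$ is in hand, your division step for $a > 1$ and the trivial upper bound for $a \leq 1$ are fine, and your concern about the $\Theta(1)$ lower bound in the $a \leq 1$ branch (needing a matching upper bound on $g_t^y(x_i)$, i.e.\ that the claim is invoked where $g_t^y(x_i)$ and $a\log k$ agree up to constants) is legitimate; the paper's own proof handles that branch at the same level of informality, so that part is not the problem — the noise-suppression step is.
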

\begin{subproof}[Proof of Claim \ref{ermsmax}]
By assumption, all of the weight-feature correlations are $O(\log k)$ at $t$. Furthermore, for $s \neq y$, all of the off-diagonal correlations $\inrprod{w_{s, r}^{(t)}}{v_{y, \ell}}$ are $O(\log(k)/\sqrt{d})$. This implies that (using $\delta_4 = \Theta(k^{-1.5})$, $\rho = \Theta(1/k)$, $P = \Theta(k^2)$, $m = \Theta(k)$, and $\alpha = 8$):
\begin{align*}
    g_t^s(x_i) &\leq O\left(\frac{m P \delta_4^{\alpha} \max_{u \neq y} \inrprod{w_{s, r}^{(t)}}{v_{u, \ell}}^{\alpha}}{\rho^{\alpha - 1}}\right), \\
    &\leq O\left(\frac{k^{2 + \alpha} \log(k)^{\alpha}}{k^{1.5\alpha}}\right) = O\left(\frac{\log(k)^{\alpha}}{k^2}\right) \\
    \implies \exp(g_t^s(x_i)) &\leq 1 + O\left(\frac{\log(k)^{\alpha}}{k^2}\right). \numberthis \label{gtsbound}
\end{align*}
Where above we disregarded the constant number ($2C_P$) of very low order correlations $\inrprod{w_{s, r}^{(t)}}{v_{y, \ell}}$ and used the inequality that $\exp(x) \leq 1 + x + x^2$ for $x \leq 1$. Now by the assumption that $g_t^y(x_i) \geq a\log k$, we have $\exp(g_t^y(x_i)) \geq k^a$, so:
\begin{align*}
    1 - \SM^y(g_t(x_i)) &\leq 1 - \frac{k^a}{k^a + (k - 1) + o(1)} \\
    &= \frac{k - 1 + o(1)}{k^a + (k - 1) + o(1)}. \numberthis \label{smbound1}
\end{align*}
From which the result follows.
\end{subproof}

\begin{corollary}\label{ermsmax2}
Under the same conditions as Claim \ref{ermsmax}, for $s \neq y$, we have:
\begin{align*}
    \SM^s(g_t(x_i)) = O\left(\frac{1}{k^{\max(a, 1)}}\right).
\end{align*}
\end{corollary}
\begin{subproof}[Proof of Corollary \ref{ermsmax2}]
Follows from Equations \ref{gtsbound} and \ref{smbound1}.
\end{subproof}

With these softmax bounds in hand, we now show that the ``diagonal'' correlations $\inrprod{w_{y, r}^{(t)}}{v_{y, \ell}}$ grow much more quickly than the the ``off-diagonal'' correlations $\inrprod{w_{y, r}^{(t)}}{v_{s, \ell}}$ (where $s \neq y$). This will allow us to satisfy the conditions of Claim \ref{ermsmax} throughout training.

\begin{claim}\label{offdiag}
Consider an arbitrary $y \in [k]$. Let $A \leq \rho/(\delta_2 - \delta_1)$ and let $T_A$ denote the first iteration at which $\max_{r \in [m], \ell \in [2]} \inrprod{w_{y, r}^{(T_A)}}{v_{y, \ell}} \geq A$. Then we must have both that $T_A = O(\poly(k))$ and that $\max_{r \in [m], s \neq y, \ell \in [2]} \inrprod{w_{y, r}^{(T_A)}}{v_{s, \ell}} = O\left(\log(k)/\sqrt{d}\right)$.
\end{claim}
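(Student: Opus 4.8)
The plan is to track the growth of the diagonal correlations $\inrprod{w_{y, r}^{(t)}}{v_{y, \ell}}$ and the off-diagonal correlations $\inrprod{w_{y, r}^{(t)}}{v_{s, \ell}}$ simultaneously by inspecting the ERM gradient (Calculation \ref{ermgrad}) projected onto the relevant feature directions, and to show that the diagonal correlations grow fast enough to hit the threshold $A$ in polynomially many steps while the off-diagonal ones barely move. First I would use Calculation \ref{ermgpdv1}: while all the correlations are still below $\rho/(\delta_2-\delta_1)$ we are in the polynomial regime of $\relu$, so the projection of $\grad_{w_{y,r}^{(t)}} g_t^{y_i}(x_i)$ onto $v_{y_i, \ell}$ is $\sum_{p \in P_{y_i, \ell}(x_i)} \beta_{i,p}^{\alpha} \inrprod{w_{y_i, r}^{(t)}}{v_{y_i, \ell}}^{\alpha-1}/\rho^{\alpha-1}$, which together with Calculation \ref{ermgrad}, the lower bound $1 - \SM^{y_i}(g_t(x_i)) = \Theta(1)$ from Claim \ref{ermsmax} (valid as long as $g_t^{y_i}(x_i) = O(\log k)$, which we maintain inductively since we stop at threshold $A \le \rho/(\delta_2-\delta_1) = \Theta(1)$), $\abs{N_y} = \Theta(N/k)$ from Claim \ref{ermsetting}, and $\delta_1, \delta_2 = \Theta(1)$, gives a per-step increase of order at least $\eta/k$ times $\inrprod{w_{y,r}^{(t)}}{v_{y,\ell}}^{\alpha-1}$ for the $r$ achieving the max. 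Solving this polynomial-growth recursion (it is a tent-map/power iteration of the form $a_{t+1} \ge a_t + c\eta a_t^{\alpha-1}/k$ started from $a_0 = \Omega(1/\sqrt d)$ by Claim \ref{ermsetting}) shows the max diagonal correlation escapes the $\Theta(1/\sqrt d)$ scale and reaches any constant $A$ in $O(\poly(k)/\eta) = O(\poly(k))$ iterations — this is the standard "it takes polynomially long to escape a polynomially small initialization under $x \mapsto x + cx^{\alpha-1}$" estimate.

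The second half is to bound the off-diagonal growth over exactly this window $[0, T_A]$. Here I would again use Calculation \ref{ermgpdv1}, whose second line bounds $\inrprod{\grad_{w_{y_i,r}^{(t)}} g_t^{y_i}(x_i)}{v_{s,\ell}}$ by $\Theta\big(P \delta_4^{\alpha} \max_{u \neq y} \inrprod{w_{y_i,r}^{(t)}}{v_{u,\ell}}^{\alpha-1}/\rho^{\alpha-1}\big)$; the key is that the feature-noise coefficients $\delta_3, \delta_4 = \Theta(k^{-1.5})$ make $P\delta_4^{\alpha} = \Theta(k^2 \cdot k^{-12}) = \Theta(k^{-10})$ (with $\alpha = 8$), so each gradient step pushes the off-diagonal correlation up by only $O(\eta k^{-10} \cdot (\text{current off-diagonal})^{\alpha-1}/\rho^{\alpha-1})$, plus contributions from the $-\SM^s(g_t(x_i)) \grad g_t^s(x_i)$ terms which are even smaller by Corollary \ref{ermsmax2}. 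Starting from $\max \inrprod{w_{y,r}^{(0)}}{v_{s,\ell}} = O(\log k/\sqrt d)$ (Claim \ref{ermsetting}) and running the analogous recursion for $T_A = O(\poly(k))$ steps, the off-diagonal correlation stays $O(\log k/\sqrt d)$ — the point is that the effective growth rate $\eta k^{-10}/\rho^{\alpha-1} = \eta k^{-10} \cdot k^{7} = \eta k^{-3}$ is tiny relative to the polynomial horizon and, crucially, $d = \Omega(k^{20})$ is a large enough polynomial of $k$ that even multiplying by $\poly(k)$ keeps the result $o(1/\sqrt d \cdot \poly(k))$; one has to be slightly careful to phrase this as: as long as the off-diagonal correlation remains $O(\log k/\sqrt d)$ its increments are $O(\eta k^{-3} (\log k/\sqrt d)^{7})$ which summed over $\poly(k)$ steps is $o(1/\sqrt d)$, so it stays in that regime (a self-consistent induction).

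The main obstacle I anticipate is making the two-sided induction airtight: we need the conditions of Claim \ref{ermsmax} — i.e. $\max_{s,r,\ell}\inrprod{w_{s,r}^{(t)}}{v_{s,\ell}} = O(\log k)$ and $\max_{s\neq y, r, \ell}\inrprod{w_{s,r}^{(t)}}{v_{y,\ell}} = O(\log k/\sqrt d)$ — to hold for \emph{all} classes $s$ throughout $[0, T_A]$ in order to invoke the $\Theta(1)$ lower bound on $1 - \SM^{y_i}(g_t(x_i))$ that drives the diagonal growth, but $T_A$ is defined only with respect to the fixed class $y$. The clean way around this is to run the diagonal-growth argument for the first class (in some fixed but arbitrary ordering) to reach threshold $A$, observe the bound is a uniform-over-classes statement since all classes are symmetric and independent, and bootstrap: define $T_A$ as the first time \emph{any} class crosses $A$, show all off-diagonals are still small then, hence Claim \ref{ermsmax}'s hypotheses hold on $[0, T_A]$, hence the diagonal growth lower bound was valid all along, hence $T_A = O(\poly(k))$; then after crossing $A$ the relevant weights enter the linear regime (Calculation \ref{ermgpdv2}) which is handled in the subsequent part of the proof. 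I would state the claim's conclusion as holding with the high-probability guarantee of Claim \ref{ermsetting} already in force, which removes the need to re-derive initialization bounds.
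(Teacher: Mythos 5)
Your proposal is correct and follows essentially the same route as the paper's proof: lower-bound the diagonal growth in the polynomial regime of $\relu$ via Calculation \ref{ermgpdv1}, Claim \ref{ermsmax} and $\abs{N_y} = \Theta(N/k)$, upper-bound the off-diagonal updates through the feature-noise factor $P\delta_4^{\alpha}/\rho^{\alpha-1} = \Theta(k^{-3})$, and conclude by comparing time scales against the $\Omega(1/\sqrt{d})$ versus $O(\log k/\sqrt{d})$ initialization gap. The only differences are cosmetic — the paper lower-bounds the time for the maximal off-diagonal correlation to grow by a constant factor and shows it vastly exceeds $T_A$, whereas you sum the off-diagonal increments over $[0, T_A]$ directly, and your explicit self-consistent induction for the hypotheses of Claim \ref{ermsmax} makes precise a step the paper leaves implicit.
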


\begin{subproof}[Proof of Claim \ref{offdiag}]
Firstly, all weight-feature correlations are $o(\rho)$ at initialization (see Claim \ref{ermsetting}). Now for $s \neq y$ and $\inrprod{w_{y, r}^{(0)}}{v_{s, \ell}} > 0$, we have for every $t$ at which $\inrprod{w_{y, r}^{(t)}}{v_{s, \ell}} < \rho/(\delta_2 - \delta_1)$ that (using Calculations \ref{ermgpdv1} and \ref{ermgrad}):
\begin{align*}
    \inrprod{-\eta \grad_{w_{y, r}^{(t)}}J(g, \X)}{v_{s, \ell}} &\leq -\frac{\eta}{N} \sum_{i \in N_s} \SM^y\big(g_t(x_i)\big) \sum_{p \in P_{s, \ell}(x_i)} \frac{\beta_{i, p}^{\alpha}\inrprod{w_{y, r}^{(t)}}{v_{s, \ell}}^{\alpha - 1}}{\rho^{\alpha - 1}} \\
    &+ \frac{\eta}{N} \sum_{i \in N_y} \Big(1 - \SM^y\big(g_t(x_i)\big)\Big) \Theta\left(\frac{P\delta_4^{\alpha} \max_{u \neq y, \ \ell' \in [2]} \inrprod{w_{y, r}^{(t)}}{v_{u, \ell'}}^{\alpha - 1}}{\rho^{\alpha - 1}}\right) \\
    &- \frac{\eta}{N} \sum_{i \notin N_y \cup N_s} \SM^y\big(g_t(x_i)\big) \Theta\left(\frac{P\delta_3^{\alpha} \min_{u \neq y, \ \ell' \in [2]} \inrprod{w_{y, r}^{(t)}}{v_{u, \ell'}}^{\alpha - 1}}{\rho^{\alpha - 1}}\right) \\
    &\leq \frac{\eta}{N} \sum_{i \in N_y} \Big(1 - \SM^y\big(g_t(x_i)\big)\Big) \Theta\left(\frac{P\delta_4^{\alpha} \max_{u \neq y, \ \ell' \in [2]} \inrprod{w_{y, r}^{(t)}}{v_{u, \ell'}}^{\alpha - 1}}{\rho^{\alpha - 1}}\right). \numberthis \label{offubound}
\end{align*}
Similarly, for $\inrprod{w_{y, r}^{(0)}}{v_{y, \ell}} > 0$, we have for every $t$ at which $\inrprod{w_{y, r}^{(t)}}{v_{y, \ell}} < \rho/(\delta_2 - \delta_1)$ that:
\begin{align*}
    \inrprod{-\eta \grad_{w_{y, r}^{(t)}}J(g, \X)}{v_{y, \ell}} &\geq \frac{\eta}{N} \sum_{i \in N_y} \Big(1 - \SM^y\big(g_t(x_i)\big)\Big) \sum_{p \in P_{s, \ell}(x_i)} \frac{\beta_{i, p}^{\alpha}\inrprod{w_{y, r}^{(t)}}{v_{y, \ell}}^{\alpha - 1}}{\rho^{\alpha - 1}} \\
    &- \frac{\eta}{N} \sum_{i \notin N_y} \SM^y\big(g_t(x_i)\big) \Theta\left(\frac{P\delta_4^{\alpha} \max_{u \neq y, \ \ell' \in [2]} \inrprod{w_{y, r}^{(t)}}{v_{u, \ell'}}^{\alpha - 1}}{\rho^{\alpha - 1}}\right). \numberthis \label{ermlbound}
\end{align*}
From Equation \ref{ermlbound}, Claim \ref{ermsmax}, and Corollary \ref{ermsmax2} we get that for $t \leq T_A$:
\begin{align*}
    \inrprod{-\eta \grad_{w_{y, r}^{(t)}}J(g, \X)}{v_{y, \ell}} &\geq \Theta\left(\frac{\eta \inrprod{w_{y, r}^{(t)}}{v_{y, \ell}}^{\alpha - 1}}{k \rho^{\alpha - 1}}\right). \numberthis \label{ermlbound2}
\end{align*}
Where above we also used the fact that $\abs{N_y} = \Theta(N/k)$. On the other hand, also using Claim \ref{ermsmax} and Corollary \ref{ermsmax2}, we have that for all $t$ for which $\inrprod{w_{y, r}^{(t)}}{v_{s, \ell}} < \rho/(\delta_2 - \delta_1)$:
\begin{align*}
    \inrprod{-\eta \grad_{w_{y, r}^{(t)}}J(g, \X)}{v_{s, \ell}} &\leq \Theta\left(\frac{\eta P\delta_4^{\alpha} \max_{u \neq y, \ \ell' \in [2]} \inrprod{w_{y, r}^{(t)}}{v_{u, \ell'}}^{\alpha - 1}}{\rho^{\alpha - 1}}\right). \numberthis \label{offubound2}
\end{align*}
Now suppose that $\inrprod{w_{y, r}^{(0)}}{v_{s, \ell}}$ is the maximum off-diagonal correlation at initialization. Then using Equation \ref{offubound2}, we can lower bound the number of iterations $T$ it takes for $\inrprod{w_{y, r}^{(t)}}{v_{s, \ell}}$ to grow by a fixed constant $C$ factor from initialization:
\begin{align*}
    T \Theta\left(\frac{\eta P\delta_4^{\alpha} C^{\alpha - 1}\inrprod{w_{y, r}^{(0)}}{v_{s, \ell}}^{\alpha - 1}}{\rho^{\alpha - 1}}\right) &\geq (C - 1)\inrprod{w_{y, r}^{(0)}}{v_{s, \ell}},  \\
    \implies T &\geq \Theta\left(\frac{\rho^{\alpha - 1}}{\eta P\delta_4^{\alpha} \inrprod{w_{y, r}^{(0)}}{v_{s, \ell}}^{\alpha - 2}}\right) \\
    &= \Theta\left(\frac{k^{1.5\alpha - 2}\rho^{\alpha - 1} d^{\alpha/2 - 1}}{\eta}\right). \numberthis \label{polylogbound}
\end{align*}
As there exists at least one $\inrprod{w_{y, r}^{(0)}}{v_{y, \ell}} = \Omega(1/\sqrt{d})$, it immediately follows from comparing to Equation \ref{ermlbound2} and recalling that $\alpha = 8$ in Assumption \ref{as1} that $T >> T_A$, and that $T_A = O(\poly(k))$, so the claim is proved.
\end{subproof}

Having established strong control over the off-diagonal correlations, we are now ready to prove the first half of the main result of this part of the proof - that $g_t^y(x_i)$ reaches $\Omega(\log k)$ for all $i \in N_y$ in $O(\poly(k))$ iterations. In proving this, it will help us to have some control over the network outputs $g_t^y$ across different points $x_i$ and $x_j$ at the later stages of training, which we take care of below.

\begin{claim}\label{gcontrol}
For every $y \in [k]$ and all $t$ such that $\max_{i \in N_y} g_t^y(x_i) \geq \log k$ and $\max_{r \in [m], s \neq y, \ell \in [2]} \inrprod{w_{y, r}^{(t)}}{v_{s, \ell}} = O\left(\log(k)/\sqrt{d}\right)$, we have $\max_{i \in N_y} g_t^y(x_i) = \Omega(\min_{i \in N_y} g_t^y(x_i))$.
\end{claim}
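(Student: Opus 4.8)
The plan is to show that, under the two stated hypotheses, every output $g_t^y(x_i)$ with $i\in N_y$ equals, up to an additive $o(1)$ error, a point-dependent constant multiple of one common quantity, so that all these outputs lie within a universal constant factor of one another; in fact I will establish the two-sided estimate $\max_{i\in N_y}g_t^y(x_i)=\Theta(\min_{i\in N_y}g_t^y(x_i))$, which is stronger than the claimed bound.

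First I would split $g_t^y(x_i)$ for $i\in N_y$ according to the patch structure of Definition \ref{datadist}: there are exactly $C_P$ signal patches carrying $\beta_{i,p}v_{y,1}$, exactly $C_P$ carrying $(\delta_2-\beta_{i,p})v_{y,2}$, and the remaining feature-noise patches, which involve only features $v_{s,\ell}$ with $s\neq y$. Using the hypothesis $\max_{r,\,s\neq y,\,\ell}\inrprod{w_{y,r}}{v_{s,\ell}}=O(\log k/\sqrt d)$ together with $\delta_3,\delta_4=\Theta(k^{-1.5})$, $Q=\Theta(1)$, $\rho=\Theta(1/k)$, $P=\Theta(k^2)$, $m=\Theta(k)$, $\alpha=8$ and $d=\Omega(k^{20})$, one checks (exactly as in the suppression estimate of Claim \ref{ermsmax}) that $\inrprod{w_{y,r}}{x_i^{(p)}}\ll\rho$ on every feature-noise patch, so those patches sit in the polynomial part of $\relu$, and that the total feature-noise contribution to $g_t^y(x_i)$ is $O(k^{-2}\polylog(k))=o(1)$, uniformly over $i\in N_y$. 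Hence $g_t^y(x_i)=G_1(x_i)+G_2(x_i)\pm o(1)$, where $G_\ell(x_i)=\sum_{p\in P_{y,\ell}(x_i)}\sum_{r\in[m]}\relu(c_{i,p}\inrprod{w_{y,r}}{v_{y,\ell}})$ and each coefficient $c_{i,p}$ — either $\beta_{i,p}$ or $\delta_2-\beta_{i,p}$ — lies in $[\delta_1,\delta_2-\delta_1]=[\Theta(1),\Theta(1)]$.

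Next, because $\relu$ is monotonically increasing and $\abs{P_{y,\ell}(x_i)}=C_P$, I would sandwich $G_\ell(x_i)$ between $C_P m_\ell$ and $C_P M_\ell$, where $m_\ell=\sum_r\relu(\delta_1\inrprod{w_{y,r}}{v_{y,\ell}})$ and $M_\ell=\sum_r\relu((\delta_2-\delta_1)\inrprod{w_{y,r}}{v_{y,\ell}})$. The elementary scaling fact I would isolate is that $\relu(cx)\le c^\alpha\relu(x)+c\rho$ for all $c\ge 1$ and $x\ge 0$, proved by checking the polynomial, transition, and linear regimes of $\relu$ separately. Applying it with $c=(\delta_2-\delta_1)/\delta_1=\Theta(1)$ and summing over $r$ gives $M_\ell\le O(1)\,m_\ell+O(m\rho)=O(1)\,m_\ell+O(1)$, since $m\rho=\Theta(1)$. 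Therefore $\max_{i\in N_y}g_t^y(x_i)\le C_P(M_1+M_2)+o(1)\le O(1)(m_1+m_2)+O(1)$, while $\min_{i\in N_y}g_t^y(x_i)\ge C_P(m_1+m_2)-o(1)$.

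Finally I would feed in the last hypothesis $\max_{i\in N_y}g_t^y(x_i)\ge\log k$: combined with the upper bound it forces $m_1+m_2=\Omega(\log k)$, hence $\min_{i\in N_y}g_t^y(x_i)=\Omega(\log k)=\Omega(1)$, which absorbs the additive $O(1)$ term and yields $\max_{i\in N_y}g_t^y(x_i)=O(\min_{i\in N_y}g_t^y(x_i))$, and thus $\Theta(\min_{i\in N_y}g_t^y(x_i))$. The proof is largely routine; the only points needing care are the scaling inequality for $\relu$ and bookkeeping that every relevant coefficient is bounded away from $0$ and $\infty$ by universal constants, so I do not expect a serious obstacle.
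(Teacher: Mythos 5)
Your proof is correct, and it reaches the same conclusion the paper actually needs (the literal statement $\max = \Omega(\min)$ is trivial; what gets used downstream is $\min_{i\in N_y} g_t^y(x_i) = \Omega(\max_{i\in N_y} g_t^y(x_i))$, which both you and the paper establish). The route differs from the paper's in its bookkeeping. The paper argues via the weight decomposition: since $m\rho = \Theta(1)$ and $\max_i g_t^y(x_i) \geq \log k$, some weights must lie in the linear regime (the sets $B_{y,\ell}^{(t)}$), all other weights contribute only $O(1)$ in total, and on the linear-regime weights a change of the signal coefficients $\beta_{i,p}$ by the constant factor $\delta_1/(\delta_2-\delta_1)$ changes the output by only a constant factor. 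You instead sandwich every class-$y$ output between the point-independent quantities $C_P(m_1+m_2) - o(1)$ and $C_P(M_1+M_2) + o(1)$ and compare $M_\ell$ to $m_\ell$ with the scaling inequality $\relu(cx) \leq c^{\alpha}\relu(x) + c\rho$, which is valid in all three regimes of $\relu$ (your case check is correct) and whose additive cost is again controlled by $m\rho = \Theta(1)$; you also handle the feature-noise patches explicitly via the off-diagonal hypothesis, which the paper folds into its $o(\log k)$ slack. Both arguments then use $\max_i g_t^y(x_i)\geq \log k$ to absorb the additive $O(1)$ terms. Your variant avoids introducing the linear-regime sets at this point and packages the regime analysis into one elementary lemma, at the cost of a slightly longer setup; the paper's version is terser but leans on the $B_{y,\ell}^{(t)}$ machinery it has already set up. No gaps.
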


\begin{subproof}[Proof of Claim \ref{gcontrol}]
Let $j = \argmax_{i \in N_y} g_t^y(x_i)$. Since $g_t^y(x_j) \geq \log k$, we necessarily have that $B_{y, \ell}$ is non-empty for at least one $\ell \in [2]$ (since $m\rho = \Theta(1)$). Only those weights $w_{y, r}^{(t)}$ with $r \in B_{y, \ell}$ for some $\ell \in [2]$ are asymptotically relevant (as any weights not considered can only contribute a $O(1)$ term), and we can write:
\begin{align*}
    g_t^y(x_j) \leq \sum_{\ell \in [2]} \left(\sum_{p \in P_{y, \ell}(x_i)} \beta_{i, p}\right) \sum_{r \in B_{y, \ell}^{(t)}} \inrprod{w_{y, r}^{(t)}}{v_{y, \ell}} + o(\log k).
\end{align*}
For any other $j \in N_y$, we have that $\beta_{j, p} \geq \delta_1 \beta_{i, p}/(\delta_2 - \delta_1)$, from which the result follows.
\end{subproof}

Now we may show:
\begin{claim}\label{ermgrowth}
For each $y \in [k]$, let $T_y$ denote the first iteration such that $\max_{i \in N_y} g_{T_y}^y(x_i) \geq \log k$. Then $T_y = O(\poly(k))$ and $\max_{r \in [m], s \neq y, \ell \in [2]} \inrprod{w_{y, r}^{(T_y)}}{v_{s, \ell}} = O\left(\log(k)/\sqrt{d}\right)$. Furthermore, $\min_{i \in N_y} g_t^y(x_i) = \Omega(\log k)$ for all $t \geq T_y$.
\end{claim}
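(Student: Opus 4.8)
The plan is to prove the three assertions in order while running a single induction on the iteration $t$ (simultaneously over all classes) with the invariants that, for every class $s$, the off-diagonal correlations $\inrprod{w_{s, r}^{(t)}}{v_{u, \ell}}$ ($u \neq s$) are $O(\log k/\sqrt d)$, the diagonal correlations $\inrprod{w_{s, r}^{(t)}}{v_{s, \ell}}$ are $O(\log k)$, and $g_t^u(x_i) = o(1)$ whenever $u \neq y_i$ --- exactly the hypotheses needed for Claims \ref{ermsmax}, \ref{offdiag}, \ref{gcontrol} and Corollary \ref{ermsmax2}, and true at $t = 0$ by Claim \ref{ermsetting}. For the upper bound on $T_y$: while $t < T_y$ we have $g_t^y(x_i) < \log k$ for every $i \in N_y$ by definition of $T_y$, so Claim \ref{ermsmax} (with $a = 0$) gives $1 - \SM^y(g_t(x_i)) = \Theta(1)$ on class-$y$ points while the invariants make the class-$\neq y$ contribution to $-\grad_{w_{y, r}^{(t)}} J$ lower order (this is Equation \ref{ermlbound}). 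Fixing the weight $w_{y, r}$ with $\inrprod{w_{y, r}^{(0)}}{v_{y, \ell}} = \Omega(1/\sqrt d)$ from Claim \ref{ermsetting}, Calculations \ref{gpdv}--\ref{ermgrad} and $\abs{N_y} = \Theta(N/k)$ then give $\inrprod{-\eta\grad_{w_{y, r}^{(t)}} J(g_t, \X)}{v_{y, \ell}} \geq \Theta\big(\eta\inrprod{w_{y, r}^{(t)}}{v_{y, \ell}}^{\alpha - 1}/(k\rho^{\alpha - 1})\big)$ while $\inrprod{w_{y, r}^{(t)}}{v_{y, \ell}} < \rho/\delta_1$, and $\geq \Theta(\eta/k)$ once it passes $\rho/\delta_1$. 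Iterating the polynomial recursion from $\Omega(1/\sqrt d)$ up to $\rho/\delta_1$ takes $O(\poly(k)/\eta)$ steps (the count behind Claim \ref{offdiag}), after which $O(k\log k/\eta)$ further steps in the linear regime force $g_t^y(x_i) \geq \sum_{p \in P_{y, \ell}(x_i)}\big(\beta_{i, p}\inrprod{w_{y, r}^{(t)}}{v_{y, \ell}} - \rho\big) = \Theta\big(\inrprod{w_{y, r}^{(t)}}{v_{y, \ell}}\big)$ above $\log k$; hence $T_y = O(\poly(k)/\eta)$, which is within the stated horizon, so $T_y$ is well defined.

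\emph{Off-diagonal bound at $T_y$.} By Equation \ref{offubound2}, while $M_t := \max_{r,\,s \neq y,\,\ell}\inrprod{w_{y, r}^{(t)}}{v_{s, \ell}}$ stays below $\rho/(\delta_2 - \delta_1)$ it satisfies $M_{t + 1} \leq M_t + \Theta\big(\eta P\delta_4^{\alpha} M_t^{\alpha - 1}/\rho^{\alpha - 1}\big)$, so $M_t$ merely doubling from its initial value $\Theta(\log k/\sqrt d)$ requires $\Omega\big(\rho^{\alpha - 1} d^{(\alpha - 2)/2}/(\eta P\delta_4^{\alpha}(\log k)^{\alpha - 2})\big)$ steps. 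Under Assumption \ref{as1} ($\alpha = 8$, so $P\delta_4^{\alpha} = \Theta(k^{-10})$, $\rho^{\alpha - 1} = \Theta(k^{-7})$, $d = \Omega(k^{20})$) this doubling time is a factor $\Theta(k^{9}/(\log k)^{6})$ larger than the $O(d^3/(\eta k^6)) + O(k\log k/\eta)$ upper bound on $T_y$ above --- the same timescale separation ``$T \gg T_A$'' used to prove Claim \ref{offdiag} --- so $M_t$ grows by only a $1 + o(1)$ factor over the first $T_y$ steps, giving $\max_{r,\,s \neq y,\,\ell}\inrprod{w_{y, r}^{(T_y)}}{v_{s, \ell}} = O(\log k/\sqrt d)$ and re-establishing the invariants at $t = T_y$.

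\emph{The ``furthermore''.} Continuing the induction past $T_y$: with the off-diagonal invariant in force, for $i \in N_y$ the $v_{y, \ell}$-component of $-\grad_{w_{y, r}^{(t)}} J$ is a nonnegative class-$y$ term minus a lower-order correction coming from noise patches of other classes (controlled by Corollary \ref{ermsmax2} together with $g_t^u(x_i) = o(1)$ for $u \neq y_i$), hence each diagonal correlation $\inrprod{w_{y, r}^{(t)}}{v_{y, \ell}}$ is non-decreasing; moreover once $g_t^y(x_i)$ exceeds a large enough constant multiple of $\log k$, Claim \ref{ermsmax} drives the per-step increase down to $O(\eta/k^{\omega(1)})$, which sums to $o(1)$ over the $O(\poly(k)/\eta)$-step horizon, so the diagonal correlations stay $O(\log k)$. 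Since the signal-patch part of $g_t^y(x_i)$ is therefore non-decreasing while the feature-noise patches of a class-$y$ point contribute only $o(1/\poly(k))$ in total (each is deep in the polynomial regime of $\relu$ by the off-diagonal bound), we obtain $g_t^y(x_i) \geq g_{T_y}^y(x_i) - o(1/\poly(k))$ for all $i \in N_y$ and $t \geq T_y$; in particular $\max_{i \in N_y} g_t^y(x_i) \geq \log k - o(1)$, so Claim \ref{gcontrol} applies and yields $\min_{i \in N_y} g_t^y(x_i) = \Omega(\log k)$.

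\textbf{Main obstacle.} The delicate point is the coupling: the softmax estimates of Claim \ref{ermsmax} that drive every growth and decay bound above require both the $O(\log k)$ ceiling on diagonal correlations \emph{and} the $O(\log k/\sqrt d)$ bound on off-diagonal ones, for \emph{all} classes at once, yet those bounds are themselves available only because the softmax stays in that regime --- so the argument cannot be localized to one class or one phase but must be carried as one joint induction over $t$, and the timescale comparison in the second step is precisely what certifies that the induction never breaks within the allowed $O(\poly(k)/\eta)$ iterations.
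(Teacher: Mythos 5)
Your proposal is correct and follows essentially the same route as the paper's proof: polynomial-regime growth of the strongest diagonal correlation into the linear regime of $\relu$, a $\Theta(\eta/k)$ per-step lower bound (via Claim \ref{ermsmax}) until $\max_{i \in N_y} g_t^y(x_i)$ crosses $\log k$, the timescale comparison with Equation \ref{offubound2} to keep the off-diagonal correlations at $O(\log(k)/\sqrt{d})$, and positivity/smallness of the updates together with Claim \ref{gcontrol} for the ``furthermore''. The only difference is presentational: you carry the softmax-regime conditions as an explicit joint induction over $t$, whereas the paper distributes the same bookkeeping across Claims \ref{offdiag}, \ref{ermgrowth}, and \ref{ermbound}.
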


\begin{subproof}[Proof of Claim \ref{ermgrowth}]
Applying Claim \ref{offdiag} to an arbitrary $y \in [k]$ yields the existence of a correlation $\inrprod{w_{y, r^*}^{(t)}}{v_{y, \ell^*}} \geq \rho/(\delta_2 - \delta_1)$. Reusing the logic of Claim \ref{offdiag}, but this time replacing  $\inrprod{w_{y, r^*}^{(t)}}{v_{y, \ell^*}}$ in Equation \ref{ermlbound2} with $\rho/(\delta_2 - \delta_1)$, shows that in $O(\poly(k))$ additional iterations we have $\inrprod{w_{y, r^*}^{(t)}}{v_{y, \ell^*}} \geq \rho/\delta_1$ (implying $r \in B_{y, \ell^*}^{(t)}$) while the off-diagonal correlations still remain within a constant factor of initialization.

Now we may lower bound the update to $\inrprod{w_{y, r^*}^{(t)}}{v_{y, \ell^*}}$ as (using Calculation \ref{ermgpdv2}):
\begin{align*}
    \inrprod{-\eta \grad_{w_{y, r^*}^{(t)}}J(g, \X)}{v_{y, \ell^*}} &\geq \frac{\eta}{N} \sum_{i \in N_y} \Big(1 - \SM^y\big(g_t(x_i)\big)\Big) \sum_{p \in P_{s, \ell}(x_i)} \beta_{i, p} \\
    &- \frac{\eta}{N} \sum_{i \notin N_y} \SM^y\big(g_t(x_i)\big) \Theta\left(\frac{P\delta_4^{\alpha} \max_{u \neq y, \ \ell' \in [2]} \inrprod{w_{y, r^*}^{(t)}}{v_{u, \ell'}}^{\alpha - 1}}{\rho^{\alpha - 1}}\right). \numberthis \label{ermlbound3}
\end{align*}

So long as $\max_{i \in N_y} g_{t}^y(x_i) < \log k$ (which is necessarily still the case at this point, as again $m\rho = \Theta(1)$), we have by the logic of Claim \ref{ermsmax} that we can simplify Equation \ref{ermlbound3} to:
\begin{align*}
    \inrprod{-\eta \grad_{w_{y, r^*}^{(t)}}J(g, \X)}{v_{y, \ell^*}} &\geq \Theta(\eta/k). \numberthis \label{ermlbound4}
\end{align*}
Where again we used the fact that $\abs{N_y} = \Theta(N/k)$. Now we can upper bound $T_y$ by the number of iterations it takes $\inrprod{w_{y, r^*}^{(t)}}{v_{y, \ell^*}}$ to grow to $\log(k)/\delta_1$. From Equation \ref{ermlbound4}, we clearly have that $T_y = O(\poly(k))$ for some polynomial in $k$. Furthermore, comparing to Equation \ref{offubound2}, we necessarily still have $\max_{r \in [m], s \neq y, \ell \in [2]} \inrprod{w_{y, r}^{(T_y)}}{v_{s, \ell}} = O\left(\log(k)/\sqrt{d}\right)$. Finally, as the update in Equation \ref{ermlbound3} is positive at $T_y$ (and the absolute value of a gradient update is $o(1)$), it follows that $\min_{i \in N_y} g_t^y(x_i) = \Omega(\log k)$ for all $t \geq T_y$ by Claim \ref{gcontrol}. 
\end{subproof}

The final remaining task is to show that $g_t^y(x_i) = O(\log k)$ and $g_t^s(x_i) = o(1)$ for all $t = O(\poly(k))$ and $i \in N_y$ for every $y \in [k]$. 

\begin{claim}\label{ermbound}
For all $t = O(k^C)$ for any universal constant $C$, and for every $y \in [k]$ and $s \neq y$, we have that $g_t^y(x_i) = O(\log k)$ and $\max_{r \in [m], s \neq y, \ell \in [2]} \inrprod{w_{y, r}^{(t)}}{v_{s, \ell}} = O\left(\log(k)/\sqrt{d}\right)$ for all $i \in N_y$.
\end{claim}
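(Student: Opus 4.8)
We work throughout under the conditions of Claim \ref{ermsetting}. The plan is to extend Claims \ref{offdiag} and \ref{ermgrowth} from the specific times $T_A, T_y$ they concern to every $t$ up to the training horizon $T = \poly(k)/\eta$ (and more generally any polynomial number of steps), by a simultaneous induction on $t$ that maintains, for each class $y$, two invariants: (II) the off-diagonal bound $\max_{r,\,s\neq y,\,\ell}\inrprod{w_{y,r}^{(t)}}{v_{s,\ell}} = O(\log k/\sqrt d)$, and (I) a uniform bound $\mathcal{S}_t \leq A\log k$ on the diagonal correlation mass $\mathcal{S}_t := \sum_{\ell\in[2]}\sum_{r\in[m]}\big(\inrprod{w_{y,r}^{(t)}}{v_{y,\ell}}\big)^+$, for a suitable constant $A$. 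Invariant (II) is essentially Claim \ref{offdiag} made global: the $v_{s,\ell}$-projection of each gradient step is governed by Equation \ref{offubound2}, whose right-hand side is itself a power of the (already tiny) off-diagonal correlation, so — exactly as in Equation \ref{polylogbound} — growing an off-diagonal correlation by even a constant factor takes $\Omega(k^{1.5\alpha-2}\rho^{\alpha-1}d^{\alpha/2-1}/\eta)$ iterations, exceeding $T$ once $d$ is a large enough polynomial in $k$ (Assumption \ref{as1} permits this); this remains true after the diagonal correlations enter the linear regime of $\relu$, since the patches carrying an off-diagonal feature contain it only with an $O(1)$ coefficient times an already-small off-diagonal correlation and so stay in the polynomial piece. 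Given (II), Claim \ref{ermsmax} and Corollary \ref{ermsmax2} apply, so the remaining work is (I) — and (I) implies the claim because $g_t^y(x_i) = \Theta(1)\,\mathcal{S}_t \pm O(1)$ for every $i \in N_y$ (the signal patches contribute $\Theta(1)\,\mathcal{S}_t$ up to the $\relu$ offset $O(m\rho)=O(1)$, the $O(k)$ neurons still in the polynomial regime contribute $O(m\rho)=O(1)$ in total, and feature-noise patches contribute $o(1)$).

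For invariant (I), the key estimate is that one step increases $\mathcal{S}_t$ by at most $\sum_{\ell}\sum_r \big|\inrprod{-\eta\,\grad_{w_{y,r}^{(t)}}J(g_t,\X)}{v_{y,\ell}}\big| \leq \Theta(\eta)\cdot \max_{i\in N_y}\big(1-\SM^y(g_t(x_i))\big)$, using $m = \Theta(k)$, $\abs{N_y}=\Theta(N/k)$, Calculations \ref{ermgpdv1}--\ref{ermgrad}, and $\relu'\leq 1$. For $t \geq T_y$ one moreover shows the step is positive on every $r \in B_{y,\ell}^{(t)}$ — the same-class linear-regime term dominates the cross-class term, which carries the extra factor $O(P\delta_4^{\alpha}(\log k/\sqrt d)^{\alpha-1}/\rho^{\alpha-1}) = o(1)$ — so $B_{y,\ell}^{(t)}$ is nondecreasing, each such correlation is nondecreasing, and hence $\mathcal{S}_t$ and $\max_i g_t^y(x_i)$ are nondecreasing up to a fixed additive $O(1)$. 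Combining the step estimate with Claim \ref{gcontrol} (which gives $\min_i g_t^y(x_i) \geq c\max_i g_t^y(x_i)$, so all softmax gaps are small once the max is well above $\log k$) and Claim \ref{ermsmax}, whenever $\max_i g_t^y(x_i) \geq a\log k$ with $ca>1$ the estimate improves to $\mathcal{S}_{t+1}-\mathcal{S}_t \leq O(\eta/k^{ca-1})$.

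Assembling these, set $A^{\ast} := (D+3)/c$ (with $T = k^D/\eta$) and let $T^{\ast}$ be the first $t \geq T_y$ with $\max_i g_t^y(x_i) \geq A^{\ast}\log k$ (if no such $t \leq T$ exists, then $\max_i g_t^y(x_i) < A^{\ast}\log k = O(\log k)$ for all $t \leq T$ and we are done). The one-step increase at $T^{\ast}$ is $O(1)$, so $\max_i g_{T^{\ast}}^y(x_i) \leq (A^{\ast}+1)\log k$; for $t\in[T^{\ast},T]$ the monotonicity gives $\max_i g_t^y(x_i) \geq A^{\ast}\log k - O(1)$, whence $\mathcal{S}_{t+1}-\mathcal{S}_t \leq O(\eta/k^{D+1})$, which sums to $O(1/k)$ over the remaining $\leq T$ steps; so $\max_i g_t^y(x_i) \leq (A^{\ast}+2)\log k$ throughout, establishing (I) with $A := A^{\ast}+2$ and hence the claim. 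The main obstacle is the evident circularity — the softmax estimates of Claim \ref{ermsmax} needed to bound $g_t^y(x_i)$ themselves presuppose the correlation bounds being proved — which is why the argument must be run as an induction on $t$ with (I)--(II) as the inductive hypothesis. The secondary subtlety is the weak throttling in the band where $\max_i g_t^y(x_i)$ is only slightly above $\log k$ (there $1-\SM^y$ can be $\Theta(1)$ and $\mathcal{S}$ grows by $\Theta(\eta)$ per step); this is handled above by noting that crossing the $j$-th $\Theta(\log k)$-wide band of $\mathcal{S}$ costs $\gtrsim (\log k/\eta)\,k^{cj-1}$ steps, so only $O(1)$ bands — an $O(\log k)$ increase in $\mathcal{S}$, hence in $g_t^y$ — fit within any polynomial horizon.
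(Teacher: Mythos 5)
Your proposal is correct and follows essentially the same route as the paper's proof: per-step growth of the class-$y$ correlations is throttled by $1-\SM^{y}(g_t(x_i))$, which Claims \ref{ermsmax} and \ref{gcontrol} make $O(k^{1-a})$ once $g_t^y$ exceeds $a\log k$, so crossing each additional $\log k$ band costs a factor $k$ more iterations and only $O(1)$ bands fit in any polynomial horizon, with the off-diagonal bound maintained exactly as in Claim \ref{offdiag} via Equation \ref{offubound2}. Your explicit induction with the horizon-dependent threshold $A^{\ast}\log k$ is just a different bookkeeping of the paper's geometric summation of the crossing times $T_a$.
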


\begin{subproof}[Proof of Claim \ref{ermbound}]
Let us again consider any class $y \in [k]$ and $t \geq T_y$. The idea is to show that $\max_{i \in N_y} 1 - \SM^y(g_t(x_i))$ is decreasing rapidly as $\min_{i \in N_y} g_t^y(x_i)$ grows to successive levels of $a\log k$ for $a > 1$.

Firstly, following Equation \ref{ermlbound3}, we can form the following upper bound for the gradient updates to $r \in B_{y, \ell}^{(t)}$:
\begin{align*}
    \inrprod{-\eta \grad_{w_{y, r^*}^{(t)}}J(g, \X)}{v_{y, \ell^*}} &\leq \frac{\eta}{N} \sum_{i \in N_y} \Big(1 - \SM^y\big(g_t(x_i)\big)\Big) \sum_{p \in P_{s, \ell}(x_i)} \beta_{i, p} \\ 
    &\leq \Big(1 - \min_{i \in N_y} \SM^y\big(g_t(x_i)\big)\Big) \Theta(\eta/k). \numberthis \label{ermubound}
\end{align*}
From Equation \ref{ermubound} it follows that it takes at least $\Theta(k\log(k)/(m\eta))$ iterations (since the correlations must grow at least $\log(k)/m$) from $T_y$ for $g_t^y(x_i)$ to reach $2\log k$. Now let $T_a$ denote the number of iterations it takes for $\min_{i \in N_y} g_t^y(x_i)$ to cross $a \log k$ \textit{after} crossing $(a - 1)\log k$ for the first time. For $a \geq 3$, we necessarily have that $T_a = \Omega(k T_{a - 1})$ by Claim \ref{ermsmax} and Equation \ref{ermubound}. 

Let us now further define $T_f$ to be the first iteration at which $\max_{i \in N_y} g_{T_f}^y(x_i) \geq f(k) \log k$ for some $f(k) = \omega(1)$. By Claim \ref{gcontrol}, at this point $\min_{i \in N_y} g_{T_f}^y(x_i) = \Omega(f(k) \log k)$. However, we have from the above discussion that:
\begin{align*}
    T_f &\geq \Omega(\poly(k)) + \sum_{a = 0}^{f(k) - 3} \Omega\left(\frac{k^a \log k}{\eta}\right) \\
    &\geq \Omega\left(\frac{\log k \left(k^{f(k) - 2} - 1\right)}{\eta (k - 1)}\right) \\
    &\geq \omega(\poly(k)). \numberthis \label{lilomega}
\end{align*}
So $\max_{i \in N_y} g_t^y(x_i) = O(\log k)$ for all $t = O(\poly(k))$. An identical analysis also works for the off-diagonal correlations $\inrprod{w_{y, r}^{(t)}}{v_{s, \ell}}$ but forming an upper bound using Equation \ref{offubound}, so we are done.
\end{subproof}

We get the following two corollaries as straightforward consequences of Claim \ref{ermbound}.
\begin{corollary}[Perfect Training Accuracy]\label{trainacc}
We have that there exists a universal constant $C$ such that $\argmax_s g_t^s(x_i) = y_i$ for every $(x_i, y_i) \in \X$ for all $t \geq k^C$ but with $t = O(\poly(k))$.
\end{corollary}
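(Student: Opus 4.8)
The plan is to read the corollary off the three technical claims already established in Part~I: Claim~\ref{ermsetting} controls initialization, Claim~\ref{ermgrowth} raises each correct-class output to $\Omega(\log k)$ and keeps it there, and Claim~\ref{ermbound} keeps every incorrect-class output negligible for polynomially many iterations. Since $\Omega(\log k)$ dominates $o(1)$, the argmax is pinned to $y_i$, and the only genuine bookkeeping is to produce a single constant $C$ that works for all classes simultaneously.

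First I would fix $C$. Claim~\ref{ermgrowth} gives, for each class $y$, an iteration $T_y = O(\poly(k))$ past which $\min_{i \in N_y} g_t^y(x_i) = \Omega(\log k)$; the polynomial degree here is the same for all $y$ because the proof of Claim~\ref{ermgrowth} uses only $|N_y| = \Theta(N/k)$ and the initialization estimates of Claim~\ref{ermsetting}, both of which hold uniformly over classes on the high-probability event we have conditioned on. So I would pick a universal constant $C$ with $k^C \geq \max_{y \in [k]} T_y$; then every $t \geq k^C$ satisfies $t \geq T_y$ simultaneously for all $y$, giving $\min_{i \in N_y} g_t^y(x_i) = \Omega(\log k)$ for every class.

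Next I would bound the wrong-class outputs. For $t = O(\poly(k))$, Claim~\ref{ermbound} supplies $\max_{r \in [m],\, s \neq y,\, \ell \in [2]} \inrprod{w_{y,r}^{(t)}}{v_{s,\ell}} = O(\log(k)/\sqrt{d})$ together with $g_t^y(x_i) = O(\log k)$, for every $y$. Substituting these into the estimate in the proof of Claim~\ref{ermsmax} — namely Equation~\ref{gtsbound}, which controls $g_t^s(x_i)$ for $s \neq y_i$ via exactly these off-diagonal correlations and the parameter choices $\delta_4 = \Theta(k^{-1.5})$, $\rho = \Theta(1/k)$, $P = \Theta(k^2)$, $\alpha = 8$ — yields $g_t^s(x_i) = O(\log(k)^{\alpha}/k^3) = o(1)$ for every $(x_i, y_i) \in \X$ and every $s \neq y_i$, the $2C_P$ low-order correlations $\inrprod{w_{s,r}^{(t)}}{v_{y_i,\ell}}$ contributing only a further lower-order term by the bound $d = \Omega(k^{20})$.

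Finally, for any $(x_i,y_i) \in \X$ and any $t$ with $k^C \leq t = O(\poly(k))$, combining the two steps gives $g_t^{y_i}(x_i) = \Omega(\log k) > o(1) = g_t^s(x_i)$ for all $s \neq y_i$, hence $\argmax_s g_t^s(x_i) = y_i$. I do not expect a real obstacle here, since the heavy lifting is done by Claims~\ref{ermgrowth} and~\ref{ermbound}; the one subtlety worth stating explicitly is the uniformity in $y$ of the polynomial bound on $T_y$, which is what licenses the single universal constant $C$.
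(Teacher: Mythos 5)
Your argument is correct and matches the paper's intended reasoning: the paper states this corollary as a direct consequence of Claims \ref{ermgrowth} and \ref{ermbound} (via the estimate in Claim \ref{ermsmax}), and your write-up simply fills in those same steps, including the useful remark on uniformity of $T_y$ over classes that justifies a single constant $C$. No gaps.
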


\begin{corollary}[Softmax Control]\label{sbound}
We have that for all $y \in [k]$ and any $t = O(\poly(k))$ for any polynomial in $k$ that $\max_{i \in N_y} \sum_{s \neq y} \exp(g_t^s(x_i)) = k + o(1)$.
\end{corollary}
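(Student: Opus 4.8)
The plan is to read this off from Claim \ref{ermbound} together with the intermediate bound derived in Equation \ref{gtsbound} inside the proof of Claim \ref{ermsmax}. First I would observe that Claim \ref{ermbound}, applied with the class index relabeled to range over all of $[k]$, furnishes exactly the two hypotheses of Claim \ref{ermsmax} for every $t = O(\poly(k))$: the off-diagonal bound $\max_{r \in [m], u \neq s, \ell \in [2]}\inrprod{w_{s,r}^{(t)}}{v_{u,\ell}} = O(\log(k)/\sqrt{d})$ is stated there verbatim, and the diagonal bound $\max_{s,r,\ell}\inrprod{w_{s,r}^{(t)}}{v_{s,\ell}} = O(\log k)$ follows from $g_t^s(x_j) = O(\log k)$ for $j \in N_s$ — indeed, on any signal patch of $x_j$ for which $w_{s,r}^{(t)}$ lies in the linear regime of $\relu$ the corresponding term equals $\beta_{j,p}\inrprod{w_{s,r}^{(t)}}{v_{s,\ell}} - O(\rho)$ with $\beta_{j,p} = \Theta(1)$, while weights outside the linear regime have correlation $< \rho/\delta_1 = O(1/k)$ trivially. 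Throughout, we work on the high-probability event of Claim \ref{ermsetting}, which is already standing in this proof.

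With the hypotheses of Claim \ref{ermsmax} in force, Equation \ref{gtsbound} directly yields, for every $i \in N_y$ and every $s \neq y$, that $\exp(g_t^s(x_i)) \le 1 + O((\log k)^{\alpha}/k^{3})$. The derivation there only uses that the patches of $x_i$ contributing to $g_t^s$ are the $\Theta(1)$ signal patches (features $v_{y,\ell}$, coefficients $\Theta(1)$, off-diagonal correlation $O(\log(k)/\sqrt{d})$ against $w_{s,r}^{(t)}$) and the $\Theta(k^2)$ feature-noise patches (features $v_{s_j,\ell}$, coefficients $\Theta(k^{-1.5})$), that all of these correlations sit far below $\rho$ and hence in the polynomial regime of $\relu$, and that $m = \Theta(k)$. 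The one sub-case worth flagging is when $s$ itself is among the noise classes $s_1,\dots,s_Q$ of $x_i$, so that $\inrprod{w_{s,r}^{(t)}}{v_{s,\ell}}$ is a diagonal correlation of size $O(\log k)$ rather than $O(\log(k)/\sqrt{d})$; this is precisely why Equation \ref{gtsbound} carries $\log(k)^{\alpha}$ (not $(\log(k)/\sqrt d)^{\alpha}$) in the numerator, and even then the full noise-patch correlation is only $O(k^{-1.5}\log k) = o(\rho)$, so the $\alpha$-th power in $\relu$ still crushes the contribution to $o(1/k)$.

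It then remains only to sum. Since $\relu \ge 0$ forces $g_t^s(x_i) \ge 0$ and hence $\exp(g_t^s(x_i)) \ge 1$, combining this with the upper bound above over the $k-1$ classes $s \neq y$ gives $\sum_{s \neq y}\exp(g_t^s(x_i)) = (k-1) + O((\log k)^{\alpha}/k^{2})$, which is $k + o(1)$ as claimed; the estimate is uniform in $i \in N_y$, so it is unaffected by the maximum over $i \in N_y$, and the range $t = O(\poly(k))$ is inherited verbatim from Claim \ref{ermbound}. There is no genuine obstacle here — this is a bookkeeping consequence of what has already been established — and the only items requiring any care are the relabeling of Claim \ref{ermbound} so that it holds for every class rather than a fixed one, and the noise-patch sub-case just discussed.
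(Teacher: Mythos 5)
Your proof is correct and follows essentially the same route the paper intends: the paper simply declares Corollary \ref{sbound} a ``straightforward consequence'' of Claim \ref{ermbound}, and your filling-in — verifying the hypotheses of Claim \ref{ermsmax} from Claim \ref{ermbound}, reusing the bound of Equation \ref{gtsbound} (including the noise-class sub-case that forces the $\log(k)^{\alpha}$ numerator), and summing over the $k-1$ off-classes using $\relu \ge 0$ — is exactly the intended bookkeeping. The only cosmetic point is that your computation yields $(k-1)+o(1)$ rather than literally $k+o(1)$, a constant-offset looseness already present in the paper's own statement and immaterial to its later uses.
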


Corollary \ref{trainacc} finishes this part of the proof.

\underline{\textbf{Part II.}}

For the next part of the proof, we characterize the separation between $\max_{r \in [m]} \inrprod{w_{y, r}^{(0)}}{v_{y, 1}}$ and $\max_{r \in [m]} \inrprod{w_{y, r}^{(0)}}{v_{y, 2}}$, and show that this separation (when it is significant enough) gets amplified over the course of training. To show this, we will rely largely on the techniques found in \cite{allenzhu2021understanding}, and finish in a near-identical manner to the proof of Claim \ref{ermbound}.

As with Part I, we first introduce some notation that we will use throughout this part of the proof:
\begin{align*}
    S_{y, \ell} = \frac{1}{N}\sum_{i \in N_y} \sum_{p \in P_{y, \ell}(x_i)}  \beta_{i, p}^{\alpha}, \quad\quad\quad
    \Lambda_{y, \ell}^{(t)} = \max_{r \in [m]} \inrprod{w_{y, r}^{(t)}}{v_{y, \ell}}.
\end{align*}

Here, $S_{y, \ell}$ represents the data-dependent quantities that show up in the gradient updates to the correlations during the phase of training in which the correlations are in the polynomial part of $\relu$, while $\Lambda_{y, \ell}^{(t)}$ represents the max class $y$ correlation with feature $v_{y, \ell}$ at time $t$.

Now we can prove essentially the same result as Proposition B.2 in \cite{allenzhu2021understanding}, which quantifies the separation between $\Lambda_{y, 1}^{(0)}$ and $\Lambda_{y, 2}^{(0)}$ after taking into account $S_{y, 1}$ and $S_{y, 2}$.

\begin{claim}[Feature Separation at Initialization]\label{ermsep}
For each class $y$, we have that either:
\begin{align*}
    \Lambda_{y, 1}^{(0)} &\geq \left(\frac{S_{y, 2}}{S_{y, 1}}\right)^{\frac{1}{\alpha - 2}} \left(1 + \Theta\left(\frac{1}{\log^2 k}\right)\right) \Lambda_{y, 2}^{(0)} \quad \text{or} \\
    \Lambda_{y, 2}^{(0)} &\geq \left(\frac{S_{y, 1}}{S_{y, 2}}\right)^{\frac{1}{\alpha - 2}} \left(1 + \Theta\left(\frac{1}{\log^2 k}\right)\right) \Lambda_{y, 1}^{(0)}
\end{align*}
with probability $1 - O\left(\frac{1}{\log k}\right)$.
\end{claim}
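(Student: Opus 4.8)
The plan is to reduce the dichotomy to a single one‑dimensional anti‑concentration statement about $\Lambda_{y,1}^{(0)}$. Fix a class $y$. Because $v_{y,1},v_{y,2}$ are orthonormal and the weights $w_{y,r}^{(0)}\sim\N(0,\tfrac1d\Id)$ are independent over $r$, the $2m$ correlations $\{\inrprod{w_{y,r}^{(0)}}{v_{y,\ell}}:r\in[m],\ell\in[2]\}$ are i.i.d.\ $\N(0,1/d)$; hence $\Lambda_{y,1}^{(0)}$ and $\Lambda_{y,2}^{(0)}$ are \emph{independent}, each distributed as the maximum of $m$ such Gaussians, and both are independent of $\X$. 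Moreover $S_{y,1},S_{y,2}$ are functions of $\X$ only, and on the event of Claim \ref{ermsetting} (so $\abs{N_y}=\Theta(N/k)$) the facts $\beta_{i,p}\in[\delta_1,\delta_2-\delta_1]=\Theta(1)$ and $\abs{P_{y,\ell}(x_i)}=C_P=\Theta(1)$ force $S_{y,1},S_{y,2}=\Theta(1/k)$ deterministically, so $c\triangleq(S_{y,2}/S_{y,1})^{1/(\alpha-2)}=\Theta(1)$.

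Next I would negate the dichotomy. If \emph{neither} displayed inequality holds, then, using $(S_{y,1}/S_{y,2})^{1/(\alpha-2)}=1/c$, simple algebra shows $\Lambda_{y,1}^{(0)}$ must lie in the interval $I=\big(\tfrac{c}{1+\epsilon}\Lambda_{y,2}^{(0)},\ c(1+\epsilon)\Lambda_{y,2}^{(0)}\big)$ with $\epsilon=\Theta(1/\log^2 k)$, whose length is $\Theta\big(\epsilon\,c\,\Lambda_{y,2}^{(0)}\big)$ and whose two endpoints are measurable with respect to $(\X,\Lambda_{y,2}^{(0)})$ alone. Applying Proposition \ref{prop2} to the single maximum $\Lambda_{y,2}^{(0)}$ (with $\sigma=1/\sqrt d$, $t=\Theta(\sigma\sqrt{\log m})$) gives $\Lambda_{y,2}^{(0)}=O(\sqrt{\log k/d})$ with probability $1-1/\poly(k)$, so on this good event together with the event of Claim \ref{ermsetting} we have $\abs{I}=O\big(1/(\sqrt d\,\log^{1.5}k)\big)$.

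The key step is then to condition on $\X$ and on $\Lambda_{y,2}^{(0)}$ — which freezes the interval $I$ — and apply the sharp anti‑concentration bound for maxima of Gaussians, Corollary \ref{cor2}: for every fixed $x$, $\Prob(\abs{\Lambda_{y,1}^{(0)}-x}\le\delta)\le 4\delta(1+\sqrt{2\log m})/\sigma=4\delta(1+\sqrt{2\log m})\sqrt d$. This is legitimate because $\Lambda_{y,1}^{(0)}$ is independent of $(\X,\Lambda_{y,2}^{(0)})$, so its conditional law is still that of a maximum of $m$ i.i.d.\ $\N(0,1/d)$'s. Taking $x$ the midpoint of $I$ and $\delta=\abs{I}/2$ yields
$\Prob(\Lambda_{y,1}^{(0)}\in I\mid\X,\Lambda_{y,2}^{(0)})=O\big(\tfrac{1}{\sqrt d\,\log^{1.5}k}\cdot\sqrt{\log k}\cdot\sqrt d\big)=O(1/\log k)$
on the good events. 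Integrating out the conditioning and adding the failure probabilities of Claim \ref{ermsetting} ($O(1/k)$) and of the upper bound on $\Lambda_{y,2}^{(0)}$ ($1/\poly(k)$), both $o(1/\log k)$, gives that the dichotomy holds with probability $1-O(1/\log k)$.

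I expect the main obstacle to be obtaining anti‑concentration of the \emph{right} order. A naive union bound over neurons, $\Prob(\Lambda_{y,1}^{(0)}\in I)\le\sum_r\Prob(\inrprod{w_{y,r}^{(0)}}{v_{y,1}}\in I)$, is useless: each term is $\Theta(\sqrt d\,\abs{I})$, so the sum is $\Theta(k/\log^{1.5}k)=\omega(1)$. One genuinely needs that the density of the \emph{maximum} near its typical value is only $\Theta(\sqrt{d\log m})$ rather than $\Theta(m\sqrt d)$ — exactly what Corollary \ref{cor2} provides — and the delicate part is arranging the conditioning so this bound applies, i.e.\ keeping $I$ independent of $\Lambda_{y,1}^{(0)}$ and controlling $\Lambda_{y,2}^{(0)}$ from above. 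A secondary, routine point is checking that the $\epsilon=\Theta(1/\log^2 k)$ slack is precisely what trades against the $\Theta(\sqrt{\log k})$ factor coming from $\E[\Lambda_{y,1}^{(0)}]/\sigma$ to land at probability $O(1/\log k)$.
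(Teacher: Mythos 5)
Your proposal is correct and follows essentially the same route as the paper's proof: negate the dichotomy to trap $\Lambda_{y,1}^{(0)}$ in a short interval determined by $\Lambda_{y,2}^{(0)}$ and the $\Theta(1)$ ratio $(S_{y,2}/S_{y,1})^{1/(\alpha-2)}$, exploit the independence of the two maxima to apply the Chernozhukov-type anti-concentration bound of Corollary \ref{cor2} conditionally, and control $\Lambda_{y,2}^{(0)} = O(\sqrt{\log m}/\sqrt{d})$ via Proposition \ref{prop2} to land at the $O(1/\log k)$ failure probability. Your explicit conditioning step and the remark that a per-neuron union bound would be too lossy are exactly the (implicit) content of the paper's argument, so there is nothing to fix.
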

\begin{subproof}[Proof of Claim \ref{ermsep}]
Suppose WLOG that $S_{y, 1} \geq S_{y, 2}$. If neither of the inequalities in the claim hold, then we have that:
\begin{align*}
    \Lambda_{y, 1}^{(0)} \in \left(\frac{S_{y, 2}}{S_{y, 1}}\right)^{\frac{1}{\alpha - 2}} \left(1 \pm \Theta\left(\frac{1}{\log^2 k}\right)\right) \Lambda_{y, 2}^{(0)}.
\end{align*}
Which follows from the fact that, for a constant $A$, we have:
\begin{align*}
    \frac{1}{1 + \frac{A}{\log^2 k}} \geq 1 - \frac{A}{\log^2 k}.
\end{align*}
Now we recall that $\Lambda_{y, 1}^{(0)}$ and $\Lambda_{y, 2}^{(0)}$ are both maximums over i.i.d. $\N(0, 1/d)$ variables (again, since the feature vectors are orthonormal), so we can apply Corollary \ref{cor2} (Gaussian anti-concentration) to $\Lambda_{y, 1}^{(0)}$ while taking $\epsilon = \left(S_{y, 2}/S_{y, 1}\right)^{\frac{1}{\alpha - 2}} \Theta\left(1/\log^2 k\right) \Lambda_{y, 2}^{(0)}$ and $x = \left(S_{y, 2}/S_{y, 1}\right)^{\frac{1}{\alpha - 2}}\Lambda_{y, 2}^{(0)}$. It is crucial to note that we can only do this because $\Lambda_{y, 2}^{(0)}$ is independent of $\Lambda_{y, 1}^{(0)}$, and both take values over all of $\R$. From this we get that:
\begin{align*}
    P\left(\Lambda_{y, 1}^{(0)} \in \left(S_{y, 2}/S_{y, 1}\right)^{\frac{1}{\alpha - 2}}\Lambda_{y, 2}^{(0)} \pm \epsilon\right) &\leq \frac{4\epsilon (1 + \sqrt{2\log m})}{\sigma} \\
    &= O\left(\frac{\sigma \sqrt{\log m}}{\log^2 k}\right) \Theta\left(\frac{\sqrt{\log m}}{\sigma}\right) \\
    &= O\left(\frac{1}{\log k}\right) \quad \text{with probability } 1 - \frac{1}{m}.
\end{align*}
Where we used the fact that $m = \Theta(k)$ and Proposition \ref{prop2} to characterize $\Lambda_{y, 2}^{(0)}$ (also noting that $S_{y, 2}/S_{y, 1}$ is $\Theta(1)$). Thus, neither of the inequalities hold with probability $O(1/\log k)$, so we have the desired result.
\end{subproof}

We can use the separation from Claim \ref{ermsep} to show that, in the initial stages of training, the max correlated weight/feature pair grows out of the polynomial region of $\relu$ and becomes large much faster than the correlations with the other feature for the same class. For $y \in [k]$, let $\ell^*$ be such that $\Lambda_{y, \ell^*}^{(0)}$ is the left-hand side of the satisfied inequality from Claim \ref{ermsep}. Additionally, let $r^* = \argmax_r \inrprod{w_{y, r}^{(0)}}{v_{y, \ell^*}}$, i.e. the strongest weight/feature correlation pair at initialization. We will show that when $\inrprod{w_{y, r^*}^{(t)}}{v_{y, \ell^*}}$ becomes $\Omega(\rho)$, the other correlations remain small. In order to do so, we need a useful lemma from \cite{allenzhu2021understanding} that we restate below.

\begin{lemma}[Lemma C.19 from \cite{allenzhu2021understanding}]\label{tensorlemma}
Let $q \geq 3$ be a constant and $x_0, y_0 = o(1)$. Let $\{x_t, y_t\}_{t \geq 0}$ be two positive sequences updated as
\begin{itemize}
    \item $x_{t+1} \geq x_t + \eta C_t x_t^{q - 1}$ for some $C_t = \Theta(1)$, and
    \item $y_{t+1} \leq y_t + \eta S C_t y_t^{q-1}$ for some constant $S = \Theta(1)$.
\end{itemize}
Where $\eta = O(1/\poly(k))$ for a sufficiently large polynomial in $k$. Suppose $x_0 \geq y_0 S^{\frac{1}{q - 2}} \left(1 + \Theta\left(\frac{1}{\polylog(k)}\right)\right)$. For every $A = O(1)$, letting $T_x$ be the first iteration such that $x_t \geq A$, we must have that
\begin{align*}
    y_{T_x} = O(y_0 \polylog(k)).
\end{align*}
\end{lemma}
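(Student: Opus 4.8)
This is a self-contained one-dimensional comparison, so the plan is the classical ``inverse-power'' trick. I would track $u_t := x_t^{-(q-2)}$ and $v_t := y_t^{-(q-2)}$, which are positive and (weakly) decreasing, and show that each drops by roughly $(q-2)\eta C_t$ (resp.\ $(q-2)S\eta C_t$) per step, so that the same ``effective elapsed time'' $\eta\sum_{t<T_x}C_t$ governs both; the factor $S$ together with the polylog head start then finishes the argument. The two elementary facts I would use are Bernoulli's inequality $(1+s)^{-r}\ge 1-rs$ valid for all $s\ge -1,\ r\ge 0$, and its reverse form $(1+s)^{-r}\le 1-rs+O(s^2)=1-rs(1-o(1))$, valid uniformly for $s=o(1)$. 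For the $y$-sequence, $y_{t+1}\le y_t(1+\eta SC_t y_t^{q-2})$ and Bernoulli (applicable with no smallness assumption, since the bracket is $\ge 1$) give $v_{t+1}\ge v_t-(q-2)\eta SC_t$ for \emph{every} $t$, hence, summing to $T_x-1$,
\[
    v_{T_x}\;\ge\; v_0-(q-2)\eta S\sum_{t<T_x}C_t .
\]

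For the $x$-sequence I would note that for $t<T_x$ we have $x_t<A=O(1)$, so $\eta C_t x_t^{q-2}=O(\eta)=o(1)$ and the reverse expansion applies to $x_{t+1}\ge x_t(1+\eta C_t x_t^{q-2})$, yielding $u_{t+1}\le u_t-(q-2)(1-O(\eta))\eta C_t$ for $0\le t\le T_x-1$ (the last step is legitimate because $x_{T_x-1}<A$, even if $x_{T_x}$ overshoots $A$). Summing and using $u_{T_x}\ge 0$ gives the time budget $(q-2)\eta\sum_{t<T_x}C_t\le (1+O(\eta))u_0$. Inserting this into the previous display,
\[
    v_{T_x}\;\ge\; v_0-(1+O(\eta))\,S\,u_0 .
\]

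Finally I would invoke the hypothesis, which rewrites as $u_0\le v_0 S^{-1}\big(1+\Theta(1/\polylog(k))\big)^{-(q-2)}\le v_0 S^{-1}\big(1-\Theta(1/\polylog(k))\big)$, so that
\[
    v_{T_x}\;\ge\; v_0\Big(1-(1+O(\eta))\big(1-\Theta(1/\polylog(k))\big)\Big)\;=\;\Theta\!\left(\frac{v_0}{\polylog(k)}\right),
\]
since $\eta=O(1/\poly(k))$ makes the $O(\eta)$ term negligible against the $\Theta(1/\polylog(k))$ margin. Inverting, $y_{T_x}^{q-2}=v_{T_x}^{-1}=O(\polylog(k))\cdot y_0^{q-2}$, and as $q-2\ge 1$ is a constant this is exactly $y_{T_x}=O(y_0\,\polylog(k))$.

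\textbf{Main obstacle.} The only real subtlety is quantitative bookkeeping: I must ensure the additive $O(\eta)$ errors (from the reverse Bernoulli expansion and from summing) remain asymptotically below the $\Theta(1/\polylog(k))$ head start, which is precisely why the lemma demands $\eta$ smaller than every polynomial in $k$ and states the initialization gap with a polylog (rather than $O(\eta)$) cushion. A secondary point to get right is the asymmetry between the two sides: the $y$-side telescoping must use Bernoulli in the form needing no smallness of the increment (since $y_t$ is not a priori bounded), whereas the $x$-side needs $x_t=O(1)$, which holds for $t<T_x$ by definition, and the possible overshoot of $x$ past $A$ at step $T_x$ is harmless because there we only use $u_{T_x}\ge 0$.
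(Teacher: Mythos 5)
Your argument is correct, but note that the paper itself contains no proof of this statement: Lemma \ref{tensorlemma} is imported verbatim from \citet{allenzhu2021understanding} (their Lemma C.19) and used as a black box, so there is nothing internal to compare against. What you have supplied is a clean, self-contained alternative to the argument in the cited source, which (in the original reference) proceeds by a stage-wise analysis — bounding the number of iterations for $x_t$ to pass through successive geometric scales and controlling the multiplicative growth of $y_t$ within each stage — whereas you make the underlying ODE comparison rigorous in one shot via the inverse-power potentials $u_t=x_t^{-(q-2)}$, $v_t=y_t^{-(q-2)}$, which decrease by essentially $(q-2)\eta C_t$ and $(q-2)\eta S C_t$ per step, so the common budget $\eta\sum_{t<T_x}C_t\le (1+O(\eta))\,u_0/(q-2)$ converts the $S^{1/(q-2)}(1+\Theta(1/\polylog(k)))$ head start directly into $v_{T_x}=\Omega(v_0/\polylog(k))$, i.e.\ $y_{T_x}=O(y_0\polylog(k))$. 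The bookkeeping is handled correctly: the one-sided Bernoulli bound on the $y$-side needs no smallness of the increment, the reverse expansion on the $x$-side is justified because $x_t<A=O(1)$ and $C_t=\Theta(1)$ for $t<T_x$ so the per-step increment is $O(\eta)$, the overshoot at $T_x$ only enters through $u_{T_x}\ge 0$, and the $O(\eta)$ errors are indeed negligible against the $\Theta(1/\polylog(k))$ margin since $\eta=O(1/\poly(k))$. One immaterial nit: your parenthetical claim that $v_t$ is weakly decreasing is not justified (the $y$-update is only an upper bound, so $y_t$ need not be monotone), but nothing in the argument uses it; also, your exponent gives the slightly stronger $y_{T_x}=O\big(y_0\,\polylog(k)^{1/(q-2)}\big)$, which of course implies the stated bound.
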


To apply Lemma \ref{tensorlemma} in our setting, we first prove the following claim.
\begin{claim}\label{atbt}
For a class $y \in [k]$, we define the following two sequences:
\begin{align*}
    a_{y, t} = \left(\frac{S_{y, \ell^*}}{\rho^{\alpha - 1}}\right)^{\frac{1}{\alpha - 2}}\inrprod{w_{y, r^*}^{(t)}}{v_{y, \ell^*}} \quad \text{and} \quad b_{y, t} = \left(\frac{S_{y, 3 - \ell^*}}{\rho^{\alpha - 1}}\right)^{\frac{1}{\alpha - 2}}\inrprod{w_{y, r}^{(t)}}{v_{y, 3-\ell^*}}.
\end{align*}
Where the $r$ in the definition of $b_{y, t}$ is arbitrary. Then with probability $1 - O\left(\frac{1}{\log k}\right)$ there exist $C_t, S = \Theta(1)$ such that for all $t$ for which $\inrprod{w_{y, r^*}^{(t)}}{v_{y, \ell^*}} < \rho/(\delta_2 - \delta_1)$:
\begin{align*}
    a_{y, t+1} &\geq a_{y, t} + \eta C_t a_{y, t}^{\alpha - 1} \\
    b_{y, t+1} &\leq b_{y, t} + \eta S C_t b_{y, t}^{\alpha - 1}.
\end{align*}
Additionally (with the same probability), we have that $a_{y, 0} \geq S^{\frac{1}{\alpha - 2}} \left(1 + \Theta\left(\frac{1}{\polylog(k)}\right)\right) b_{y, 0}$.
\end{claim}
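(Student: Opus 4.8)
The claim is essentially a change-of-variables so that Lemma \ref{tensorlemma} applies directly, so the proof splits into two parts: (i) establishing the recursive inequalities for $a_{y,t}$ and $b_{y,t}$ from the gradient update formula, and (ii) translating the feature separation at initialization (Claim \ref{ermsep}) into the required initial-condition inequality $a_{y,0} \ge S^{1/(\alpha-2)}(1 + \Theta(1/\polylog(k))) b_{y,0}$. For part (i), I would start from the gradient update $\inrprod{-\eta \grad_{w_{y,r^*}^{(t)}} J(g,\X)}{v_{y,\ell^*}}$, use Calculation \ref{ermgpdv1} (since while $\inrprod{w_{y,r^*}^{(t)}}{v_{y,\ell^*}} < \rho/(\delta_2 - \delta_1)$ we are in the polynomial regime on every class-$y$ signal patch), and plug in Corollary \ref{sbound} / Claim \ref{ermsmax} to argue that the softmax-derived prefactor $\frac{1}{N}\sum_{i \in N_y}(1 - \SM^y(g_t(x_i)))$ is $\Theta(1)$ throughout this phase (using that $g_t^y(x_i) = o(\log k)$ while $r^*$ is still in the polynomial region, which follows since $m\rho = \Theta(1)$). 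The off-diagonal contamination terms are negligible by Claim \ref{offdiag}. This gives $\inrprod{w_{y,r^*}^{(t+1)}}{v_{y,\ell^*}} \ge \inrprod{w_{y,r^*}^{(t)}}{v_{y,\ell^*}} + \eta \Theta(1) \frac{S_{y,\ell^*}}{\rho^{\alpha-1}} \inrprod{w_{y,r^*}^{(t)}}{v_{y,\ell^*}}^{\alpha-1}$, and multiplying through by $(S_{y,\ell^*}/\rho^{\alpha-1})^{1/(\alpha-2)}$ turns this exactly into $a_{y,t+1} \ge a_{y,t} + \eta C_t a_{y,t}^{\alpha-1}$ with $C_t = \Theta(1)$; the analogous upper bound for any other weight $w_{y,r}$ correlated with $v_{y,3-\ell^*}$ (still in the polynomial regime since its correlation is even smaller) gives $b_{y,t+1} \le b_{y,t} + \eta S C_t b_{y,t}^{\alpha-1}$ once one checks the ratio of data-dependent constants $S_{y,3-\ell^*}/S_{y,\ell^*}$ is absorbed into a fixed $S = \Theta(1)$ — note $S_{y,1}, S_{y,2}$ are both $\Theta(1/k)$ by Claim \ref{ermsetting} and the fact that $\beta_{i,p} = \Theta(1)$.

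For part (ii), I would take the satisfied inequality from Claim \ref{ermsep}, say (WLOG) $\Lambda_{y,\ell^*}^{(0)} \ge (S_{y,3-\ell^*}/S_{y,\ell^*})^{1/(\alpha-2)}(1 + \Theta(1/\log^2 k)) \Lambda_{y,3-\ell^*}^{(0)}$, and multiply both sides by $(S_{y,\ell^*}/\rho^{\alpha-1})^{1/(\alpha-2)}$. Since $\Lambda_{y,\ell^*}^{(0)} = \inrprod{w_{y,r^*}^{(0)}}{v_{y,\ell^*}}$ by definition of $r^*$, the left side becomes $a_{y,0}$. On the right, the product of feature factors simplifies: $(S_{y,\ell^*}/\rho^{\alpha-1})^{1/(\alpha-2)} (S_{y,3-\ell^*}/S_{y,\ell^*})^{1/(\alpha-2)} = (S_{y,3-\ell^*}/\rho^{\alpha-1})^{1/(\alpha-2)}$, so (taking the $r$ in $b_{y,0}$ to be the maximizer, which only makes the claim harder and hence suffices) the right side is $(1+\Theta(1/\log^2 k)) b_{y,0}$ — but we need a factor of $S^{1/(\alpha-2)}$ in front. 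This is where I'd use that $S = \Theta(1)$ can be chosen to be an arbitrarily small constant if $S_{y,3-\ell^*}/S_{y,\ell^*} \le 1$ (which holds under our WLOG choice since $\ell^*$ corresponds to the larger $S$); more carefully, $S$ in Lemma \ref{tensorlemma} is the ratio of the growth constants, which here is $\Theta(S_{y,3-\ell^*}/S_{y,\ell^*}) \le \Theta(1)$, and the same $(S_{y,3-\ell^*}/S_{y,\ell^*})^{1/(\alpha-2)}$ factor that makes $S^{1/(\alpha-2)}$ large is exactly the factor already present on the right-hand side of Claim \ref{ermsep}, so the two cancel and leave the clean $(1+\Theta(1/\polylog(k)))$ slack. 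The probability $1 - O(1/\log k)$ is inherited directly from Claim \ref{ermsep}.

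\textbf{Main obstacle.} The delicate point is bookkeeping the data-dependent constants $S_{y,\ell}$ consistently between the recursion constants ($C_t$, $S$) and the initialization inequality so that the $S^{1/(\alpha-2)}$ prefactor required by Lemma \ref{tensorlemma} matches what Claim \ref{ermsep} actually provides — in particular verifying that the ratio appearing as "$S$" in the recursion is the \emph{same} ratio $S_{y,3-\ell^*}/S_{y,\ell^*}$ (up to a universal $\Theta(1)$ fudge) that appears inside Claim \ref{ermsep}, rather than its reciprocal, so the factors cancel rather than compound. A secondary technical point is confirming that the softmax prefactor stays $\Theta(1)$ (bounded away from both $0$ and $1$) for the \emph{entire} duration that $\inrprod{w_{y,r^*}^{(t)}}{v_{y,\ell^*}} < \rho/(\delta_2-\delta_1)$, which requires knowing $g_t^y(x_i)$ has not yet reached $\Omega(\log k)$ during this phase — this follows because all correlations are still $O(\rho/(\delta_2 - \delta_1)) = O(1/k)$ and $mP C_P \cdot \frac{1}{\alpha}\rho = O(1)$, so $g_t^y(x_i) = O(1) = o(\log k)$, but it needs to be stated. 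Everything else is routine substitution.
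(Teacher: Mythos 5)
Your proposal is correct and follows essentially the same route as the paper's proof: the two recursions come from the polynomial-regime gradient bounds (the paper's Equations \ref{glbound} and \ref{gubound}, i.e.\ Equation \ref{ermlbound} with the softmax prefactor controlled to $1-\Theta(1/k)$ via Claim \ref{ermsmax}/Corollary \ref{sbound}), and the initialization inequality is exactly Claim \ref{ermsep} with the $(S_{y,3-\ell^*}/S_{y,\ell^*})^{1/(\alpha-2)}$ factor cancelling against the rescaling/recursion constant, which is the cancellation you identify as the main obstacle. Your momentary wobble about "choosing" $S$ resolves the same way the paper does — $S$ is determined as $(S_{y,3-\ell^*}/S_{y,\ell^*})(1+\Theta(1/k))$ (with $C_t = 1-\Theta(1/k)$), the same ratio (not its reciprocal) supplied by Claim \ref{ermsep}, so the bookkeeping matches.
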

\begin{subproof}[Proof of Claim \ref{atbt}]
The update to $\inrprod{w_{y, r^*}^{(t)}}{v_{y, \ell^*}}$ in this regime can be bounded as follows (using Corollary \ref{sbound} and recalling Equation \ref{ermlbound}):
\begin{align*}
    \inrprod{-\eta \grad_{w_{y, r^*}^{(t)}}J(g, \X)}{v_{y, \ell^*}} &\geq \frac{\eta}{N} \sum_{i \in N_y} \Big(1 - \SM^y\big(g_t(x_i)\big)\Big) \sum_{p \in P_{s, \ell}(x_i)} \frac{\beta_{i, p}^{\alpha}\inrprod{w_{y, r}^{(t)}}{v_{y, \ell}}^{\alpha - 1}}{\rho^{\alpha - 1}} \\
    &- \frac{\eta}{N} \sum_{i \notin N_y} \SM^y\big(g_t(x_i)\big) \Theta\left(\frac{P\delta_4^{\alpha} \max_{u \neq y, \ \ell' \in [2]} \inrprod{w_{y, r}^{(t)}}{v_{u, \ell'}}^{\alpha - 1}}{\rho^{\alpha - 1}}\right) \\
    &\geq \eta \left(1 - \Theta\left(\frac{1}{k}\right)\right) S_{y, \ell^*} \frac{\inrprod{w_{y, r^*}^{(t)}}{v_{y, \ell^*}}^{\alpha - 1}}{\rho^{\alpha - 1}}. \numberthis \label{glbound}
\end{align*}
Similarly, we have (noting also that $\inrprod{w_{y, r}^{(t)}}{v_{y, 3-\ell^*}} < \rho/(\delta_2 - \delta_1)$):
\begin{align*}
    \inrprod{-\eta \grad_{w_{y, r}^{(t)}}J(g, \X)}{v_{y, 3 - \ell^*}} \leq \eta S_{y, 3-\ell^*} \frac{\inrprod{w_{y, r}^{(t)}}{v_{y, 3-\ell^*}}^{\alpha - 1}}{\rho^{\alpha - 1}}. \numberthis \label{gubound}
\end{align*}
Multiplying the above inequalities by $\left(S_{y, \ell^*}/\rho^{\alpha - 1}\right)^{\frac{1}{\alpha - 2}}$, we see that $a_{y, t}$ and $b_{y, t}$ satisfy the inequalities in the claim with $C_t = 1 - \Theta\left(\frac{1}{k}\right)$ and $S = \left(S_{y, 3-\ell^*}/S_{y, \ell^*}\right) \left(1 + \Theta\left(\frac{1}{k}\right)\right)$. Now by Claim \ref{ermsep} we have:
\begin{align*}
    a_{y, 0} &\geq \left(\frac{S_{y, 3-\ell^*}}{S_{y, \ell^*}}\right)^{\frac{1}{\alpha - 2}} \left(1 + \Theta\left(\frac{1}{\log^2 k}\right)\right) b_{y, 0} \\
    &\geq S^{\frac{1}{\alpha - 2}} \left(1 + \Theta\left(\frac{1}{\polylog(k)}\right)\right) b_{y, 0}
\end{align*}
so we are done.
\end{subproof}

Now by the fact that $\abs{N_y} = \Theta(N/k)$, we have $S_{y, 1}, S_{y, 2} = \Theta\left(1/k\right) = O(\rho)$, which implies that $\left(S_{y, \ell^*}/\rho^{\alpha - 1}\right)^{1/(\alpha - 2)} = O\left(1/\rho\right)$. From this we get that while $a_{y, t} < C/(\delta_2 - \delta_1)$ for some appropriately chosen constant $C$, we have $\inrprod{w_{y, r^*}^{(t)}}{v_{y, \ell^*}} < \rho/(\delta_2 - \delta_1)$. 

Since Claim \ref{atbt} holds in this regime, we can apply Lemma \ref{tensorlemma} with $A = C/(\delta_2 - \delta_1)$, which gives us that when $a_{y, t} \geq C/(\delta_2 - \delta_1)$, we have $b_{y, t} = O(b_{y, 0} \polylog(k))$. From this we obtain that when $\inrprod{w_{y, r^*}^{(t)}}{v_{y, \ell^*}} \geq \rho/(\delta_2 - \delta_1)$ we have that $\inrprod{w_{y, r}^{(t)}}{v_{y, 3-\ell^*}}$ is still within a $\polylog(k)$ factor of $\inrprod{w_{y, r}^{(0)}}{v_{y, 3-\ell^*}}$ for any $r$.

Now from the same logic as the proof of Claim \ref{ermbound}, we can show that this separation remains throughout training.

\begin{claim}\label{trainsep}
For any class $y \in [k]$, with probability $1 - O(1/\log k)$, we have that $\max_{r \in [m]} \inrprod{w_{y, r}^{(t)}}{v_{y, 3-\ell^*}} = O(\polylog(k)\max_{r \in [m]} \inrprod{w_{y, r}^{(0)}}{v_{y, 3-\ell^*}})$ for all $t = O(\poly(k))$ for any polynomial in $k$.
\end{claim}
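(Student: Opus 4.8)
The plan is to reuse the level-crossing bookkeeping from the proof of Claim \ref{ermbound}, now applied to the lagging correlation $b_t \triangleq \Lambda_{y,3-\ell^*}^{(t)}$ rather than to $g_t^y$. The discussion preceding this claim (Lemma \ref{tensorlemma} together with Claim \ref{atbt}) already establishes, at the first iteration $T_0 = O(\poly(k))$ at which $\inrprod{w_{y,r^*}^{(T_0)}}{v_{y,\ell^*}} \ge \rho/(\delta_2-\delta_1)$, the bound $b_{T_0} = O\big(\polylog(k)\,\Lambda_{y,3-\ell^*}^{(0)}\big)$, where $\Lambda_{y,3-\ell^*}^{(0)} = \Theta(1/\sqrt d)$ by Claim \ref{ermsetting}. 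It also remains true throughout that the off-diagonal correlations stay $O(\log k/\sqrt d)$ (Claims \ref{offdiag}, \ref{ermbound}), that $g_t^y(x_i) = O(\log k)$ for all $i \in N_y$ and all $t = O(\poly(k))$, and that $g_t^y(x_i) = \Omega(\log k)$ for all $i \in N_y$ once $t \ge T_y = O(\poly(k))$ (Claims \ref{ermgrowth}, \ref{ermbound}). What is left is to control $b_t$ for $t \ge T_0$, which I would do by a bootstrap: assuming $b_s \le 2 b_{T_0}$ for all $s \le t$, deduce $b_{t+1} \le 2 b_{T_0}$.

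For the inductive step, the bootstrap hypothesis gives $b_t \ll \rho/(\delta_2-\delta_1) = \Theta(1/k)$, so every patch whose $v_{y,3-\ell^*}$-component is nonzero lies in the polynomial piece of $\relu$. Combining the first line of Calculation \ref{ermgpdv1} (applied to $v_{y,3-\ell^*}$ in place of $v_{y,\ell}$), Calculation \ref{ermgrad}, $\abs{N_y} = \Theta(N/k)$, and $S_{y,3-\ell^*} = \Theta(1/k)$, the increment obeys
\begin{align*}
b_{t+1} - b_t \;\le\; \eta\Big(1 - \min_{i \in N_y} \SM^y\big(g_t(x_i)\big)\Big)\, S_{y,3-\ell^*}\, \frac{b_t^{\alpha-1}}{\rho^{\alpha-1}} \;+\; (\text{feature-noise terms}),
\end{align*}
where the cross-class feature-noise terms (which, if anything, decrease $b_t$) are bounded exactly as in the proofs of Claims \ref{offdiag} and \ref{ermbound} and contribute a total of $o\big(\Lambda_{y,3-\ell^*}^{(0)}\big)$ over the whole horizon. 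For the main term I would sum over iterations as in Claim \ref{ermbound}: during the window on which $\min_{i \in N_y} g_t^y(x_i) \in [a\log k, (a+1)\log k)$ with $a \ge 1$, Claim \ref{ermsmax} gives $1 - \min_i \SM^y(g_t(x_i)) = O(k^{-\max(a-1,0)})$, while that window has length $\Theta\big(k^{a-1}\log k/(m\eta)\big)$ by the same $T_a$ recursion used in Claim \ref{ermbound}; multiplying these out with $S_{y,3-\ell^*} = \Theta(1/k)$, $\rho = \Theta(1/k)$, $m = \Theta(k)$ shows the accumulated increase of $b$ over the window is $O\big(k^{\alpha-2}\polylog(k)\big)\, b_{T_0}^{\alpha-1}$, independent of $a$, and a separate short estimate handles the window $[T_0, T_y]$ (where $g^y$ grows at rate $\Theta(\eta/k)$, so it has length $O(k\log k/\eta)$ and contributes the same order). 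Since $g_t^y = O(\log k)$ for all $t = O(\poly(k))$, only $O(1)$ such windows fall inside any fixed polynomial horizon, so the total growth of $b$ from $T_0$ onward is $O\big(k^{\alpha-2}\polylog(k)\big)\, b_{T_0}^{\alpha-1}$; with $\alpha = 8$, $d = \Omega(k^{20})$ and $b_{T_0} = O(\polylog(k)/\sqrt d)$ this is $\polylog(k)\, k^{-\Theta(1)} = o(b_{T_0})$, which closes the bootstrap and gives $b_t = O\big(\polylog(k)\,\Lambda_{y,3-\ell^*}^{(0)}\big)$ for all $t = O(\poly(k))$. The failure probability $O(1/\log k)$ is inherited from Claim \ref{ermsep}; all other events used hold with probability $1 - O(1/k)$.

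The step I expect to be the main obstacle is precisely that the driving term is $b_t^{\alpha-1}$ with $\alpha - 1 \ge 2$: discarding the softmax coefficient, the recursion $b_{t+1} \le b_t + c\eta\, b_t^{\alpha-1}$ blows up after only $O(b_0^{2-\alpha}/\eta) = O(\poly(k)/\eta)$ effective steps, well within the training horizon, so the argument genuinely relies on the geometric decay of $1 - \min_i \SM^y(g_t(x_i))$ — which is in turn a consequence of $g^y$ saturating at $\Theta(\log k)$, capping the winning correlation and thereby removing the driving force for the lagging one — outrunning the polynomial-in-$k$ growth of the iteration count across levels. Everything else is routine; the only quantitative content is checking that $k^{\alpha-2} b_{T_0}^{\alpha-1}$ is a negative power of $k$, which is exactly where $d$ being a sufficiently large polynomial in $k$ (Assumption \ref{as1}) is used.
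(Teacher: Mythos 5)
Your proposal follows essentially the same route as the paper: control the lagging correlation up to the crossing time via Claim \ref{atbt} and Lemma \ref{tensorlemma} (as the paper does just before the claim), then reuse the level-crossing bookkeeping of Claims \ref{ermgrowth} and \ref{ermbound} — the geometric decay of $1-\SM^y$ against the iteration counts per level — to show that any further growth over a polynomial horizon is at most a $\polylog(k)$ factor, which is exactly what the paper compresses into ``identical logic to Claim \ref{ermbound}.'' One minor caution: the $T_a$ recursion in Claim \ref{ermbound} only lower-bounds window lengths, so in your accumulation you should pair the upper bound on $1-\SM^y$ at level $a$ with a matching upper bound on the window length (which follows from the gradient lower bound and the fact that $1-\SM^y = \Omega(k^{-a})$ while $g_t^y = O(\log k)$), though the slack from $d = \Omega(k^{20})$ makes this adjustment harmless.
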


\begin{subproof}[Proof of Claim \ref{trainsep}]
It follows from the same logic as in the proof of Claim \ref{ermgrowth} that at the first iteration $t$ for which we have $\min_{i \in N_y} g_t^y(x_i) \geq \log k$, we still have $\inrprod{w_{y, r}^{(t)}}{v_{y, 3-\ell^*}}$ is within some $\polylog(k)$ factor of initialization (here the correlation $\inrprod{w_{y, r}^{(t)}}{v_{y, 3-\ell^*}}$ can be viewed as the same as an off-diagonal correlation from the proof of Claim \ref{ermgrowth}). The rest of the proof then follows from identical logic to that of Claim \ref{ermbound}; namely, we can show that for $\inrprod{w_{y, r}^{(t)}}{v_{y, 3-\ell^*}}$ to grow by more than a $\polylog(k)$ factor we need $\omega(\poly(k))$ training iterations.
\end{subproof}

From Claim \ref{trainsep} along with Claim \ref{ermbound}, it follows that with probability $1 - O(1/\log k)$, for any class $y$ (after polynomially many training iterations) we have:
\begin{align*}
    g_t^y(x_i') &= O\left(\frac{m\polylog(k)}{\sqrt{d}}\right) = O\left(\frac{k\polylog(k)}{\sqrt{d}}\right), \\
    g_t^s(x_i') &= \Omega\left(P\frac{\log(k)^{\alpha}}{\rho^{\alpha - 1}k^{2.5\alpha}}\right) = \omega\left(\frac{k\polylog(k)}{\sqrt{d}}\right) \quad \text{for } s \neq y, \ \text{if } \exists \ P_{s, \ell}(x_i) \neq \emptyset. \numberthis \label{notlearned}  
\end{align*}
Where $x_i'$ is any point $x_i$ with $i \in N_y$ modified so that all instances of feature $v_{y, \ell^*}$ are replaced by 0, and the second line above follows from the fact that by Claim \ref{ermbound} we must have $\inrprod{w_{s, r}^{(t)}}{v_{s, \ell}} = \Omega(\log(k)/k)$ for at least some $r, \ell$ for every $s$ (and $d = \Theta(k^{20})$). This proves that feature $v_{y, 3-\ell^*}$ is not learned in the sense of Definition \ref{flearn}.

Using Claim \ref{trainsep} for each class, we have by a Chernoff bound that with probability at least $1 - o(1/k)$ that for $(1 - o(1))k$ classes only a single feature is learned, which proves the theorem.
\end{proof}

\subsection{Proof of Theorem \ref{mixupresult}}

\mixuptheorem*

\begin{proof}
As in the proof of Theorem \ref{ermresult}, we break the proof into two parts. The first part mirrors most of the structure of Part I of the proof of Theorem \ref{ermresult}, in that we analyze the off-diagonal correlations and also show that the network outputs $g_t^y$ can grow to (and remain) $\Omega(\log k)$ as training progresses. However, we do not show that the outputs stay $O(\log k)$ in Part I (as we did in the ERM case), as there are additional subtleties in the Midpoint Mixup analysis that require different techniques which we find are more easily introduced separately.

The second part of the proof differs significantly from Part II of the proof of Theorem \ref{ermresult}, as our goal is to now show that any large separation between weight-feature correlations for each class are corrected over the course of training. At a high level, we show this by proving a gradient correlation lower bound that depends only on the magnitude of the separation between correlations and the variance of the feature coefficients in the data distribution, after which we can conclude that any feature lagging behind will catch up in polynomially many training iterations. We then use the techniques from the gradient lower bound analysis to prove that the network outputs $g_t^y$ stay $O(\log k)$ throughout training, which wraps up the proof.

\underline{\textbf{Part I.}}
We first recall that $z_{i, j} = (x_i + x_j)/2$, and we refer to such $z_{i, j}$ as ``mixed points''. In this part of the proof, we show that $g_t^{y_i}(z_{i, j})$ crosses $\log k$ on at least one mixed point $z_{i, j}$ in polynomially many iterations (after which the network outputs remain $\Omega(\log k)$). As before, this requires getting a handle on the off-diagonal correlations $\inrprod{w_{y, r}^{(t)}}{v_{s, \ell}}$ (with $s \neq y$).

Throughout the proof, we will continue to rely on the notation introduced in Equation \ref{notation} in the proof of Theorem \ref{ermresult}. However, we make one slight modification to the definition of $B_{y, \ell}^{(t)}$ for the Mixup case (so as to be able to handle mixed points), which is as follows:
\begin{align}
    B_{y, \ell}^{(t)} = \{ r : r \in [m] \text{ and } \inrprod{w_{y, r}^{(t)}}{v_{y, \ell}} \geq 2\rho/\delta_1 \}. \label{bylmixup}
\end{align}
We again start by proving a claim that will constitute our setting for the rest of this proof. We reiterate again that when we write inequalities with respect to asymptotic notation, we are implicitly making some \textit{fixed} choice of bounding function, so that negations of such statements are well-defined.

\begin{claim}\label{mixsetting}
With probability $1 - O(1/k)$, all of the following are (simultaneously) true for every class $y \in [k]$:
\begin{itemize}
    \item $\abs{N_y} = \Theta(N/k)$
    \item $\max_{s \in [k], r \in [m], \ell \in [2]} \inrprod{w_{s, r}^{(0)}}{v_{y, \ell}} = O(\log k/\sqrt{d})$
    \item $\forall \ell \in [2], \ \max_{r \in [m]} \inrprod{w_{y, r}^{(0)}}{v_{y, \ell}} = \Omega(1/\sqrt{d})$
    \item For $\Omega(k)$ tuples $(s, \ell) \in [k] \times [2]$ we have $\inrprod{w_{y, r}^{(0)}}{v_{s, \ell}} > 0$.
\end{itemize}
\end{claim}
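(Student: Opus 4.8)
The plan is to observe that the first three bullet points are word-for-word identical to those of Claim \ref{ermsetting} and are established by exactly the same arguments, so that the only genuinely new content is the fourth bullet. For the first bullet I would repeat the multinomial Chernoff argument: the class counts $\abs{N_y}$ are jointly multinomial with uniform marginals, so conditioning on $\abs{N_1},\dots,\abs{N_{y-1}}$ one index at a time, applying a Chernoff bound to each conditional, and taking a union bound over $y \in [k]$ gives $\abs{N_y} = \Theta(N/k)$ for all $y$ with probability $1 - k\exp(-\Theta(N/k))$. For the second bullet I would use Proposition \ref{prop2} with $\sigma = 1/\sqrt{d}$ and $t = 2\sigma\sqrt{2\log m}$ to bound $\max_{r}\inrprod{w_{s,r}^{(0)}}{v_{y,\ell}}$ by $O(\log k/\sqrt d)$ with failure probability $O(1/k^3)$ for each triple $(s,y,\ell)$, and then union bound over the $O(k^2)$ such triples. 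For the third bullet I would apply Proposition \ref{prop3} to the family $\{\inrprod{w_{y,r}^{(0)}}{v_{y,\ell}}\}_{r\in[m]}$, which is i.i.d. $\N(0,1/d)$ because the feature vectors are orthonormal and the weights are drawn independently, taking the failure parameter $\delta = \Theta(1/k^2)$ so that $\max_r \inrprod{w_{y,r}^{(0)}}{v_{y,\ell}} = \Omega(1/\sqrt d)$ survives a union bound over all $y$ and $\ell$.

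For the new fourth bullet the key point is again orthonormality: for any fixed neuron $w_{y,r}^{(0)} \sim \N(0,\tfrac{1}{d}\Id)$, the collection of projections $\{\inrprod{w_{y,r}^{(0)}}{v_{s,\ell}}\}_{(s,\ell)\in[k]\times[2]}$ consists of $2k$ i.i.d. $\N(0,1/d)$ random variables, since the Gram matrix of the $v_{s,\ell}$ is the identity. Each projection is strictly positive with probability exactly $1/2$, independently of the others, so the number of positive ones is $\mathrm{Binomial}(2k,1/2)$; a Chernoff bound shows it is at least $k/2 = \Omega(k)$ except with probability $\exp(-\Omega(k))$. I would then union bound this event over all $y\in[k]$ (and over all $r\in[m]$, so that the statement holds uniformly in $r$), which costs a factor of at most $mk = \Theta(k^2)$ and leaves the total failure probability for this bullet at $\Theta(k^2)\exp(-\Omega(k)) = o(1/k)$.

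Finally I would combine the four bullets with a single union bound: the dominant contribution is the $O(1/k)$ from the second and third bullets, while the first and fourth contribute only terms of the form $\exp(-\poly(k))$, so all four hold simultaneously for every class with probability $1 - O(1/k)$ as claimed. I do not expect a real obstacle here: every step is a routine Gaussian concentration or anti-concentration estimate, and the only subtlety worth flagging is that all the independence invoked — both within a neuron across features and across neurons — rests on the orthonormality of the feature vectors together with the Xavier initialization $w_{y,r}^{(0)} \sim \N(0,\tfrac1d\Id)$, exactly as in Claim \ref{ermsetting}.
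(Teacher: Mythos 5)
Your proposal is correct and follows essentially the same route as the paper: the first three bullets are handled exactly as in Claim \ref{ermsetting}, and the fourth follows from the observation that the projections $\inrprod{w_{y,r}^{(0)}}{v_{s,\ell}}$ are independent mean-zero Gaussians (the paper states this in one sentence; your explicit $\mathrm{Binomial}(2k,1/2)$ Chernoff bound plus union bound over $y$ and $r$ is the natural way to spell it out).
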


\begin{subproof}[Proof of Claim \ref{mixsetting}]
The first three items in the claim are exactly the same as in Claim \ref{ermsetting}, and the last item is true because the correlations $\inrprod{w_{y, r}^{(0)}}{v_{s, \ell}}$ are mean zero Gaussians.
\end{subproof}

Once again, in everything that follows, we will \textbf{always} assume the conditions of Claim \ref{mixsetting} unless otherwise stated. We now translate Claim \ref{ermsmax} to the Midpoint Mixup setting.

\begin{claim}\label{mixsmax}
Consider $i \in N_y, \ j \in N_s$ for $s \neq y$ and suppose that $\max_{u \notin \{y, s\}} g_t^u(z_{i, j}) = O(\log(k)/k)$ holds true.
If we have $g_t^y(z_{i, j}) = a\log k$ and $g_t^s(z_{i, j}) = b\log k$ for $a, \ b = O(1)$, then:
\begin{align*}
    1 - 2\SM^y(g_t(z_{i, j})) = \begin{cases}
        -\Omega(1) & \text{if } a > 1, \ a - b = \Omega(1) \\
        \pm O(1) & \text{if } a > 1, \ a - b = \pm o(1) \\
        \Theta\left(1\right) & \text{otherwise}
    \end{cases}.
\end{align*}
Where in the second item above the sign of $1 - 2\SM^y(g_t(z_{i, j}))$ depends on the sign of $a - b$.
\end{claim}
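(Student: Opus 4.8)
The plan is to expand the softmax directly and reduce the whole claim to comparing the three numbers $k^a = \exp(g_t^y(z_{i,j}))$, $k^b = \exp(g_t^s(z_{i,j}))$, and the pooled mass of the remaining $k-2$ classes. I would first write
\[
\SM^y(g_t(z_{i,j})) = \frac{\exp(g_t^y(z_{i,j}))}{\exp(g_t^y(z_{i,j})) + \exp(g_t^s(z_{i,j})) + \sum_{u \notin \{y,s\}} \exp(g_t^u(z_{i,j}))}
\]
and note that since $\relu \geq 0$ we have $g_t^u(z) \geq 0$ for every class $u$, hence $\exp(g_t^u(z_{i,j})) \geq 1$; together with the hypothesis $\max_{u \notin \{y,s\}} g_t^u(z_{i,j}) = O(\log k/k)$ this sandwiches the pooled mass between $k-2$ and $(k-2)\exp(O(\log k/k)) = k(1+o(1))$, so it equals $\Theta(k)$. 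Hence the denominator is $D = k^a + k^b + \Theta(k)$ and it remains to carry out a three-way case analysis using $1 - 2\SM^y(g_t(z_{i,j})) = (k^b + \Theta(k) - k^a)/D$.

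In the first case ($a > 1$, $a-b = \Omega(1)$): both $k^b = k^a k^{b-a} = o(k^a)$ and $\Theta(k) = o(k^a)$, so $D = k^a(1+o(1))$, $\SM^y = 1 - o(1)$, and $1 - 2\SM^y = -1 + o(1) = -\Omega(1)$. In the second case ($a > 1$, $a-b = \pm o(1)$, which forces $b > 1$ as well): now $\Theta(k) = o(k^b) = o(k^a)$, so $D = k^a + k^b(1+o(1))$ and
\[
1 - 2\SM^y(g_t(z_{i,j})) = \frac{k^{b-a}(1+o(1)) - 1}{1 + k^{b-a}(1+o(1))},
\]
which is $O(1)$ in magnitude automatically (since $\SM^y \in [0,1]$) and whose sign tracks that of $b-a = -(a-b)$ through the comparison of $k^{b-a}$ with $1$; equivalently, writing $1 - 2\SM^y = (\SM^s - \SM^y) + \sum_{u \notin \{y,s\}} \SM^u$, the first term has the sign of $b-a$ with magnitude $\tanh(\frac{(a-b)\log k}{2})(1+o(1))$ and the trailing sum is a vanishing positive correction of order $\Theta(k^{1-a}) = o(1)$. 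In the remaining case ($a$ not $> 1$, i.e. $a \leq 1$): $k^a \leq k$ while $D \geq k - 2$, so $\SM^y = O(k^{a-1}) = o(1)$ once $a < 1$ and thus $1 - 2\SM^y = 1 - o(1) = \Theta(1)$, the lower bound $D \geq k-2$ (again from $\relu \geq 0$) being what keeps $\SM^y$ bounded away from $1/2$.

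The computation is almost entirely bookkeeping; the single genuinely load-bearing observation is the two-sided estimate $\sum_{u \notin \{y,s\}} \exp(g_t^u(z_{i,j})) = \Theta(k)$, whose lower bound relies on nonnegativity of $\relu$ and is what prevents the pooled mass from being negligible in the ``otherwise'' regime. The only delicate point is the second case, where one must track the sign of the $o(1)$ errors against the (possibly small) gap $a-b$; but since the claim there asks only for an $O(1)$ magnitude bound plus a sign assertion, it suffices to compare $k^{b-a}$ to $1$ and to use the decomposition $1 - 2\SM^y = (\SM^s - \SM^y) + \sum_{u \notin \{y,s\}}\SM^u$, in which the correction term is a vanishing positive quantity.
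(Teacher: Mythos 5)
Your proposal is correct and follows essentially the same route as the paper's proof: use the hypothesis on the other classes to pin the pooled softmax mass at $(k-2)+o(k)=\Theta(k)$, write $1-2\SM^y(g_t(z_{i,j}))$ as the ratio $\big(k^b+\Theta(k)-k^a\big)/\big(k^a+k^b+\Theta(k)\big)$, and read off the three cases. Your explicit lower bound on the pooled mass via nonnegativity of $\relu$ and the decomposition $1-2\SM^y=(\SM^s-\SM^y)+\sum_{u\notin\{y,s\}}\SM^u$ for the sign bookkeeping are minor elaborations of the same computation, done at (or slightly above) the paper's level of rigor.
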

\begin{subproof}[Proof of Claim \ref{mixsmax}]
In comparison to Claim \ref{ermsmax}, the Midpoint Mixup case is slightly more involved in that $g_t^s(z_{i, j})$ can be quite large due to the $x_j$ part of $z_{i, j}$. As a result, we directly assume some control over the different class outputs on the mixed points (which we will prove to hold throughout training later). By assumption, we have for $u \neq y, s$:
\begin{align*}
    g_t^u(z_{i, j}) = O(\log(k)/k)
    \implies \exp(g_t^u(z_{i, j})) &\leq 1 + O(\log(k)/k). \numberthis \label{mtsbound}
\end{align*}
Where above we used the inequality $\exp(x) \leq 1 + x + x^2$ for $x \in [0, 1]$. Now by the assumptions that $g_t^y(z_{i, j}) = a\log k$ and $g_t^s(z_{i, j}) = b\log k$, we have:
\begin{align*}
    1 - 2\SM^y(g_t(x_i)) &\leq 1 - \frac{2k^a}{k^a + k^b + (k - 2) + o(k)} \\
    &= \frac{k^b - k^a + (k - 2) + o(k)}{k^a + k^b + (k - 2) + o(k)}. \numberthis \label{msmbound1}
\end{align*}
From which the result follows.
\end{subproof}

\begin{corollary}\label{mixsmax2}
Under the same conditions as Claim \ref{mixsmax}, for $u \neq y, s$ we have:
\begin{align*}
    1 - 2\SM^u(g_t(z_{i, j})) = \Theta(1).
\end{align*}
\end{corollary}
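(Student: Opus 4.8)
The plan is to obtain Corollary \ref{mixsmax2} as an immediate consequence of the numerator bound in Equation \ref{mtsbound} and the softmax normalization lower bound implicit in Equation \ref{msmbound1}, exactly as Corollary \ref{ermsmax2} follows from Equations \ref{gtsbound} and \ref{smbound1} in the ERM case. Concretely, fix $u \notin \{y, s\}$ and write $\SM^u(g_t(z_{i, j})) = \exp(g_t^u(z_{i, j})) / \sum_{s' \in [k]} \exp(g_t^{s'}(z_{i, j}))$.

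First I would bound the numerator: the hypothesis $\max_{u' \notin \{y, s\}} g_t^{u'}(z_{i, j}) = O(\log(k)/k)$ together with Equation \ref{mtsbound} gives $\exp(g_t^u(z_{i, j})) \leq 1 + O(\log(k)/k) = 1 + o(1)$. Next I would bound the denominator from below: since each $g_t^{s'}$ is a sum of $\relu$ values and $\relu \geq 0$, we have $\exp(g_t^{s'}(z_{i, j})) \geq 1$ for every class $s'$; including the two designated terms $\exp(g_t^y(z_{i, j})) = k^a$ and $\exp(g_t^s(z_{i, j})) = k^b$ recovers precisely the denominator $k^a + k^b + (k - 2) + o(k)$ appearing in Equation \ref{msmbound1}, which is $\Omega(k)$ regardless of the values of $a, b = O(1)$. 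Dividing, $\SM^u(g_t(z_{i, j})) = O(1/k) = o(1)$, hence $1 - 2\SM^u(g_t(z_{i, j})) = 1 - o(1) = \Theta(1)$.

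There is essentially no obstacle here: the statement is a one-line corollary of the two displayed equations in the proof of Claim \ref{mixsmax}, and the only subtlety worth flagging is the role of the assumption $a, b = O(1)$ — it is used only to confirm that the numerator $1 + o(1)$ is genuinely negligible against the $\Omega(k)$ normalization, so that $\SM^u$ collapses to $o(1)$ and the quantity $1 - 2\SM^u$ is unambiguously bounded between positive constants for $k$ large.
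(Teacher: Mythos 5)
Your proposal is correct and follows essentially the same route as the paper, which proves Corollary \ref{mixsmax2} by directly citing Equations \ref{mtsbound} and \ref{msmbound1}: the numerator is $1 + o(1)$ by the hypothesis on the other class outputs, the denominator is $\Omega(k)$ since every $\exp(g_t^{s'}(z_{i,j})) \geq 1$, so $\SM^u(g_t(z_{i,j})) = o(1)$ and $1 - 2\SM^u(g_t(z_{i,j})) = \Theta(1)$. The only quibble is your closing remark: the condition $a, b = O(1)$ plays no real role here (larger $a$ or $b$ only enlarges the denominator), since the numerator bound comes from the hypothesis on $\max_{u' \notin \{y,s\}} g_t^{u'}(z_{i,j})$, not from $a, b$.
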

\begin{subproof}[Proof of Corollary \ref{mixsmax2}]
Follows from Equations \ref{mtsbound} and \ref{msmbound1}.
\end{subproof}

We observe that Claim \ref{mixsmax} and Corollary \ref{mixsmax2} are less precise than Claim \ref{ermsmax}, largely because there is now a dependence on the gap between the class $y$ and class $s$ network outputs as opposed to just the class $y$ network output. We are now again ready to compare the growth of the diagonal correlations $\inrprod{w_{y, r}^{(t)}}{v_{y, \ell}}$ with the off-diagonal correlations $\inrprod{w_{y, r}^{(t)}}{v_{s, \ell}}$. However, this is not as straightforward as it was in the ERM setting. The issue is that the off-diagonal correlations can actually grow significantly, due to the fact that the features $v_{y, \ell}$ can show up when mixing points in class $y$ with class $s$.

\begin{claim}\label{mixoffdiag}
Fix an arbitrary class $y \in[k]$. Let $A \in [\Omega(\rho),  \rho/(\delta_2 - \delta_1)]$ and let $T_A$ be the first iteration at which $\max_{r \in [m], \ell \in [2]} \inrprod{w_{y, r}^{(T_A)}}{v_{y, \ell}} \geq A$; we must have both that $T_A = O(\poly(k))$ and that, for every $s \neq y$ and $\ell \in [2]$:
\begin{align*}
    \inrprod{w_{y, r}^{(T_A)}}{v_{s, \ell}} = O\left(\max_{\ell' \in [2]}\inrprod{w_{y, r}^{(T_A)}}{v_{y, \ell'}}/k\right).
\end{align*}
Additionally, for all $s, \ell$ with $\inrprod{w_{y, r}^{(0)}}{v_{s, \ell}} > 0$, we have that $\inrprod{w_{y, r}^{(T_A)}}{v_{s, \ell}} = \Omega\left(\frac{\inrprod{w_{y, r}^{(0)}}{v_{s, \ell}}}{\polylog(k)}\right)$.
\end{claim}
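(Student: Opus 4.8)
Fix the class $y$. Let $(r^*, \ell^*)$ be a weight/feature pair realizing the event defining $T_A$ (so $\inrprod{w_{y,r^*}^{(T_A)}}{v_{y,\ell^*}} \ge A$), and write $D(t) = \max_{\ell}\inrprod{w_{y,r^*}^{(t)}}{v_{y,\ell}}$ and $O(t) = \max_{s \neq y,\ell}\inrprod{w_{y,r^*}^{(t)}}{v_{s,\ell}}$. The plan is to show $D$ reaches $A$ in $O(\poly(k))$ steps, that $O$ increases by at most a $\Theta(1/k)$-fraction of the increase of $D$, and that $O$ is essentially monotone throughout this phase. First I would fix the regime: for $t \le T_A$ every class-$y$ weight-feature correlation is $O(\rho)$, and since $\delta_3,\delta_4 = \Theta(k^{-1.5})$ and $\alpha = 8$ (Assumption \ref{as1}), the polynomial part of $\relu$ suppresses any contribution to a class output $g_t^u(z_{i,j})$ coming solely from feature-noise patches down to $O(\log k/k)$; this holds no matter how large $\inrprod{w_{u,r}^{(t)}}{v_{u,\cdot}}$ has become, because $v_{u,\cdot}$ enters those patches only with coefficient $\Theta(k^{-1.5})$. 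Consequently the hypotheses of Claim \ref{mixsmax} are met on every mixed point, $g_t^y(z_{i,j})$ stays $O(1) = o(\log k)$ (using $m\rho = \Theta(1)$), and the coefficients $1 - 2\SM^y(g_t(z_{i,j}))$ in Calculation \ref{mmgrad} are $\Theta(1)$.

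For the diagonal lower bound I would apply Calculation \ref{mmgrad} and discard all but the pairs $z_{i,j}$ with $i \in N_y$, $j \notin N_y$. Letting $\ell_0$ attain $D(t)$, the signal patches $p \in P_{y,\ell_0}(x_i)$ dominate, with $\inrprod{w_{y,r^*}^{(t)}}{z_{i,j}^{(p)}} = \tfrac12 \beta_{y,p} D(t)(1+o(1))$ (the $x_j^{(p)}$ part being off-diagonal or $\Theta(k^{-1.5})$-weighted), so $\relu' = (\tfrac12 \beta_{y,p} D(t)/\rho)^{\alpha-1}(1+o(1))$; since there are $\Theta(N^2/k)$ such pairs and $\abs{N_y} = \Theta(N/k)$ (Claim \ref{mixsetting}), we get $D(t+1) \ge D(t) + \eta\,\Theta(1/k)\,(D(t)/\rho)^{\alpha-1}$ as long as $D(t) \le \rho/(\delta_2-\delta_1)$. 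Since $D(0) = \Omega(1/\sqrt d)$ by Claim \ref{mixsetting}, the standard tensor-power-iteration escape-time estimate (cf. Lemma \ref{tensorlemma}) together with $\rho = \Theta(1/k)$, $d = \Omega(k^{20})$, $\alpha = 8$, $\eta = 1/\poly(k)$ shows $D$ crosses any $A \in [\Omega(\rho),\rho/(\delta_2-\delta_1)]$ in $O(\poly(k))$ iterations, giving $T_A = O(\poly(k))$ and, by a one-step overshoot bound, $D(T_A) = \Theta(A)$.

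For the off-diagonal bound, Calculation \ref{mmgrad} shows the only non-negligible contributions to $\inrprod{-\eta\grad_{w_{y,r^*}^{(t)}}J_{MM}}{v_{s,\ell}}$ come from pairs $z_{i,j}$ with $i \in N_y$, $j \in N_s$ through the $C_P$ patches $p \in P_{s,\ell}(x_j)$ (feature-noise occurrences of $v_{s,\ell}$ carry the extra factor $\gamma = \Theta(k^{-1.5})$ and are lower order); on each such patch $\relu'(\inrprod{w_{y,r^*}^{(t)}}{z_{i,j}^{(p)}}) = O((D(t)/\rho)^{\alpha-1})$ whether or not $p$ also lands in a signal patch of $x_i$, while the $v_{s,\ell}$-coefficient of $z_{i,j}^{(p)}$ is $\Theta(1)$. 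There are only $\Theta(N^2/k^2)$ such pairs, so $O(t+1) \le O(t) + \eta\,O(1/k^2)\,(D(t)/\rho)^{\alpha-1}$. Dividing by the diagonal lower bound, $\Delta O(t) \le (C/k)\,\Delta D(t)$ for a universal constant $C$; summing over $t < T_A$ gives $O(T_A) \le O(0) + (C/k)D(T_A)$, and since $O(0) = O(\log k/\sqrt d) = o(A/k)$ (Claim \ref{mixsetting}, Assumption \ref{as1}) and $D(T_A) = \Theta(A)$, we get $O(T_A) = O(D(T_A)/k) = O(\max_\ell \inrprod{w_{y,r^*}^{(T_A)}}{v_{y,\ell}}/k)$, which is the first assertion (for every $s\neq y$, $\ell$). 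For the ``additionally'' part, every term updating $\inrprod{w_{y,r^*}^{(t)}}{v_{s,\ell}}$ is nonnegative except those from pairs with $i,j \in N_s$ (coefficient $-2\SM^y = -\Theta(1/k)$), but on such pairs $\inrprod{w_{y,r^*}^{(t)}}{z_{i,j}^{(p)}}$ is driven only by off-diagonal correlations, so the corresponding $\relu'$ is at most $(O(t)/\rho)^{\alpha-1}$; summing over $t < T_A$ the total negative drift is $o(\inrprod{w_{y,r^*}^{(0)}}{v_{s,\ell}}/\polylog(k))$, proving $\inrprod{w_{y,r^*}^{(T_A)}}{v_{s,\ell}} = \Omega(\inrprod{w_{y,r^*}^{(0)}}{v_{s,\ell}}/\polylog(k))$.

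The main obstacle is the off-diagonal estimate. Unlike the ERM case (Claim \ref{offdiag}), $v_{s,\ell}$ appears at full $\Theta(1)$ strength inside mixed points of class-$y$ and class-$s$ data, so one must argue carefully that (i) the relevant $\relu'$ factors are still governed by the diagonal correlation $D(t)$ and nothing larger, which forces a case split on whether a patch index simultaneously lies in signal patches of both points of a pair; (ii) only $\Theta(N^2/k^2)$ pairs can contribute at $\Theta(1)$ strength, so the $\Theta(1/k)$ gap against the diagonal growth survives; and (iii) the sporadic negative-coefficient pairs and all feature-noise occurrences are genuinely lower order. A secondary point is making the feature-noise suppression in the first step uniform over classes $u$ that may already be in the linear regime of $\relu$ --- precisely where $\alpha = 8$ against $\delta_3,\delta_4 = \Theta(k^{-1.5})$ is used.
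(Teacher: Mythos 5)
Your overall strategy is the one the paper uses: a per-step rate comparison in the polynomial regime, where the diagonal correlation gains $\Theta(\eta\,D(t)^{\alpha-1}/(k\rho^{\alpha-1}))$ per iteration while the off-diagonal can gain at most $\Theta(\eta\,D(t)^{\alpha-1}/(k^2\rho^{\alpha-1}))$ because only the $\Theta(N^2/k^2)$ class-$y$/class-$s$ mixed pairs carry $v_{s,\ell}$ at constant strength, followed by a separate negative-drift bound for the last assertion. However, there are two genuine gaps. First, your diagonal lower bound rests on the assertion that $\inrprod{w_{y,r^*}^{(t)}}{z_{i,j}^{(p)}} = \tfrac12\beta_{y,p}D(t)(1+o(1))$ for \emph{all} pairs with $j \notin N_y$. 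This is not justified: the patch supports in Definition \ref{datadist} are chosen ``via any method,'' so a signal patch of $x_i$ may coincide with a signal patch of $x_j$, and the correlation $\inrprod{w_{y,r^*}^{(t)}}{v_{s,\ell'}}$ on that patch can be negative and of magnitude comparable to $D(t)$ (certainly near initialization, where both are of order $\sqrt{\log k}/\sqrt d$), in which case $\relu'$ vanishes on that pair. The paper avoids this by counting only the $\Theta(\abs{N_y}N)$ mixed pairs whose off-diagonal correlations are positive, which is exactly why the fourth bullet of Claim \ref{mixsetting} (positivity for $\Omega(k)$ tuples at initialization) and the persistence-of-positivity statement (the ``additionally'' part, continued in Corollary \ref{mixoffdiag2}) are needed. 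You do prove a negative-drift bound at the end, but you never feed it back into the diagonal estimate, so as written the $\Theta(1/k)$ per-step growth is not established.

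Second, your proof of the ``additionally'' part bounds the negative drift per step by $\eta\,O(1/k^2)(O(t)/\rho)^{\alpha-1}$ with $O(t)$ the \emph{maximum} off-diagonal correlation. This additive bound cannot deliver the claimed conclusion $\inrprod{w_{y,r}^{(T_A)}}{v_{s,\ell}} = \Omega(\inrprod{w_{y,r}^{(0)}}{v_{s,\ell}}/\polylog(k))$ for a coordinate whose initial correlation is atypically small (a single Gaussian coordinate can be arbitrarily close to $0$ while the max off-diagonal is $\Theta(\sqrt{\log k}/\sqrt d)$). The paper instead bounds the drift by the coordinate's \emph{own} value raised to $\alpha-1$ (Equation \ref{offdiaglb}), so the decay is effectively multiplicative, and then compares timescales: in the time needed for that coordinate to shrink by a $\polylog(k)$ factor, the diagonal correlation would grow by a factor $k/\polylog(k)$, which cannot happen before $T_A$. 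Your step ``summing over $t<T_A$ the total negative drift is $o(\cdot)$'' needs such a timescale comparison (or an explicit escape-time computation) rather than being asserted. Two smaller points: the claim is stated for every $r$ (and is later used summed over all $r$, e.g.\ in Claim \ref{mixcontrol}), whereas your argument fixes $r^*$; and the negative-coefficient pairs are all pairs avoiding class $y$ (in particular class $s$ mixed with a third class), not only $i,j\in N_s$ — this does not change the order of the bound, but the bookkeeping should reflect it.
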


\begin{subproof}[Proof of Claim \ref{mixoffdiag}]
By our setting, we must have that there exists a diagonal correlation $\inrprod{w_{y, r^*}^{(0)}}{v_{y, \ell^*}} = \Omega(1/\sqrt{d})$, which we will focus our attention on. Using Calculation \ref{mmgrad} and the ideas from Calculation \ref{ermgpdv1}, we can lower bound the update to $\inrprod{w_{y, r^*}^{(t)}}{v_{y, \ell^*}}$ from initialization up to $T_A$ as:
\begin{align*}
    \inrprod{-\eta \grad_{w_{y, r^*}^{(t)}}J_{MM}(g, \X)}{v_{y, \ell^*}} &\geq \frac{\eta}{N^2} \sum_{i \in N_y} \sum_{j \notin N_y} \Big(1 - 2\SM^y\big((g_t(z_{i, j}))\big)\Big) \Theta \left(\frac{\inrprod{w_{y, r^*}^{(t)}}{v_{y, \ell^*}}^{\alpha - 1}}{\rho^{\alpha - 1}}\right)  \\ 
    &+ \frac{\eta}{N^2} \sum_{i \in N_y} \sum_{j \in N_y} \Big(1 - \SM^y\big((g_t(z_{i, j}))\big)\Big) \Theta \left(\frac{\inrprod{w_{y, r^*}^{(t)}}{v_{y, \ell^*}}^{\alpha - 1}}{\rho^{\alpha - 1}}\right) \\
    &- \frac{\eta}{N^2} \sum_{i \notin N_y} \sum_{j \notin N_y} \SM^y\big(g_t(z_{i, j})\big) \Theta \left(\frac{P\sigma_4^{\alpha}\max_{u \in [k], q \in [2]}\inrprod{w_{y, r^*}^{(t)}}{v_{u, q}}^{\alpha - 1}}{\rho^{\alpha - 1}}\right) \numberthis \label{fulldiaglb}
\end{align*}
Above we made use of the fact that, for $i \in N_y$ and $j \notin N_y$, we have $\inrprod{w_{y, r^*}^{(t)}}{z_{i, j}^{(p)}} \geq \inrprod{w_{y, r^*}^{(t)}}{v_{y, \ell^*}}/2$ for at least $\Theta(\abs{N_y}N)$ mixed points since the correlation $\inrprod{w_{y, r^*}^{(t)}}{v_{u, q}}$ is positive for $\Omega(k)$ tuples $(u, q) \in [k] \times [2]$ (under Setting \ref{mixsetting}). We can similarly upper bound the update to $\inrprod{w_{y, r}^{(t)}}{v_{s, \ell}}$ for an arbitrary $r \in [m]$ as:
\begin{align*}
    \inrprod{-\eta \grad_{w_{y, r}^{(t)}} J_{MM}(g, \X)}{v_{s, \ell}} &\leq -\frac{\eta}{N^2} \sum_{i \notin N_y} \sum_{j \in N_s} \SM^y\big(g_t(z_{i, j})\big) \Theta \left(\frac{\inrprod{w_{y, r}^{(t)}}{v_{s, \ell}}^{\alpha - 1}}{\rho^{\alpha - 1}}\right) \\
    &-\frac{\eta}{N^2} \sum_{i \notin N_y \cup N_s} \sum_{j \notin N_y \cup N_s} \SM^y\big(g_t(z_{i, j})\big) \Theta \left(\frac{P \delta_3^{\alpha}\min_{u \in [k], q \in [2]}\inrprod{w_{y, r}^{(t)}}{v_{u, q}}^{\alpha - 1}}{\rho^{\alpha - 1}}\right) \\
    &+\frac{\eta}{N^2} \sum_{i \in N_y} \sum_{j \in N_s} \Big(1 - 2\SM^y\big(g_t(z_{i, j})\big)\Big) \Theta \left(\frac{\max_{q \in [2]}\inrprod{w_{y, r}^{(t)}}{v_{s, \ell} + v_{y, q}}^{\alpha - 1}}{\rho^{\alpha - 1}}\right) \\
    &+\frac{\eta}{N^2} \sum_{i \in N_y} \sum_{j \notin N_y \cup N_s} \Big(1 - 2\SM^y\big(g_t(z_{i, j})\big)\Big) \Theta \left(\frac{\delta_4 \max_{u \in [k], q \in [2]}\inrprod{w_{y, r}^{(t)}}{v_{u, q}}^{\alpha - 1}}{\rho^{\alpha - 1}}\right) \\
    &+\frac{\eta}{N^2} \sum_{i \in N_y} \sum_{j \in N_y} \Big(1 - \SM^y\big(g_t(z_{i, j})\big)\Big) \Theta \left(\frac{P \delta_4^{\alpha} \max_{u \neq y, q \in [2]}\inrprod{w_{y, r}^{(t)}}{v_{u, q}}^{\alpha - 1}}{\rho^{\alpha - 1}}\right). \numberthis \label{offdiagub1}
\end{align*}
As the above may be rather difficult to parse on first glance, let us take a moment to unpack the individual terms on the RHS. The first two terms are a precise splitting of the $-2\SM^y(g_t)$ term from Calculation \ref{mmgrad}; namely, the case where we mix with the class $s$ allows for constant size coefficients on the feature $v_{s, \ell}$ while the other cases only allow for $v_{s, \ell}$ to show up in the feature noise patches. The next three terms consider all cases of mixing with the class $y$. The first of these terms considers the case of mixing class $y$ with class $s$, in which case it is possible to have patches in $z_{i, j}$ that have both $v_{s, \ell}$ and $v_{y, \ell^*}$ with constant coefficients. The next term considers mixing class $y$ with a class that is neither $y$ nor $s$, in which case the feature $v_{s, \ell}$ can only show up when mixing with a feature noise patch, so we suffer a factor of at least $\delta_4 = \Theta(1/k^{1.5})$ (note we do not suffer a $\delta_4^{\alpha}$ factor as $v_{y, \ell^*}$ can still be in $z_{i, j}$) from the $\inrprod{z_{i, j}}{v_{s, \ell}}$ part of the gradient. Finally, the last term considers mixing within class $y$.

The first of the three positive terms in the RHS of Equation \ref{offdiagub1} presents the main problem - the fact that the diagonal correlations can show up non-trivially in the off-diagonal correlation gradient means the gradients can be much larger than in the ERM case. However, the key is that there are only $\Theta(N^2/k^2)$ mixed points between classes $y$ and class $s$. Thus, once more using the fact that $\Theta(\abs{N_u}) = \Theta(N/k)$ for every $u \in [k]$, the other conditions in our setting, and Claim \ref{mixsmax}, we obtain that for all $t \leq T_A$:
\begin{align*}
    \inrprod{-\eta \grad_{w_{y, r^*}^{(t)}}J_{MM}(g, \X)}{v_{y, \ell^*}} &\geq \Theta \left(\frac{\eta\inrprod{w_{y, r^*}^{(t)}}{v_{y, \ell^*}}^{\alpha - 1}}{k\rho^{\alpha - 1}}\right) \numberthis \label{diaglb} \\
    \inrprod{-\eta \grad_{w_{y, r}^{(t)}} J_{MM}(g, \X)}{v_{s, \ell}} &\leq \Theta \left(\frac{\eta\max_{u \in \{s, y\}, q \in [2]} \inrprod{w_{y, r}^{(t)}}{v_{u, q}}^{\alpha - 1}}{k^2\rho^{\alpha - 1}}\right) \numberthis \label{offdiagub2}
\end{align*}
Crucially we have that Equation \ref{offdiagub2} is a $\Theta(1/k)$ factor smaller than Equation \ref{diaglb}. Recalling that all correlations are $O(\log(k)/\sqrt{d})$ at initialization, we see that the difference in the updates in Equations \ref{diaglb} and \ref{offdiagub2} is at least of the same order as Equation \ref{diaglb}. Thus, in $O(\poly(k))$ iterations, it follows that $\inrprod{w_{y, r^*}^{(t)}}{v_{y, \ell^*}} > \inrprod{w_{y, r}^{(t)}}{v_{s, \ell}}$ (this necessarily occurs for a $t < T_A$ by definition of $A$ and comparison to the bounds above), after which it follows from Equations \ref{diaglb} and \ref{offdiagub2} that $\inrprod{w_{y, r}^{(T_A)}}{v_{s, \ell}} = O(\inrprod{w_{y, r^*}^{(T_A)}}{v_{y, \ell^*}}/k)$ (and clearly $T_A = O(\poly(k))$). This proves the first part of the claim.

It remains to show that the off-diagonal correlations also do not decrease by too much, as if they were to become negative that would potentially cause problems in Equation \ref{fulldiaglb} due to $\relu'$ becoming 0. Using Equation \ref{offdiagub1}, we can form the following lower bound to $\inrprod{w_{y, r}^{(t)}}{v_{s, \ell}}$:
\begin{align*}
    \inrprod{-\eta \grad_{w_{y, r}^{(t)}} J_{MM}(g, \X)}{v_{s, \ell}} \geq -\Theta \left(\frac{\eta\inrprod{w_{y, r}^{(t)}}{v_{s, \ell}}^{\alpha - 1}}{k^2\rho^{\alpha - 1}}\right). \numberthis \label{offdiaglb}
\end{align*}
Now let $T$ denote the number of iterations starting from initialization that it takes $\inrprod{w_{y, r}^{(t)}}{v_{s, \ell}}$ to decrease to $\inrprod{w_{y, r}^{(0)}}{v_{s, \ell}}/\polylog(k)$ for some fixed $\polylog(k)$ factor. Then it follows from Equation \ref{diaglb} that in $T$ iterations $\inrprod{w_{y, r^*}^{(t)}}{v_{y, \ell^*}}$ has increased by at least a $k/\polylog(k)$ factor. As a result, we have that at $T_A$ the correlation $\inrprod{w_{y, r}^{(t)}}{v_{s, \ell}}$ has decreased by at most a $\polylog(k)^C$ factor for some universal constant $C$, proving the claim.
\end{subproof}

\begin{corollary}\label{mixoffdiag2}
For any class $y \in [k]$, and any $t \geq T_A$ (for any $T_A$ satisfying the definition in Claim \ref{mixoffdiag}), we have for any $s \neq y$ and $\ell \in [2]$:
\begin{align*}
        \inrprod{w_{y, r}^{(t)}}{v_{s, \ell}} = O\left(\max_{\ell' \in [2]}\inrprod{w_{y, r}^{(t)}}{v_{y, \ell'}}/k\right)
\end{align*}
Additionally, for all $s, \ell$ with $\inrprod{w_{y, r}^{(0)}}{v_{s, \ell}} > 0$, we have that $\inrprod{w_{y, r}^{(t)}}{v_{s, \ell}} > 0$.
\end{corollary}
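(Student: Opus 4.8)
The plan is to upgrade the ``at $t = T_A$'' estimate of Claim \ref{mixoffdiag} to a ``for all $t \ge T_A$'' statement by a single continuation (induction-on-$t$) argument, working throughout inside the high-probability event of Claim \ref{mixsetting}. I would take as the base case the earliest admissible $t = T_{\Omega(\rho)}$, where Claim \ref{mixoffdiag} already gives the conclusion, and then, assuming at step $t$ that $\inrprod{w_{y,r}^{(t)}}{v_{s,\ell}} \le 2c\max_{\ell'\in[2]}\inrprod{w_{y,r}^{(t)}}{v_{y,\ell'}}/k$ for every $r$ and every $s \neq y$, $\ell \in [2]$, and that $\inrprod{w_{y,r}^{(t)}}{v_{s,\ell}} \ge \inrprod{w_{y,r}^{(0)}}{v_{s,\ell}}/\polylog(k)$ whenever $\inrprod{w_{y,r}^{(0)}}{v_{s,\ell}} > 0$, show that both inequalities survive step $t+1$ (here $c$ is the implied constant of Claim \ref{mixoffdiag}). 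Because a single gradient step moves any correlation by $O(\eta) = o(1)$, it suffices to rule out the inductive hypothesis being tight while the matching diagonal correlation simultaneously fails to grow quickly enough to restore slack.

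For the inductive step, I would re-derive the gradient bounds of Equations \ref{fulldiaglb}--\ref{offdiaglb} without the restriction $t \le T_A$. The one genuinely new feature is that some class-$y$ signal patches of a mixed point $z_{i,j}$ may have entered the linear regime of $\relu$, so I would split each sum over mixed points accordingly, bound the polynomial-regime part exactly as in Claim \ref{mixoffdiag}, and handle the linear-regime part via Calculation \ref{ermgpdv2} inside Calculation \ref{mmgrad}. In the linear part the diagonal correlation $\inrprod{w_{y,r}^{(t)}}{v_{y,\ell^*}}$ still receives a contribution of order $\eta/k$ from the $\Theta(N^2/k)$ mixed pairs that involve class $y$ (each has $v_{y,\ell^*}$ at constant coefficient on $\Theta(1)$ patches and coefficient $1 - 2\SM^y(g_t(z_{i,j}))$ of constant order by Claim \ref{mixsmax}), whereas for $s \neq y$ the feature $v_{s,\ell}$ occurs at constant coefficient only on the $\Theta(N^2/k^2)$ mixed pairs crossing class $y$ with class $s$ — everywhere else it is a feature-noise coordinate carrying a $\delta_3^{\alpha},\delta_4^{\alpha}$ or $\delta_4 = O(k^{-1.5})$ factor. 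Hence the off-diagonal update remains a $\Theta(1/k)$ factor below the diagonal update, which is exactly what keeps $\inrprod{w_{y,r}^{(t+1)}}{v_{s,\ell}} \le 2c\max_{\ell'}\inrprod{w_{y,r}^{(t+1)}}{v_{y,\ell'}}/k$ alive. For positivity I would invoke the matching lower bound of the form in Equation \ref{offdiaglb}, $\inrprod{-\eta\grad_{w_{y,r}^{(t)}}J_{MM}}{v_{s,\ell}} \ge -\Theta(\eta\inrprod{w_{y,r}^{(t)}}{v_{s,\ell}}^{\alpha-1}/(k^2\rho^{\alpha-1}))$: this makes the decay of $\inrprod{w_{y,r}^{(t)}}{v_{s,\ell}}$ so slow that, in the number of steps it would take to shed a fixed $\polylog(k)$ factor, the matching diagonal correlation grows by a $k/\polylog(k)$ factor, and since over the polynomially-long horizon the diagonal grows by at most a polynomial factor (it moves by $O(\eta)$ per step), only $O(1)$ such losses can occur, so $\inrprod{w_{y,r}^{(t)}}{v_{s,\ell}}$ stays within $\polylog(k)^{O(1)}$ of its initialization and in particular remains positive.

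I expect the main obstacle to be exactly this post-$T_{\rho/(\delta_2-\delta_1)}$ linear regime. Before then every term touching $v_{s,\ell}$ with $s \neq y$ in the off-diagonal gradient is polynomially suppressed — by $\delta_3^{\alpha},\delta_4^{\alpha}$ from feature noise, or by an $\inrprod{z_{i,j}^{(p)}}{v_{y,\ell^*}}^{\alpha-1}/\rho^{\alpha-1} < 1$ factor — but once the class-$y$ signal patches are linear the latter factor disappears, the cross-class mixed pairs contribute at full scale, and the only thing keeping them subordinate to the diagonal growth is the counting gap $\Theta(1/k^2)$ versus $\Theta(1/k)$ together with the softmax/alignment control of Claim \ref{mixsmax}. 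Bookkeeping these two competing $\Theta(1/k)$-order rates cleanly and uniformly in $t$ and $r$ — in particular making sure the $1 - 2\SM^y(g_t(z_{i,j}))$ coefficients on the cross-class pairs never conspire to dominate those on the class-$y$/class-$y$ pairs driving the diagonal — is where essentially all the effort goes; the positivity half is comparatively routine given Equation \ref{offdiaglb}.
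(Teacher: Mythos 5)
Your argument is correct and takes essentially the same route as the paper, whose proof of this corollary is a two-sentence remark asserting exactly your two points: the $\Theta(1/k)$ gap between the diagonal update (Equation \ref{diaglb}) and the off-diagonal update (Equation \ref{offdiagub2}) persists after entering the linear regime of $\relu$ — precisely your counting of $\Theta(N^2/k)$ mixed pairs involving class $y$ versus only $\Theta(N^2/k^2)$ cross pairs with a given class $s$, with $v_{s,\ell}$ elsewhere suppressed as feature noise — and the slow-decay argument from Claim \ref{mixoffdiag} extends to give positivity. Your induction scaffold and the $O(1)$-many-$\polylog(k)$-losses bookkeeping merely make explicit what the paper leaves implicit.
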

\begin{subproof}[Proof of Corollary \ref{mixoffdiag2}]
The $O(1/k)$ factor separation between the updates to diagonal and off-diagonal correlations shown in Equations \ref{diaglb} and \ref{offdiagub2} continue to hold once we pass into the linear regime of $\relu$. Furthermore, the logic used to prove the lower bound for positive correlations in Claim \ref{mixoffdiag} easily extends to showing that the correlations remain positive throughout training.
\end{subproof}

As noted above, the bound on the off-diagonal correlations obtained in Claim \ref{mixoffdiag} and Corollary \ref{mixoffdiag2} is much weaker than what it was in Claim \ref{offdiag}, which is why we weakened the assumptions in Claim \ref{mixsmax}. We now prove the Midpoint Mixup analogues to Claims \ref{gcontrol}, \ref{ermgrowth}, and \ref{ermbound}.

\begin{claim}\label{mixcontrol}
Consider $y \in [k]$ and $t$ such that $\max_{i \in N_y, \ j \in [N]} g_t^y(z_{i, j}) = \Theta(\log k)$. Then $\max_{i \in N_y, \ j \in [N]} g_t^y(z_{i, j}) = \Theta(\min_{i \in N_y, \ j \in [N]} g_t^y(z_{i, j}))$.
\end{claim}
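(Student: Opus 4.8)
The plan is to mirror the ERM argument in Claim~\ref{gcontrol}, tracking the extra complications coming from the fact that $z_{i,j}$ mixes two classes. Fix $y$ and $t$ as in the statement and let $(i^*,j^*) \in N_y \times [N]$ achieve $g_t^y(z_{i^*,j^*}) = \Theta(\log k)$. The first step is to show that $B_{y,\ell}^{(t)}$, as redefined in Equation~\ref{bylmixup}, is non-empty for at least one $\ell \in [2]$. If both were empty, every class-$y$ correlation would satisfy $\inrprod{w_{y,r}^{(t)}}{v_{y,\ell}} < 2\rho/\delta_1 = \Theta(1/k)$; then each of the $O(C_P)$ signal patches of $z_{i^*,j^*}$ (those inherited from $x_{i^*}$, plus those from $x_{j^*}$ if $j^* \in N_y$) contributes $\relu(O(\rho)) = O(\rho)$ per hidden unit, while the $P = \Theta(k^2)$ feature-noise patches contribute only through off-diagonal correlations, which the polynomial part of $\relu$ crushes to $o(1)$ in total (using $\delta_4 = \Theta(k^{-1.5})$, $\alpha = 8$, $d = \Omega(k^{20})$, and Corollary~\ref{mixoffdiag2}). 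Summing over the $m = \Theta(k)$ units would give $g_t^y(z_{i^*,j^*}) = O(C_P) = o(\log k)$, a contradiction; so $\max_\ell \Lambda_{y,\ell}^{(t)} \geq 2\rho/\delta_1 = \Omega(\rho)$, and in particular (since the diagonal correlations start below $\rho$ and increase in this regime) $t \geq T_A$ for some admissible $A = \Theta(\rho)$, so Corollary~\ref{mixoffdiag2} is available at iteration $t$. A side consequence of the same counting: since $g_t^y(z_{i^*,j^*}) = \Theta(\log k)$, each $\Lambda_{y,\ell}^{(t)} = O(\log k)$.

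Second, with $B_{y,\ell}^{(t)} \neq \emptyset$ for some $\ell$, I decompose $g_t^y(z_{i,j})$ for an arbitrary pair with $i \in N_y$. On a signal patch $p \in P_{y,\ell}(x_i)$ the coefficient of $v_{y,\ell}$ in $z_{i,j}$ is $\beta_{i,p}/2 \in [\delta_1/2,(\delta_2-\delta_1)/2]$ (up to a negligible additive $O(\delta_4)$ from a possible noise overlap from the $x_j$ side), so every $r \in B_{y,\ell}^{(t)}$ is in the linear regime of $\relu$ there; the same holds for the signal patches of $x_j$ when $j \in N_y$. Collecting the lower-order terms — the $-(1-1/\alpha)\rho$ offsets (total $O(mC_P\rho) = O(1)$), the units outside $\cup_\ell B_{y,\ell}^{(t)}$ (also $O(1)$, as $\relu$ of their inputs is $O(\rho)$), and all feature-noise and off-diagonal contributions ($o(1)$, as above) — into a single $o(\log k)$ error, one obtains the two-sided estimate
\[
g_t^y(z_{i,j}) = \sum_{\ell \in [2]} \Big( \sum_{p \in P_{y,\ell}(x_i)} \tfrac{\beta_{i,p}}{2} + \Ind_{j \in N_y}\!\!\sum_{p \in P_{y,\ell}(x_j)} \tfrac{\beta_{j,p}}{2} \Big) \sum_{r \in B_{y,\ell}^{(t)}} \inrprod{w_{y,r}^{(t)}}{v_{y,\ell}} \;+\; o(\log k),
\]
uniformly over $i \in N_y,\ j \in [N]$.

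Third, I conclude. By Assumption~\ref{as1} we have $|P_{y,\ell}(x_i)| = C_P$ and $\beta_{i,p} \in [\delta_1,\delta_2-\delta_1]$ with $\delta_1,\delta_2 = \Theta(1)$ for every $i \in N_y$, so each prefactor $\sum_{p \in P_{y,\ell}(x_i)} \beta_{i,p}/2$ is $\Theta(C_P) = \Theta(1)$; hence the bracketed factor above stays within a fixed constant multiplicative range of $1$ over all $(i,j)$ with $i \in N_y$, regardless of whether $j \in N_y$. Thus $g_t^y(z_{i,j}) = \Theta\big(\sum_{\ell \in [2]} \sum_{r \in B_{y,\ell}^{(t)}} \inrprod{w_{y,r}^{(t)}}{v_{y,\ell}}\big) + o(\log k)$ for all such $(i,j)$. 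Evaluating at $(i^*,j^*)$ forces the $\Theta(\cdot)$ term to be $\Omega(\log k)$, so it dominates the $o(\log k)$ remainder uniformly; therefore $g_t^y(z_{i,j}) = \Theta(\log k)$ for every such pair, which gives $\min_{i \in N_y, j \in [N]} g_t^y(z_{i,j}) = \Theta(\max_{i \in N_y, j \in [N]} g_t^y(z_{i,j}))$.

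The main obstacle is the bookkeeping in the second step: one has to check that every ``junk'' contribution — feature-noise patches, off-diagonal correlations, sub-threshold hidden units, and the $\relu$ offset — is not merely small but genuinely $o(\log k)$, and that it cannot corrupt the lower bound when $g_t^y(z_{i,j})$ is near its threshold $\Theta(\log k)$. The off-diagonal control from Corollary~\ref{mixoffdiag2} is what makes this work, and the non-emptiness argument in the first step is exactly what verifies that corollary's hypothesis at iteration $t$. The only genuinely new feature relative to Claim~\ref{gcontrol} is that $z_{i,j}$ may mix class $y$ with another class, so the cases $j \in N_y$ and $j \notin N_y$ must be tracked separately; the former merely contributes a parallel signal-patch term that changes the prefactor by at most a constant.
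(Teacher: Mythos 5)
Your proposal is correct and follows essentially the same route as the paper: invoke Corollary \ref{mixoffdiag2} (after noting the $\Theta(\log k)$ output forces some correlation past the $B_{y,\ell}^{(t)}$ threshold, so the corollary is applicable) to reduce the off-diagonal/noise/$x_j$ contributions to $o(\log k)$, and then repeat the Claim \ref{gcontrol} argument that the signal-patch prefactors $\sum_p \beta_{i,p}$ lie within constant factors of each other across all class-$y$ points. Your write-up is just a more explicit, two-sided version of the same bookkeeping (halved coefficients, $j \in N_y$ vs.\ $j \notin N_y$ cases), so no substantive difference.
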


\begin{subproof}[Proof of Claim \ref{mixcontrol}]
For any $t$ satisfying the conditions of the claim, we necessarily have that Corollary \ref{mixoffdiag2} holds. As a result, we have:
\begin{align*}
    \sum_{r \in [m]} \sum_{\ell \in [2]} \inrprod{w_{y, r}^{(t)}}{v_{y, \ell}} = O(\log k) \implies \sum_{r \in [m]} \sum_{\ell \in [2]} \inrprod{w_{y, r}^{(t)}}{v_{s, \ell}} = O(\log(k)/k).
\end{align*}
Thus, we may disregard the off-diagonal correlations in considering the class $y$ output on $z_{i, j}$ (i.e. we do not need to worry about the $x_j$ part of $z_{i, j}$), and the rest is identical to Claim \ref{gcontrol}.
\end{subproof}

\begin{claim}\label{mixgrowth}
For each $y \in [k]$, let $T_y$ denote the first iteration at which $\max_{i \in N_y, \ j \notin N_y} g_{T_y}^y(z_{i, j}) \geq \log (k - 1)$. Then we have that $T_y = O(\poly(k))$ (for a sufficiently large polynomial in $k$) and that $\min_{i \in N_y, \ j \notin [N]} g_{t}^y(z_{i, j}) = \Omega(\log k)$ for all $t \geq T_y$.
\end{claim}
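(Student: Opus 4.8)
The plan is to mirror Claim~\ref{ermgrowth}, substituting the softmax control of Claim~\ref{mixsmax} and Corollary~\ref{mixsmax2} for that of Claim~\ref{ermsmax}, the mixed-point gradient of Calculation~\ref{mmgrad} for that of Calculation~\ref{ermgrad}, and the off-diagonal control of Claim~\ref{mixoffdiag} and Corollary~\ref{mixoffdiag2} for that of Claim~\ref{offdiag}. First I would apply Claim~\ref{mixoffdiag} with $A = \rho/(\delta_2 - \delta_1)$ to obtain, in $O(\poly(k))$ iterations, a weight $w_{y, r^*}^{(t)}$ and feature $v_{y, \ell^*}$ with $\inrprod{w_{y, r^*}^{(t)}}{v_{y, \ell^*}} \geq \rho/(\delta_2 - \delta_1)$ while every off-diagonal correlation $\inrprod{w_{y, r}^{(t)}}{v_{s, \ell}}$ with $s \neq y$ stays $O(\Lambda_{y, \ell^*}^{(t)}/k)$ and those positive at initialization stay positive. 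Re-running the lower bound of Equation~\ref{diaglb} past the polynomial regime --- now the relevant signal patch of an $x_i$ with $i \in N_y$ contributes a derivative $\Theta(1)$ in the linear part of $\relu$ rather than $\Theta(\inrprod{w_{y, r^*}^{(t)}}{v_{y, \ell^*}}^{\alpha - 1}/\rho^{\alpha - 1})$ --- shows $\inrprod{w_{y, r^*}^{(t)}}{v_{y, \ell^*}}$ keeps increasing and reaches $2\rho/\delta_1$ in $O(\poly(k))$ further iterations, i.e. $r^* \in B_{y, \ell^*}^{(t)}$ for the modified set of Equation~\ref{bylmixup}; at that point, for every $j$, any patch of $z_{i, j}$ built from a $v_{y, \ell^*}$-signal patch of an $x_i$ ($i \in N_y$) sits in the linear part of $\relu$, since the $v_{y, \ell^*}$-coefficient there is at least $\delta_1/2$.

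Next I would lower bound $\inrprod{-\eta\grad_{w_{y, r^*}^{(t)}} J_{MM}(g, \X)}{v_{y, \ell^*}}$ via Calculation~\ref{mmgrad}, splitting the double sum by whether $\{i, j\}$ has $0$, $1$, or $2$ indices in $N_y$. Pairs with neither index in $N_y$ are negligible: $v_{y, \ell^*}$ occurs in such a $z_{i, j}$ only inside feature-noise patches, whose total $w_{y, r^*}^{(t)}$-correlation is $O(\delta_4 \Lambda_{y, \ell^*}^{(t)}) = o(\rho)$, so those patches are deep in the polynomial part of $\relu$ and the $\relu'$ suppression makes their total contribution $o(1/\poly(k))$. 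Pairs with both indices in $N_y$ contribute a non-negative amount, since there $\Ind_{y_i = y} + \Ind_{y_j = y} - 2\SM^y = 2(1 - \SM^y) \geq 0$ and the gradient term is non-negative. The decisive group is the cross-class pairs ($i \in N_y$, $j \notin N_y$), of which there are $\Theta(N^2/k)$: because every $g_t^u$ is a sum of $\relu$ outputs and hence non-negative, $\sum_{u \neq y}\exp(g_t^u(z_{i, j})) \geq k - 1$, so whenever $\max_{i \in N_y,\, j \notin N_y} g_t^y(z_{i, j}) < \log(k - 1)$ we have $\exp(g_t^y(z_{i, j})) < k - 1$ and thus $1 - 2\SM^y(g_t(z_{i, j})) > 0$ for every such pair; moreover, by Corollary~\ref{mixoffdiag2} together with $g_t^y(x_i) = O(\log k)$, the value $g_t^y(z_{i, j})$ depends on $j$ only through off-diagonal correlations summing to $O(\log k / k)$, so a single scalar threshold governs all cross-class pairs at once. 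Consequently, while the maximum is below $(1 - \epsilon)\log(k - 1)$ for a small constant $\epsilon$, we get $\SM^y(g_t(z_{i, j})) = o(1)$ and hence $1 - 2\SM^y = \Theta(1)$ uniformly; combining the three groups and using $\abs{N_u} = \Theta(N/k)$ for all $u$ gives, exactly as in Claim~\ref{ermgrowth}, $\inrprod{-\eta\grad_{w_{y, r^*}^{(t)}} J_{MM}(g, \X)}{v_{y, \ell^*}} \geq \Theta(\eta/k)$, which forces $g_t^y(z_{i, j})$ (linear in this correlation on the relevant signal patch) across $\log(k - 1)$ in $O(\poly(k))$ iterations; the last stretch from $(1 - \epsilon)\log(k - 1)$ to $\log(k - 1)$ is covered because there the update stays non-negative up to $o(1/\poly(k))$, and, running the same growth bound in parallel across all classes, each partner class $s$ of a cross-class point has itself accumulated $g_t^s(z_{i, j}) = \Omega(1)$ by then, keeping $1 - 2\SM^y$ from turning negative. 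This proves $T_y = O(\poly(k))$.

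For the persistence statement, at $t = T_y$ we have $\max_{i \in N_y,\, j \notin N_y} g_{T_y}^y(z_{i, j}) \geq \log(k - 1) = \Theta(\log k)$, and since a same-class mixed point carries the same total class-$y$ signal up to constants, $\max_{i \in N_y,\, j \in [N]} g_{T_y}^y(z_{i, j}) = \Theta(\log k)$, so Claim~\ref{mixcontrol} yields $\min_{i \in N_y,\, j \in [N]} g_{T_y}^y(z_{i, j}) = \Omega(\log k)$. For $t > T_y$, the same three-group decomposition shows the diagonal gradient is non-negative up to $o(1/\poly(k))$ whenever $g_t^y$ would otherwise dip below $\log(k - 1)$ on a cross-class point, and a single gradient step moves any correlation by only $O(\eta) = o(1)$, so $\min_{i \in N_y,\, j \in [N]} g_t^y(z_{i, j})$ cannot subsequently fall below $\Omega(\log k)$.

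The main obstacle is the middle step, specifically the softmax control: unlike the ERM case where $g_t^s(z_{i, j}) = o(1)$ for every $s \neq y$ made $1 - \SM^y$ automatically $\Theta(1)$, here $z_{i, j}$ can have $g_t^s(z_{i, j})$ as large as $\Theta(\log k)$ once the partner class $s$ has learned its features, so the softmax denominator is no longer controlled pointwise. The three ingredients that rescue the argument are (i) the sign of $1 - 2\SM^y$ is pinned purely by non-negativity of $\relu$ and the $\log(k - 1)$ threshold, irrespective of the other outputs; (ii) $g_t^y(z_{i, j})$ is nearly independent of $j$ by Corollary~\ref{mixoffdiag2}, so one threshold handles all $\Theta(N^2/k)$ cross-class pairs simultaneously; and (iii) a parallel run of the growth bound across classes guarantees the mixed partner also has non-trivial output by the time the threshold is approached.
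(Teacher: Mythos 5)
Your proposal follows the paper's proof of Claim \ref{mixgrowth} essentially step for step: two applications of the logic of Claim \ref{mixoffdiag} to push $\inrprod{w_{y,r^*}^{(t)}}{v_{y,\ell^*}}$ first past $\rho/(\delta_2-\delta_1)$ and then past $2\rho/\delta_1$, the same three-way decomposition of the Mixup gradient as in Equation \ref{lindiaglb}, a $\Theta(\eta/k)$ per-iteration lower bound from the cross-class pairs while the class-$y$ output is below threshold, and Claim \ref{mixcontrol} together with positivity and $o(1)$-size of the updates for the persistence statement. The one place you depart from the paper is the softmax control near the threshold, and that is where your write-up has a genuine gap: $T_y$ is \emph{defined} as the first crossing of $\log(k-1)$, so you must exhibit a positive per-iteration growth rate all the way up to that value. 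Your argument certifies $\Theta(\eta/k)$ growth only while the maximum is below $(1-\epsilon)\log(k-1)$; in the remaining stretch you claim only that the update is ``non-negative up to $o(1/\poly(k))$'' and that partner classes have $g_t^s(z_{i,j})=\Omega(1)$, which prevents a decrease but gives no rate, so as written you have not shown that the maximum ever crosses $\log(k-1)$, let alone in $O(\poly(k))$ iterations.

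For comparison, the paper simply invokes the $\Theta(1)$ branch of Claim \ref{mixsmax} for all $t$ with $\max_{i\in N_y,\,j\notin N_y} g_t^y(z_{i,j})<\log(k-1)$, i.e.\ it treats $1-2\SM^y(g_t(z_{i,j}))$ as bounded below by a constant right up to the threshold, and concludes the $\Theta(\eta/k)$ bound (Equation \ref{diaglb2}) and hence polynomial-time crossing directly. Your elementary observation that non-negativity of $\relu$ forces the softmax denominator to be at least $k-1$, hence $1-2\SM^y>0$ below $\log(k-1)$, is a nice sign-pinning device (and is what the persistence part really needs), but to finish the first half of the claim you need a quantitative floor in the last stretch --- for instance by keeping a lower bound on the same-class pairs' $1-\SM^y$ term (which you weakened to bare non-negativity) and checking it dominates the $O(P\delta_4^{\alpha}(\log k)^{\alpha-1}/\rho^{\alpha-1})$ feature-noise term, or by arguing the cross-class coefficient stays bounded away from zero as the paper does via Claim \ref{mixsmax}. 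Everything else in your proposal matches the paper's argument and is fine at the paper's level of rigor.
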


\begin{subproof}[Proof of Claim \ref{mixgrowth}]
As in the proof of Claim \ref{ermgrowth}, applying Claim \ref{mixoffdiag} to any class $y$ yields the existence of a correlation $\inrprod{w_{y, r^*}^{(t)}}{v_{y, \ell^*}}$ and a $t = O(\poly(k))$ such that $\inrprod{w_{y, r^*}^{(t)}}{v_{y, \ell^*}} > \rho/(\delta_2 - \delta_1)$. And again, reusing the logic of Claim \ref{mixoffdiag} but replacing $\inrprod{w_{y, r^*}^{(t)}}{v_{y, \ell^*}}$ in Equation \ref{diaglb} with $\rho/(\delta_2 - \delta_1)$ yields that in an additional $O(\poly(k))$ iterations we have $\inrprod{w_{y, r^*}^{(t)}}{v_{y, \ell^*}} > 2\rho/\delta_1$ (implying that $w_{y, r^*}^{(t)}$ has reached the linear regime of $\relu$ on effectively all mixed points) while the off-diagonal correlations continue to lag behind by a $O(1/k)$ factor.

At this point we may lower bound the update to $\inrprod{w_{y, r^*}^{(t)}}{v_{y, \ell^*}}$ as:
\begin{align*}
    \inrprod{-\eta \grad_{w_{y, r^*}^{(t)}}J_{MM}(g, \X)}{v_{y, \ell^*}} &\geq \frac{\eta}{N^2} \sum_{i \in N_y} \sum_{j \notin N_y} \Theta\Big(1 - 2\SM^y\big((g_t(z_{i, j}))\big)\Big)  \\ 
    &+ \frac{\eta}{N^2} \sum_{i \in N_y} \sum_{j \in N_y} \Theta\Big(1 - \SM^y\big((g_t(z_{i, j}))\big)\Big) \\
    &- \frac{\eta}{N^2} \sum_{i \notin N_y} \sum_{j \notin N_y} \SM^y\big(g_t(z_{i, j})\big) \Theta \left(\frac{P\sigma_4^{\alpha}\max_{u \in [k], q \in [2]}\inrprod{w_{y, r^*}^{(t)}}{v_{u, q}}^{\alpha - 1}}{\rho^{\alpha - 1}}\right). \numberthis \label{lindiaglb}   
\end{align*}
Using Claim \ref{mixsmax}, we have that so long as $\max_{i \in N_y, \ j \notin N_y} g_t^y(z_{i, j}) < \log (k - 1)$, we get (using $\abs{N_u} = \Theta(N/k)$ for all $u \in [k]$):
\begin{align*}
    \inrprod{-\eta \grad_{w_{y, r^*}^{(t)}}J_{MM}(g, \X)}{v_{y, \ell^*}} \geq \Theta(\eta/k). \numberthis \label{diaglb2}
\end{align*}
This also implies, by the logic of Claim \ref{mixoffdiag}, that the off-diagonal correlations $\inrprod{w_{y, r}^{(t)}}{v_{s, \ell}}$ have updates that can be upper bounded as:
\begin{align*}
    \inrprod{-\eta \grad_{w_{y, r}^{(t)}}J_{MM}(g, \X)}{v_{s, \ell}} \leq \Theta(\eta/k^2). \numberthis \label{offdiagub3}
\end{align*}
Comparing Equations \ref{diaglb2} and \ref{offdiagub3}, we have that $g_t^y(z_{i, j}) \geq \log (k - 1)$ (and, consequently, $g_t^y(x_i) \geq \log (k - 1)$) for at least one mixed point $z_{i, j}$ with $i \in N_y$ in $O(\poly(k))$ iterations while the off-diagonal correlations are $O(\log(k)/k)$. This also implies that $\min_{i \in N_y, \ j \in [N]} g_{t}^y(z_{i, j}) = \Omega(\log k)$ by Claim \ref{mixcontrol}. Finally, since Equation \ref{diaglb2} is positive, the class $y$ network outputs remain $\Omega(\log k)$ for $t \geq T_y$ (as again we cannot decrease below $\log (k - 1)$ by more than $o(1)$ since the gradients are $o(1)$).
\end{subproof}

\underline{\textbf{Part II.}}
Having analyzed the growth of diagonal and off-diagonal correlations in the initial stages of training, we now shift gears to focusing on the gaps between the correlations for each class. The key idea is that $J_{MM}$ will push the correlations for the features $v_{y, 1}$ and $v_{y, 2}$ closer together throughout training (so long as they are sufficiently separated), for every class $y$.

In order to prove this, we will rely on analyzing an expectation form of the gradient for $J_{MM}$. As the expressions involved in this analysis can become cumbersome quite quickly, we will first introduce a slew of new notation to make the presentation of the results a bit easier.

Firstly, in everything that follows, we assume $v_{y, 1}$ to be the better correlated feature at time $t$ for every class $y \in [k]$ in the following sense:
\begin{align}
    \sum_{r \in B_{y, 1}^{(t)}} \inrprod{w_{y, r}^{(t)}}{v_{y, 1}} \geq \sum_{r \in B_{y, 2}^{(t)}} \inrprod{w_{y, r}^{(t)}}{v_{y, 2}}. \label{leadfeat}
\end{align}
Where the sets $B_{y, \ell}^{(t)}$ are as defined in Equation \ref{bylmixup}. As the quantities in Equation \ref{leadfeat} will be repeatedly referenced, we define:
\begin{align*}
    C_{y, \ell}^{(t)} &\triangleq \sum_{r \in B_{y, \ell}^{(t)}} \inrprod{w_{y, r}^{(t)}}{v_{y, \ell}} \numberthis \label{leadcyi} \\
    \Delta_y^{(t)} &\triangleq C_{y, 1}^{(t)} - C_{y, 2}^{(t)}. \numberthis \label{cyidelta}
\end{align*}


Now for the aforementioned expectation analysis we introduce several relevant random variables.
We use $\beta_{y, p}$ (for every $y \in [k]$) to denote a random variable following the distribution of the signal coefficients for class $y$ from Definition \ref{datadist} and we further use $\beta_y$ to denote a random variable representing the sum of $C_P$ i.i.d. $\beta_{y, p}$. Similarly, we use $z_{y, s}$ to denote the average/midpoint of two random variables following the distributions of class $y$ and class $s$ points respectively. Finally, we define $A_1(\beta_s, \beta_y)$ and $A_2(\beta_s, \beta_y)$ as:

\begin{align*}
    A_1(\beta_s, \beta_y) &\triangleq 1 - 2\SM^y\big((g_t(z_{y, s}))\big) \numberthis \label{a1}, \\
    A_2(\beta_s, \beta_y) &\triangleq 1 - \SM^y\big((g_t(z_{y, s}))\big) \numberthis \label{a2}.
\end{align*}

In context, this notation will imply that $A_1(\beta_y, \beta_s) = 1 - 2\SM^s\big((g_t(z_{y, s}))\big)$ (i.e. swapping the order of arguments changes which coordinate of the softmax is being considered). Note that the way we have defined $A_1$ and $A_2$ in Equations \ref{a1} and \ref{a2} is a slight abuse of notation; the RHS of each equation is a function of random variables $\beta_{u, p}$ for each $u \in [k]$. However, only the linear terms involving classes $y$ and $s$ will be asymptotically relevant, and so we write $A_1(\beta_s, \beta_y)$ and $A_2(\beta_s, \beta_y)$ to emphasize this.

Now we will first prove an upper bound on the difference of gradient correlations in the linear regime, and then use these ideas to prove that correlations in the poly part of $\relu$ will still get significant gradient. After we have done that, we will revisit this next claim to show that the separation between feature correlations continues to decrease even after they reach the linear regime.

\begin{claim}\label{linexp}
Suppose that $\max_{s \in [k]} C_{s, 1}^{(t)} = O(\log k)$. Then for any class $y \in [k]$ and any $r_1 \in B_{y, 1}^{(t)}$ and $r_2 \in B_{y, 2}^{(t)}$, we let:
\begin{align*}
    \Psi(r_1, r_2) \triangleq \inrprod{-\grad_{w_{y, r_1}^{(t)}}J_{MM}(g, \X)}{v_{y, 1}} - \inrprod{-\grad_{w_{y, r_2}^{(t)}}J_{MM}(g, \X)}{v_{y, 2}}. \numberthis \label{gradgap}
\end{align*}
After which we have that:
\begin{align*}
    \Psi(r_1, r_2) &\leq \Theta\left(\frac{1}{k^2}\right) \sum_{s \neq y} \E_{\beta_s, \beta_y} \left[A_1(\beta_s, \beta_y) (\beta_y - C_P\delta_2/2)\right] \\
    &+ \Theta\left(\frac{1}{k^2}\right) \E_{\beta_y} \left[A_2(\beta_y, \beta_y) (\beta_y - C_P\delta_2/2)\right] \\
    &+ O\left(P \delta_4^{\alpha}(\log k)^{\alpha - 1}\right). \numberthis \label{efullgrad}
\end{align*}
\end{claim}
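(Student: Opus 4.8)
The plan is to expand $\grad_{w_{y,r}^{(t)}}J_{MM}$ using Calculations \ref{mmgrad} and \ref{gpdv}, partition the double sum over training pairs $(i,j)$ by class membership, evaluate the feature-projected per-pair gradients in the linear regime of $\relu$, and then pass from the empirical sums to the expectations $\E_{\beta_s,\beta_y}[\cdot]$ by concentration. Concretely, I would split $\{(i,j):i,j\in[N]\}$ into the four groups: (a) $i,j\in N_y$; (b) $i\in N_y$, $j\notin N_y$; (c) $i\notin N_y$, $j\in N_y$; (d) $i,j\notin N_y$. In Calculation \ref{mmgrad} the scalar prefactor $\Ind_{y_i=y}+\Ind_{y_j=y}-2\SM^y(g_t(z_{i,j}))$ equals $2(1-\SM^y(g_t(z_{i,j})))$ on (a), $1-2\SM^y(g_t(z_{i,j}))$ on (b) and (c), and $-2\SM^y(g_t(z_{i,j}))$ on (d); in the notation of Equations \ref{a1}--\ref{a2} these are $A_2$, $A_1$ (up to the convention on argument order), and a polynomially small quantity, respectively. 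Groups (b) and (c) coincide after relabelling $i\leftrightarrow j$, so they contribute one $A_1$-weighted sum with a factor $2$.

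The core step is evaluating the feature-projected single-point gradients. Fix $r_1\in B_{y,1}^{(t)}$, so $\inrprod{w_{y,r_1}^{(t)}}{v_{y,1}}\ge 2\rho/\delta_1$: the threshold $2\rho/\delta_1$ (rather than $\rho/\delta_1$, as in the ERM analysis) is chosen precisely so that after mixing, on every signal patch of $z_{i,j}$ carrying $v_{y,1}$ the pre-activation $\inrprod{w_{y,r_1}^{(t)}}{z_{i,j}^{(p)}}$ still exceeds $\rho$, using $\beta_{i,p}\ge\delta_1$ together with the fact (Corollary \ref{mixoffdiag2}) that off-diagonal correlations with positive initialization remain positive, so a competing signal patch of the mixing partner cannot drag the input into the polynomial region. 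Hence $\relu'=1$ on those patches and $\inrprod{\grad_{w_{y,r_1}^{(t)}}g_t^y(z_{i,j})}{v_{y,1}}$ collapses to one half of the sum of the $v_{y,1}$-coefficients of $z_{i,j}$, plus a $\relu$-poly-suppressed contribution of order $P\delta_4^{\alpha}(\log k)^{\alpha-1}$ from the feature-noise patches (this is where $\max_s C_{s,1}^{(t)}=O(\log k)$ and $\delta_3,\delta_4=o(\rho)$ enter). The mirrored construction of the signal patches — each $v_{y,1}$ patch $x^{(p)}=\beta_{y,p}v_{y,1}$ paired with $x^{(\varphi(p))}=(\delta_2-\beta_{y,p})v_{y,2}$ — gives that across the $C_P$ signal patches of a class-$y$ point the $v_{y,1}$-coefficients sum to $\beta_y$ (a sum of $C_P$ i.i.d.\ copies) and the $v_{y,2}$-coefficients to $C_P\delta_2-\beta_y$. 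Subtracting the two feature-projected gradients then produces, per pair, the net coefficient $\beta_y-C_P\delta_2/2$ for groups (b)/(c) (using the class-$y$ endpoint) and its symmetrization over both endpoints for group (a), multiplied by the appropriate $A_1$ or $A_2$ prefactor; group (d) folds into the error term since there $g_t^y$ sees $v_{y,\ell}$ only through feature-noise patches, so $\SM^y(g_t(z_{i,j}))$ is polynomially small.

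For the final step I would run concentration: since $|N_y|=|N_s|=\Theta(N/k)$, the normalized sum $\tfrac{1}{N^2}\sum_{i\in N_y,\,j\in N_s}A_1(\beta_{s,j},\beta_{y,i})(\beta_{y,i}-C_P\delta_2/2)$ is within $o(1)$ of $\Theta(1/k^2)\,\E_{\beta_s,\beta_y}[A_1(\beta_s,\beta_y)(\beta_y-C_P\delta_2/2)]$, and similarly the group-(a) sum is within $o(1)$ of $\Theta(1/k^2)\,\E_{\beta_y}[A_2(\beta_y,\beta_y)(\beta_y-C_P\delta_2/2)]$. Because the same $\beta_{y,i}$ is shared across all $j$, I would do this in two stages — Hoeffding over $j$ for each fixed $i$ (the $A_1$ are bounded), then over $i$ — and union-bound over all $s\neq y$ and all $y\in[k]$; taking $N$ sufficiently large makes the total deviation smaller than the $P\delta_4^{\alpha}(\log k)^{\alpha-1}$ error already present, so it can be absorbed into the upper bound. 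Summing the group-(b)/(c) contribution over $s\neq y$ and adding the group-(a) and error terms then establishes the bound in Equation \ref{efullgrad}.

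The hard part is the reduction in the second paragraph: showing rigorously that every $r_1\in B_{y,1}^{(t)}$ and $r_2\in B_{y,2}^{(t)}$ genuinely sits in the linear regime of $\relu$ on all the mixed signal patches of the relevant classes — so that the feature-projected gradients equal the coefficient sums exactly, not merely up to a constant — while simultaneously keeping the softmax prefactors $A_1,A_2$ uniformly $O(1)$, which is exactly what forces the hypothesis $\max_s C_{s,1}^{(t)}=O(\log k)$ and mirrors the hypotheses of Claim \ref{mixsmax}. The concentration bookkeeping, by contrast, is routine once $N$ is taken large enough relative to $k$.
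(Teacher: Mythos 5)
Your proposal is correct and follows essentially the same route as the paper: bound $\Psi$ empirically by exploiting that $r_1\in B_{y,1}^{(t)}$, $r_2\in B_{y,2}^{(t)}$ put the relevant signal patches of the mixed points in the linear regime of $\relu$ (so the mirrored coefficients subtract to $\beta_y - C_P\delta_2/2$ per pair, with feature noise absorbed into the $O(P\delta_4^{\alpha}(\log k)^{\alpha-1})$ term via $\max_s C_{s,1}^{(t)}=O(\log k)$), then pass to expectations by concentration for bounded variables and marginalize out the classes not involved in each mixture into the same error term. Your explicit four-group decomposition of the pair sum and the two-stage Hoeffding argument are just a more detailed rendering of the paper's appeal to Equation \ref{lindiaglb} and "concentration for bounded random variables."
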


\begin{subproof}[Proof of Claim \ref{linexp}]
Using the logic from Equation \ref{lindiaglb} as well as the fact that $r_1 \in B_{y, 1}^{(t)}$ and  $r_2 \in B_{y, 2}^{(t)}$ (i.e. we are considering weights in the linear regime of $\relu$ for each feature), we get:
\begin{align*}
    \Psi(r_1, r_2) &\leq \frac{1}{N^2} \sum_{i \in N_y} \sum_{j \notin N_y} \Big(1 - 2\SM^y\big((g_t(z_{i, j}))\big)\Big) \left(\sum_{p \in P_{y, 1}(x_i)} \beta_{i, p} - C_P \delta_2 / 2\right) \\
    &+ \frac{1}{N^2} \sum_{i \in N_y} \sum_{j \notin N_y} \Big(1 - \SM^y\big((g_t(z_{i, j}))\big)\Big) \left(\sum_{p \in P_{y, 1}(x_i)} \beta_{i, p} - C_P\delta_2/2 \right) \\
    &+ O\left(P \delta_4^{\alpha}(\log k)^{\alpha - 1}\right). \numberthis \label{lingradub}
\end{align*}
Now since we took $N$ sufficiently large in Assumption \ref{as1}, by concentration for bounded random variables we can replace the expressions on the RHS above with their expected values, as the deviation will be within $O\left(P \delta_4^{\alpha}(\log k)^{\alpha - 1}\right)$ (with probability $1 - O(1/k)$, consistent with our setting). 

The expectations will be over all of the random variables $\beta_{u, p}$ for $u \in [k]$, not just the summation random variables $\beta_y$ and $\beta_s$. However, we observe that for the mixed point random variable $z_{y, s}$, the $\beta_{u, p}$ for $u \neq y, s$ can only show up in the feature noise patches of $z_{y, s}$. By the same logic as the calculation controlling the feature noise contribution to the gradient above (once again, the relevant calculation is in Equation \ref{lindiaglb}), the contribution from these terms to the expectations will be within the last asymptotic term of Equation \ref{lingradub}. Thus, similar to Equations \ref{a1} and \ref{a2}, we again abuse notation and write the expectations in Equation \ref{efullgrad} with respect to only the variables $\beta_y$ and $\beta_s$ to indicate this.

\end{subproof}

We will now show that $\E_{\beta_s, \beta_y} \left[A_1(\beta_s, \beta_y) (\beta_y - C_P\delta_2/2)\right]$ is significantly negative so long as the separation between feature correlations $\Delta_y^{(t)}$ is sufficiently large. Once again, to simplify notation even further, we will use $\Tilde{\beta_y} = \beta_y - C_P\delta_2/2$ and use $\Prob(\Tilde{\beta_y})$ to refer to its associated probability measure. 

\begin{claim}\label{linexpgap}
Suppose that $\max_{u \in [k]} C_{u, 1}^{(t)} = O(\log k)$. Let $y$ be any class such that $\Delta_y^{(t)} \geq \log k - o(1)$, and for each $s \in [k]$ with $s \neq y$ let:
\begin{align*}
    \Lambda_{y, s} \triangleq \sum_{u \neq y} \exp(g_t^u(z_{y, s})). \numberthis \label{linlambdaexp}
\end{align*}
Now so long as there exists at least one class $s \neq y$ such that the set $U_{y, s}$ of all $(\Tilde{\beta_s}, \Tilde{\beta_y}) \in [0, C_P (\delta_2 / 2 - \delta_1)] \times [-C_P (\delta_2 / 2 - \delta_1), C_P (\delta_2 / 2 - \delta_1)]$ satisfying $\Lambda_{y, s} \leq \Tilde{\beta_y} \exp(\Tilde{\beta_y} \Delta_y^{(t)}/2)$ has measure $(\Prob(\Tilde{\beta_s}) \times \Prob(\Tilde{\beta_y}))(U) = \Theta(1)$ (note that by Equation \ref{linlambdaexp}, $\Tilde{\beta_s}$ factors into $\Lambda_{y, s}$), then we have:
\begin{align*}
    \E_{\beta_s, \beta_y} \left[A_1(\beta_s, \beta_y) (\beta_y - C_P\delta_2/2)\right] = -\Theta\left(1\right). \numberthis \label{gradsep}
\end{align*}
\end{claim}

\begin{subproof}[Proof of Claim \ref{linexpgap}]
We begin by first showing that the expectation on the LHS of Equation \ref{gradsep} is negative. Indeed, this is almost immediate from the fact that $\Tilde{\beta_y}$ is a symmetric, mean zero random variable - we need only show that $A_1$ is monotonically decreasing in $\beta_y$.

From the definition of $A_1$, we observe that it suffices to show that $g_t^y(z_{y, s})$ is monotonically increasing in $\beta_y$. However, this is straightforward to see from the assumption that $\Delta_y^{(t)} \geq \log k - o(1)$, as this implies that an $\epsilon$ increase in $\beta_y$ leads to a $O(\epsilon \log k) - O(\epsilon)$ increase in $g_t^y$, since the feature noise and weights that are in the polynomial part of $\relu$ can contribute at most $O(1)$ by the logic of Claim \ref{mixcontrol}.

Now we need only show that the expectation is sufficiently negative. To do this, we will restrict our attention to a class $s \neq y$ such that the condition on the set $U_{y, s}$ from the claim is satisfied. We will also rely on the following facts, which will allow us to write things purely in terms of $C_{y, \ell}^{(t)}$ and $C_{s, \ell}^{(t)}$ (i.e. disregarding the weights that are not in the linear regime):
\begin{align*}
    g_t^y(z_{y, s}) &\in \left[\left(\beta_y C_{y, 1}^{(t)} + (C_P \delta_2 - \beta_y) C_{y, 2}^{(t)}\right)/2, \ \left(\beta_y C_{y, 1}^{(t)} + (C_P \delta_2 - \beta_y) C_{y, 2}^{(t)}\right)/2 + O(1)\right], \numberthis \label{gtylin} \\
    g_t^s(z_{y, s}) &\in \left[\left(\beta_s C_{s, 1}^{(t)} + (C_P \delta_2 - \beta_s) C_{s, 2}^{(t)}\right)/2, \ \left(\beta_s C_{s, 1}^{(t)} + (C_P \delta_2 - \beta_s) C_{s, 2}^{(t)}\right)/2 + O(1)\right], \numberthis \label{gtslin} \\
    g_t^u(z_{y, s}) &= O\left(\frac{\log k}{k}\right) \quad \text{for } u \neq y, s. \numberthis \label{gtulin}
\end{align*}
Which follow from Claim \ref{mixcontrol} and Corollary \ref{mixoffdiag2} (alongside the assumption that $\max_{u \in [k]} C_{u, 1}^{(t)} = O(\log k)$) respectively. Now we perform the substitution $g_t^u \leftarrow g_t^u - C_P \delta_2 (C_{y, 1}^{(t)} + C_{y, 2}^{(t)})/4$ for all $u \in [k]$, as this can be done without changing the value of $\SM^y(g_t(z_{y, s}))$. 
Under this transformation (which is just centering) we have that (using Equation \ref{gtylin}):
\begin{align*}
    g_t^y(z_{y, s}) &\in  \left[(\beta_y - C_P \delta_2/2) \Delta_y^{(t)}/2, (\beta_y - C_P \delta_2/2) \Delta_y^{(t)}/2 + O(1)\right]. \numberthis \label{gty2}
\end{align*}
Which isolates the correlation gap term $\Delta_y^{(t)}$. Now we have:
\begin{align*}
    \E_{\beta_s, \beta_y} &\left[A_1(\beta_s, \beta_y) (\beta_y - C_P\delta_2/2)\right] \\ 
    &= \int_{C_P \delta_1}^{C_P (\delta_2 - \delta_1)} \int_{C_P \delta_1}^{C_P (\delta_2 - \delta_1)} \frac{\Lambda_{y, s}^{(t)} - \exp(g_t^y(z_{y, s}))}{\Lambda_{y, s}^{(t)} + \exp(g_t^y(z_{y, s}))} (\beta_y - C_P\delta_2/2) \ d\Prob(\beta_y) d\Prob(\beta_s) \\
    &\leq \int_{C_P (\delta_1 - \delta_2/2)}^{C_P (\delta_2/2 - \delta_1)} \int_{C_P (\delta_1 - \delta_2/2)}^{0} \frac{\Lambda_{y, s}^{(t)} - \exp(\Tilde{\beta_y} \Delta_y^{(t)}/2 + O(1))}{\Lambda_{y, s}^{(t)} + \exp(\Tilde{\beta_y} \Delta_y^{(t)}/2 + O(1))} \Tilde{\beta_y} \ d\Prob(\Tilde{\beta_y}) d\Prob(\Tilde{\beta_s}) \\
    &+ \int_{C_P (\delta_1 - \delta_2/2)}^{C_P (\delta_2/2 - \delta_1)} \int_{0}^{C_P (\delta_2/2 - \delta_1)} \frac{\Lambda_{y, s}^{(t)} - \exp(\Tilde{\beta_y} \Delta_y^{(t)}/2)}{\Lambda_{y, s}^{(t)} + \exp(\Tilde{\beta_y} \Delta_y^{(t)}/2)} \Tilde{\beta_y} \ d\Prob(\Tilde{\beta_y}) d\Prob(\Tilde{\beta_s}). \numberthis \label{doubleint}
\end{align*}
Where above we used Equation \ref{gty2} to get the upper bound in the last step. We next focus on bounding the inner integral in Equation \ref{doubleint}. Using the symmetry of $\Tilde{\beta_y}$, we have that:
\begin{align*}
    &\E_{\beta_y} \left[A_1(\beta_s, \beta_y) (\beta_y - C_P\delta_2/2)\right] \\
    &= -\int_0^{C_P (\delta_2/2 - \delta_1)} \left(\frac{\Lambda_{y, s}^{(t)} - \exp(-\Tilde{\beta_y}\Delta_y^{(t)}/2 + O(1))}{\Lambda_{y, s}^{(t)} + \exp(-\Tilde{\beta_y}\Delta_y^{(t)}/2 + O(1))} - \frac{\Lambda_{y, s}^{(t)} - \exp(\Tilde{\beta_y}\Delta_y^{(t)}/2)}{\Lambda_{y, s}^{(t)} + \exp(\Tilde{\beta_y}\Delta_y^{(t)}/2)}\right) \Tilde{\beta_y} \ d\Prob(\Tilde{\beta_y}) \\
    &= -\int_0^{C_P (\delta_2/2 - \delta_1)} \frac{2\Lambda_{y, s}^{(t)}\left(\exp(\Tilde{\beta_y} \Delta_y^{(t)}/2) - \exp(-\Tilde{\beta_y} \Delta_y^{(t)}/2 + O(1))\right)}{(\Lambda_{y, s}^{(t)})^2 + \Lambda_{y, s}^{(t)} \left(\exp(\Tilde{\beta_y} \Delta_y^{(t)}/2) + \exp(-\Tilde{\beta_y} \Delta_y^{(t)}/2 + O(1))\right) + \exp(O(1))} \Tilde{\beta_y} \ d\Prob(\Tilde{\beta_y}). \numberthis \label{innerint} 
\end{align*}
The tricky aspect of Equation \ref{innerint} is the $O(1)$ term in the exponential, which can lead to a positive contribution (via a negative integrand) when $\Tilde{\beta_y}$ is close to 0. However, we can safely restrict the bounds of integration in Equation \ref{innerint} to a region $[\varrho, C_P (\delta_2/2 - \delta_1)]$ for $\varrho = \Theta(1/\log k)$ (with an appropriately large constant), as in such a region the integrand is guaranteed to be positive since $\Delta_y^{(t)} \geq \log k - o(1)$. From Definition \ref{datadist}, this restriction costs us only a $o(1/\log k)$ term in the integrand, which is going to be asymptotically irrelevant.

Indeed, it follows that:
\begin{align*}
    &\E_{\beta_s, \beta_y} \left[A_1(\beta_s, \beta_y) (\beta_y - C_P\delta_2/2)\right] \\
    &\leq -\int_{C_P (\delta_1 - \delta_2/2)}^{C_P (\delta_2/2 - \delta_1)} \int_{\varrho}^{C_P (\delta_2/2 - \delta_1)} \Theta \left(\frac{\Tilde{\beta_y} \Lambda_{y, s}^{(t)} \exp(\Tilde{\beta_y} \Delta_y^{(t)}/2)}{(\Lambda_{y, s}^{(t)})^2 + \Lambda_{y, s}^{(t)} \exp(\Tilde{\beta_y} \Delta_y^{(t)}/2)}\right) d\Prob(\Tilde{\beta_y}) d\Prob(\Tilde{\beta_s}) \\
    &\leq -\int_{U_{y, s}} \Theta \left(\frac{\Tilde{\beta_y} \Lambda_{y, s}^{(t)} \exp(\Tilde{\beta_y} \Delta_y^{(t)}/2)}{(\Lambda_{y, s}^{(t)})^2 + \Lambda_{y, s}^{(t)} \exp(\Tilde{\beta_y} \Delta_y^{(t)}/2)}\right) d\Prob(\Tilde{\beta_s}) \times \Prob(\Tilde{\beta_y}) \\
    &= -\Theta(1). \numberthis \label{finallinstep}
\end{align*}
Where the last line follows from the assumption in the claim on the set $U_{y, s}$, and we are done.
\end{subproof}

We proved Claim \ref{linexpgap} in terms of $\beta_y$ (the sum of the individual $\beta_{y, p}$) to keep notation manageable (avoids $C_P$ iterated integrals) and to more closely mirror the proof of Proposition \ref{linearmixupcase}. However, what we will really use for our remaining analysis is the following corollary, which gives the same result as Claim \ref{linexpgap} but for each of the individual terms $\beta_{y, p}$. Below we use $\sum_{i = 1}^{C_P} \beta_{y, i}$ to make explicit the dependence between the sum and each individual random variable $\beta_{y, p}$ (so as to not mislead one to think of them as independent random variables).

\begin{corollary}\label{linexpgapcor}
Under the same conditions as Claim \ref{linexpgap}, we have for every $p \in [C_P]$:
\begin{align*}
    \E_{\beta_s, \beta_{y, 1}, ..., \beta_{y, C_P}} \left[A_1(\beta_s, \sum_{i = 1}^{C_P} \beta_{y, i}) (\beta_{y, p} - \delta_2/2)\right] = -\Theta\left(1\right). \numberthis \label{gradsepcor}
\end{align*}
\end{corollary}

\begin{subproof}[Proof of Claim \ref{linexpgapcor}]
The proof follows identically to that of Claim \ref{linexpgapcor} (effectively the only change in the computations is that $\Tilde{\beta_y}$ becomes $\Tilde{\beta}_{y, p}$), as the functions $A_1$ and $A_2$ satisfy the same monotonically increasing property in each of the i.i.d. $\beta_{y, p}$ (and there are only $C_P = \Theta(1)$ many). One could also have simply seen this from the symmetry of the $\beta_{y, p}$ in Equation \ref{gradsepcor}; indeed, we expect Equation \ref{gradsepcor} to only differ by a factor $1/C_P$ from Equation \ref{gradsep}.
\end{subproof}

Now we can show using Corollary \ref{linexpgapcor} that there is a significant gradient component towards correcting the separation between the feature correlations even when the second feature correlation is in the polynomial part of $\relu$ (which is where it got stuck for a significant number of classes in the ERM proof). 

\begin{claim}\label{y2gradlb}
Suppose that $\max_{u \in [k]} C_{u, 1}^{(t)} = O(\log k)$. Let $y$ be any class such that $\Delta_y^{(t)} \geq \log k - o(1)$. Then for any $r_2 \notin B_{y, 2}^{(t)}$ satisfying $\inrprod{w_{y, r_2}^{(0)}}{v_{y, 2}} \geq \tau > 0$, so long as there exists an $r_1 \in B_{y, 1}^{(t)}$ satisfying:
\begin{align*}
    \inrprod{-\grad_{w_{y, r_1}^{(t)}}J_{MM}(g, \X)}{v_{y, 1}} \geq 0. \numberthis \label{y1lb}
\end{align*}
We have:
\begin{align*}
    \inrprod{-\grad_{w_{y, r_2}^{(t)}}J_{MM}(g, \X)}{v_{y, 2}} \geq \Theta\left(\frac{\tau}{\rho^{\alpha - 1} k^2}\right). \numberthis \label{y2lb}
\end{align*}
\end{claim}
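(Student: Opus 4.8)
The plan is to carry out, for a weight $r_2$ sitting in the polynomial part of $\relu$, the analogue of the expectation-form gradient analysis already developed for the linear regime in Claims \ref{linexp} and \ref{linexpgap}, using the hypothesized nonnegativity of the leading-feature gradient as the substitute for the set-$U$ condition of Claim \ref{linexpgap}. First I would unpack $r_2 \notin B_{y,2}^{(t)}$: by the definition in Equation \ref{bylmixup} and the signal-patch construction, on every (mixed-point) patch carrying $v_{y,2}$ the pre-activation is $\inrprod{w_{y,r_2}^{(t)}}{v_{y,2}}$ times a $\Theta(1)$ coefficient, so $w_{y,r_2}^{(t)}$ is (at most) in the polynomial regime there and the relevant $\relu'$ value is $\Theta\bigl((\inrprod{w_{y,r_2}^{(t)}}{v_{y,2}}/\rho)^{\alpha-1}\bigr)$. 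Combining the hypothesis $\inrprod{w_{y,r_2}^{(0)}}{v_{y,2}}\ge\tau$ with the ``positive correlations do not collapse'' estimate proved inside Claim \ref{mixoffdiag}, I get $\inrprod{w_{y,r_2}^{(t)}}{v_{y,2}}=\Omega(\tau)$ up to a $\polylog k$ factor; this lets the whole computation of $\inrprod{-\grad_{w_{y,r_2}^{(t)}}J_{MM}(g,\X)}{v_{y,2}}$ factor out the common activation multiplier of order $\tau^{\alpha-1}/\rho^{\alpha-1}$, so it remains to lower bound the leftover ``residual sum'' by $\Theta(1/k^2)$.

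For that, I would expand the residual sum using Calculation \ref{mmgrad} and split the double sum according to which indices lie in $N_y$: the within-class pairs $i,j\in N_y$ carry the factor $2(1-\SM^y(z_{i,j}))\ge 0$; the mixed pairs (exactly one index in $N_y$) carry $1-2\SM^y(z_{i,j})$, which may be negative; and pairs disjoint from $N_y$ enter only through feature-noise patches and are $O(P\delta_4^{\alpha}(\log k)^{\alpha-1}/\rho^{\alpha-1})$, i.e. lower order. As in Claim \ref{linexp} I would then replace the sums over the (large) dataset by expectations over the signal-coefficient variables $\beta_y,\beta_s$, marginalizing out the irrelevant classes since they appear only in feature-noise patches; the within-class part becomes $\Theta(1/k^2)\,\E_{\beta_y}[A_2(\beta_y,\beta_y)\,h_y(\beta_y)]$ and the mixed part becomes $\Theta(1/k^2)\sum_{s\neq y}\E_{\beta_s,\beta_y}[A_1(\beta_s,\beta_y)\,h_y(\beta_y)]$, where $h_y(\beta_y)$ is (up to a factor $2^{-\alpha}$) the conditional expectation of $\sum_{p}(\delta_2-\beta_{y,p})^{\alpha}$ given $\sum_p\beta_{y,p}=\beta_y$. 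By the symmetry of the $\beta_{y,p}$ about $\delta_2/2$ this equals $2^{-\alpha}(C_P\delta_2-\beta_y)\,f_y(C_P\delta_2-\beta_y)$, so by Assumption \ref{as2} the function $h_y$ is \emph{positive and decreasing} in $\beta_y$; this is exactly where Assumption \ref{as2} enters.

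Now comes the crucial step, which is also the main obstacle. The hypothesis that some $r_1\in B_{y,1}^{(t)}$ has $\inrprod{-\grad_{w_{y,r_1}^{(t)}}J_{MM}(g,\X)}{v_{y,1}}\ge 0$ says, after the same expansion but now in the linear regime (so $\relu'=1$ and the per-patch weight is $\sum_p\beta_{y,p}=\beta_y$, up to a factor $2$), that the \emph{same} residual structure weighted by the positive \emph{increasing} function $\beta_y/2$ instead of $h_y$ is $\ge -(\text{lower-order noise})$. Since $A_1(\beta_s,\beta_y)$ and $A_2(\beta_y,\beta_y)$ are decreasing in $\beta_y$ (because $g_t^y$ is increasing in $\beta_y$ under $\Delta_y^{(t)}\ge\log k-o(1)$, the polynomial-regime weights and feature noise only shifting things by $O(1)$ as in Claim \ref{mixcontrol}), a Chebyshev-sum / FKG-type correlation inequality shows the residual is weighted strictly more favorably by the decreasing $h_y$ than by the increasing $\beta_y/2$; together with the strictly positive within-class term (kept of order $1/k^2$ by using $\max_u C_{u,1}^{(t)}=O(\log k)$ and $\Delta_y^{(t)}\ge\log k-o(1)$ to pin $g_t^y$ on pure-class-$y$ mixed points at $\Theta(\log k)$, so $1-\SM^y$ is only polynomially, not super-polynomially, small), this upgrades the $v_{y,1}$-nonnegativity into a genuinely positive $\Theta(1/k^2)$ lower bound for the $v_{y,2}$-weighted residual sum. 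Multiplying the activation multiplier back in yields $\inrprod{-\grad_{w_{y,r_2}^{(t)}}J_{MM}(g,\X)}{v_{y,2}}\ge\Theta(\tau/(\rho^{\alpha-1}k^2))$. The delicate part is that the mixed pairs outnumber the within-class pairs by a factor of $k$, so the monotone-weighting comparison against the leading-feature gradient must be quantitative enough that a slightly negative average over cross-class mixed points cannot swamp the within-class gain; driving every error contribution (data concentration, feature noise, the $O(1)$ polynomial-regime slack in $g_t^y$) below $\Theta(1/k^2)$ is where the bulk of the bookkeeping lies.
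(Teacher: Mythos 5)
Your scaffolding matches the paper's proof: you factor out the polynomial-regime activation multiplier using the persistence of positive correlations established inside Claim \ref{mixoffdiag}, pass to the expectation form as in Claim \ref{linexp} (this is the paper's Equation \ref{efullgrad2}), and bring in Assumption \ref{as2} through a covariance/Chebyshev argument that trades the $\alpha$-power weighting $\sum_p(\delta_2-\beta_{y,p})^{\alpha}$ for a linear one — exactly the paper's step $\mathrm{Cov}\bigl(A_1(\beta_s,\beta_y)(C_P\delta_2-\beta_y),\,f_y(\beta_y)\bigr)>0$. The genuine gap is in your ``crucial step.'' A Chebyshev/FKG comparison between the decreasing weight $h_y(\beta_y)$ and the increasing weight $\beta_y$ gives, conditionally on $\beta_s$, only $\E[A_1 h_y\mid\beta_s]\ge \E[h_y]\,\E[A_1\mid\beta_s]$ and $\E[A_1\beta_y\mid\beta_s]\le \mu\,\E[A_1\mid\beta_s]$ with $\mu=C_P\delta_2/2$, and likewise for the within-class term with $A_2$. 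Summing and inserting the hypothesis \ref{y1lb} yields only $\sum_{s\neq y}\E[A_1 h_y]+\E[A_2 h_y]\ \ge\ (\E[h_y]/\mu)\bigl(\sum_{s\neq y}\E[A_1\beta_y]+\E[A_2\beta_y]\bigr)\ \ge\ -o(1)$: the within-class positivity you plan to bank is precisely the term consumed by the comparison, so nonnegativity of the $v_{y,1}$-gradient is upgraded only to approximate nonnegativity of the $v_{y,2}$-gradient, not to the strictly positive bound \ref{y2lb}. To get strict positivity one needs a \emph{quantitatively} strict correlation bound, i.e.\ one needs $A_1$ to vary by $\Theta(1)$ within the bulk of the $\beta_y$-distribution — which fails exactly when class $s$ dominates the mixed point, since then $A_1\approx 1$ is essentially constant and all covariances vanish. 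The paper handles this with a dichotomy on Equation \ref{lessclass} that your proposal never makes (and you never invoke Claim \ref{linexpgap}): if the condition holds for some $s$, Claim \ref{linexpgap} supplies the strict estimate $\E[A_1\Tilde{\beta_y}]\le-\Theta(1)$, which combined with \ref{y1lb} and linearity gives $\sum_{s\neq y}\E[A_1(C_P\delta_2-\beta_y)]\ge\Theta(1)$ (Equation \ref{y2lbstep1}); if it fails for every $s$, then $D_{y,s}^{(t)}=-\Theta(\log k)$, so $A_1=1-o(1)$ and each cross-class expectation is $\Theta(1)$ outright. Without this case split (or an equivalent quantitative substitute) the argument does not close.

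A secondary problem is the size of your fallback positive term: the within-class contribution carries the pair-count prefactor $\Theta(1/k^2)$, but $1-\SM^y(g_t(z_{i,j}))$ on within-class mixed points is only guaranteed to be $\Theta(k^{1-C})$ for whatever constant $C$ hides in $\max_{u}C_{u,1}^{(t)}=O(\log k)$, so even granting a strict comparison this route would produce a bound like $\Theta(\tau^{\alpha-1}/(\rho^{\alpha-1}k^{1+C}))$, not the claimed $\Theta(\tau/(\rho^{\alpha-1}k^2))$; in the paper the $\Theta(1)$ lower bound comes from the \emph{cross-class} expectations, which is why the $1/k^2$ prefactor survives.
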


\begin{subproof}[Proof of Claim \ref{y2gradlb}]
By near-identical logic to the steps leading to Equation \ref{efullgrad} and using Equation \ref{diaglb}, we obtain (following the same notation as before):
\begin{align*}
    \inrprod{-\grad_{w_{y, r_2}^{(t)}}J_{MM}(g, \X)}{v_{y, 2}} \geq \Theta\left(\frac{\tau}{\rho^{\alpha} k^2}\right) \sum_{s \neq y} \E_{\beta_s, \beta_{y, 1}, ..., \beta_{y, C_P}} \left[A_1(\beta_s, \sum_{p = 1}^{C_P} \beta_{y, p}) \sum_{p = 1}^{C_P} (\delta_2 - \beta_{y, p})^{\alpha}\right]. \numberthis \label{efullgrad2}
\end{align*}

Now we break the rest of the proof into two cases: whether the condition on the set $U_{y, s}$ from Claim \ref{linexpgap} holds for some $s \neq y$ or not. In the former case, using Corollary \ref{linexpgapcor} and our assumption in the statement of the claim that Equation \ref{y1lb} holds, we get (from linearity of expectation):
\begin{align*}
    \Theta\left(\frac{1}{k^2}\right)\sum_{s \neq y} \E_{\beta_s, \beta_{y, 1}, ..., \beta_{y, C_P}} \left[A_1(\beta_s, \sum_{p = 1}^{C_P} \beta_{y, p}) (\delta_2 - \beta_{y, p})\right] \geq \Theta\left(\frac{1}{k^2}\right). \numberthis \label{y2lbstep1}
\end{align*}

Now observing that $\mathrm{Cov}(A_1(\beta_s, \sum_{p = 1}^{C_P} \beta_{y, p}) (\delta_2 - \beta_{y, p}), (\delta_2 - \beta_{y, p})^{\alpha - 1}) > 0$ for every $p$, we obtain:
\begin{align*}
    &\inrprod{-\grad_{w_{y, r_2}^{(t)}}J_{MM}(g, \X)}{v_{y, 2}} \\
    &\geq \sum_{s \neq y} \sum_{p = 1}^{C_P} \E_{\beta_s, \beta_{y, 1}, ..., \beta_{y, C_P}} \left[A_1(\beta_s, \sum_{p = 1}^{C_P} \beta_{y, p}) (\delta_2 - \beta_{y, p})\right] \E_{\beta_s, \beta_{y, 1}, ..., \beta_{y, C_P}} [(\delta_2 - \beta_{y, p})^{\alpha - 1}]. \numberthis \label{efullgrad3}
\end{align*}
And the result then follows for this case from Equation \ref{y2lbstep1} and the fact that $\E_{\beta_s, \beta_{y, 1}, ..., \beta_{y, C_P}} [(\delta_2 - \beta_{y, p})^{\alpha - 1}]$ is a data-distribution-dependent constant.

For the second case, we have that \textit{for every} class $s \neq y$, $\Prob(\Tilde{\beta_s}) \times \Prob(\Tilde{\beta_s})(U_{y, s}) = o(1)$. However, by comparing to Equation \ref{doubleint}, we see that this must imply directly that Equation \ref{efullgrad2} satisfies the lower bound in Equation \ref{y2lb}, so we are done (intuitively: the class $y$ is lagging behind all of the other classes, so the coefficient in the gradient correlation with $v_{y, 2}$ cannot be small).
\end{subproof}

Having proved Claim \ref{y2gradlb}, it remains to prove that both Equation \ref{y1lb} and $\max_{y \in [k]} C_{y, 1}^{(t)} = O(\log k)$ hold throughout training, as after doing so we can conclude that the second feature correlation will escape the polynomial part of $\relu$ and become sufficiently large in polynomially many training steps.

\begin{claim}\label{posgrad}
For any $y \in [k], \ell \in [2], \text{ and } r \in [m]$, we have that:
\begin{align*}
    \inrprod{-\grad_{w_{y, r}^{(t + 1)}}J_{MM}(g, \X)}{v_{y, \ell}} \geq 0.99\inrprod{-\grad_{w_{y, r}^{(t)}}J_{MM}(g, \X)}{v_{y, \ell}}
\end{align*}
so long as $\sum_{s \neq y} \exp(g_{t + 1}^s(z_{i, j})) \geq \sum_{s \neq y} \exp(g_t^s(z_{i, j}))$ for all mixed points $z_{i, j}$ with $i \in N_y$.
\end{claim}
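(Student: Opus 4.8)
The plan is to write out both gradient correlations with Calculation~\ref{mmgrad} and compare them term by term over the mixed points. Setting $a_{i,j}^{(t)} = \inrprod{\grad_{w_{y,r}^{(t)}} g_t^y(z_{i,j})}{v_{y,\ell}}$ and $c_{i,j}^{(t)} = \Ind_{y_i = y} + \Ind_{y_j = y} - 2\SM^y(g_t(z_{i,j}))$, both sides of the claimed inequality have the form $\tfrac{1}{2N^2}\sum_{i,j} c_{i,j}^{(t)} a_{i,j}^{(t)}$, so it suffices to show $\sum_{i,j} c_{i,j}^{(t+1)} a_{i,j}^{(t+1)} \geq 0.99 \sum_{i,j} c_{i,j}^{(t)} a_{i,j}^{(t)}$. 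The first easy observation is that $a_{i,j}^{(t)} \geq 0$ for every $i,j,t$: every patch coefficient appearing in $z_{i,j}$ (averages of the signal $\beta$'s and feature-noise $\gamma$'s, all non-negative) is non-negative and $\relu' \geq 0$. Moreover, because $\eta$ is a sufficiently small polynomial, one gradient step perturbs each inner product $\inrprod{w_{y,r}^{(t)}}{z_{i,j}^{(p)}}$ by only $O(1/\poly(k))$, so by continuity of $\relu'$ we get $a_{i,j}^{(t+1)} = (1 \pm o(1)) a_{i,j}^{(t)}$.

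The heart of the proof is controlling the softmax coefficients $c_{i,j}^{(t)}$, and here I would split the double sum according to whether $0$, $1$, or $2$ of $i,j$ lie in $N_y$. For pairs with neither index in $N_y$, the coefficient $\inrprod{z_{i,j}^{(p)}}{v_{y,\ell}}$ is supported only on feature-noise patches, so $a_{i,j}^{(t)}$ is of order $P\delta_4^{\alpha}(\log k)^{\alpha-1}/\rho^{\alpha-1}$ and, since $c_{i,j}^{(t)} = -2\SM^y(g_t(z_{i,j})) \in [-2,0]$, these pairs contribute only a polynomially-small negative term to both sides. For the remaining pairs the hypothesis applies: $\sum_{s \neq y}\exp(g_{t+1}^s(z_{i,j})) \geq \sum_{s \neq y}\exp(g_t^s(z_{i,j}))$. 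Writing $\SM^y(g(z_{i,j})) = \exp(g^y(z_{i,j}))/\big(\exp(g^y(z_{i,j})) + \sum_{s\neq y}\exp(g^s(z_{i,j}))\big)$, the numerator grows by a factor $\exp(g_{t+1}^y(z_{i,j}) - g_t^y(z_{i,j})) = 1 + O(1/\poly(k))$ (again by smallness of $\eta$) while the hypothesis says the denominator's tail does not shrink, so $\SM^y(g_{t+1}(z_{i,j})) \leq \SM^y(g_t(z_{i,j})) + O(1/\poly(k))$, i.e. $c_{i,j}^{(t+1)} \geq c_{i,j}^{(t)} - O(1/\poly(k))$. Combining with $a_{i,j}^{(t+1)} = (1 \pm o(1))a_{i,j}^{(t)} \geq 0$ gives, for each such pair, $c_{i,j}^{(t+1)} a_{i,j}^{(t+1)} \geq c_{i,j}^{(t)} a_{i,j}^{(t)} - O(1/\poly(k))\,(1 + \abs{c_{i,j}^{(t)}})\,a_{i,j}^{(t)}$.

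Summing these, and using that there are only $\Theta(N^2/k)$ pairs with an index in $N_y$, each with $a_{i,j}^{(t)} = O(\log k)$ (a bounded number of signal patches against a sum of correlations that is $O(\log k)$ in the running setting), the accumulated error is $O(\polylog(k)/(k\cdot\poly(k)))$, which by choosing the polynomial in $\eta$ large enough is made a vanishing fraction of $\tfrac{1}{2N^2}\sum_{i,j}c_{i,j}^{(t)}a_{i,j}^{(t)}$, yielding the $0.99$ bound for $k$ large. The main obstacle is precisely this last step: extracting a \emph{fixed} constant $0.99$ rather than merely $1-o(1)$ requires that all the per-pair $O(1/\poly(k))$ errors — coming from the step-to-step drift of $g^y$ and of $\relu'$, and summed over $\Theta(N^2)$ pairs — stay strictly below $0.01$ of the gradient correlation itself, even in regimes where that correlation is small due to near-cancellation in the sum. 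This is where the freedom to take $\eta$ an arbitrarily small polynomial, together with the $\Theta(N/k)$ class balance and $O(\log k)$ correlation bounds from Claim~\ref{mixsetting} and Claim~\ref{mixcontrol}, is essential, and it is the only place the monotonicity hypothesis on the competing soft-max mass genuinely bites.
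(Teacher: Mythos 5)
Your overall skeleton matches the paper's: expand both gradient correlations via Calculation \ref{mmgrad}, compare the softmax coefficients pairwise using the hypothesis that $\sum_{s\neq y}\exp(g^s(z_{i,j}))$ does not decrease, treat the pairs with neither index in $N_y$ as a polynomially small feature-noise term, and invoke smallness of $\eta$. But there is a genuine gap exactly where you flag "the main obstacle," and your proposal does not close it. Your error control is \emph{additive}: you show $c_{i,j}^{(t+1)} \geq c_{i,j}^{(t)} - O(1/\poly(k))$ and $a_{i,j}^{(t+1)} = (1\pm o(1))a_{i,j}^{(t)}$, and then sum to an absolute error $O(\polylog(k)/(k\cdot\poly(k)))$. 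The claim, however, is a \emph{multiplicative} guarantee ($A_{t+1}\geq 0.99\,A_t$ where $A_t \triangleq \inrprod{-\grad_{w_{y,r}^{(t)}}J_{MM}(g,\X)}{v_{y,\ell}}$), and $A_t$ has no lower bound that is uniform over the polynomially many iterations for which the claim must hold (it is used inductively in Corollary \ref{posgradcor}): due to near-cancellation among the $1-2\SM^y$ terms, or simply because the softmax residuals shrink as the outputs grow, $A_t$ can be far smaller than any inverse polynomial in $k$ fixed in advance, and "choose the polynomial in $\eta$ large enough" does not rescue this, since the required size of $\eta$ would then depend on how small $A_t$ gets along the trajectory — which itself depends on $\eta$.

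The paper's proof avoids this by making the one-step error itself proportional to $A_t$. The key observation is that, for weights of class $y$ in the linear regime the gradient correlations are identical (and those in the polynomial regime are smaller), so with $m=\Theta(k)$ one has $g_{t+1}^y(z_{i,j}) \leq g_t^y(z_{i,j}) + \Theta(k\eta A_t)$ on every mixed point. Combined with the hypothesis $\Lambda_{t+1}\geq\Lambda_t$ on the competing softmax mass and the inequality $\exp(x)\leq 1+x+x^2$, this gives $\SM^y(g_{t+1}(z_{i,j})) - \SM^y(g_t(z_{i,j})) \leq \Theta(k\eta A_t)$, hence $A_t - A_{t+1} \leq \Theta(k\eta A_t)$ — a relative bound that yields the factor $0.99$ for $\eta = O(1/\poly(k))$ regardless of how small $A_t$ is. To repair your argument you would need to replace your additive drift estimates (both for the softmax coefficients and for the $\relu'$ factors) by drift estimates proportional to $A_t$ in this way; as written, the final step of your proposal asserts rather than proves the needed comparison.
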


\begin{subproof}[Proof of Claim \ref{posgrad}]
We proceed by brute force; namely, as long as $\eta$ is sufficiently small, we can prove that the gradient for $J_{MM}$ does not decrease too much between successive iterations. As notation is going to become cumbersome quite quickly, we will use the following place-holders for the gradient correlations at time $t$ and $t + 1$:
\begin{align*}
    G_t \triangleq \inrprod{-\grad_{w_{y, r}^{(t)}}J_{MM}(g, \X)}{v_{y, \ell}}.
\end{align*}
We will now prove the result assuming $r \in B_{y, \ell}^{(t)}$, as the the case where $r \notin B_{y, \ell}^{(t)}$ is strictly better (we will have the upper bound shown below with additional $o(1)$ factors). We have that (compare to Equation \ref{lindiaglb}):
\begin{align*}
    G_t - G_{t + 1} &\leq \frac{1}{N^2} \sum_{i \in N_y} \sum_{j \notin N_y} \Theta\Big(\SM^y\big((g_{t + 1}(z_{i, j}))\big) - \SM^y\big((g_t(z_{i, j}))\big)\Big)  \\ 
    &+ \frac{1}{N^2} \sum_{i \in N_y} \sum_{j \in N_y} \Theta\Big(\SM^y\big((g_{t + 1}(z_{i, j}))\big) - \SM^y\big((g_t(z_{i, j}))\big)\Big) \\
    &+ \frac{1}{N^2} \sum_{i \notin N_y} \sum_{j \notin N_y} \Big(\SM^y\big((g_{t + 1}(z_{i, j}))\big) - \SM^y\big((g_t(z_{i, j}))\big)\Big) \Theta \left(\frac{P\sigma_4^{\alpha}\max_{u \in [k], q \in [2]}\inrprod{w_{y, r^*}^{(t)}}{v_{u, q}}^{\alpha - 1}}{\rho^{\alpha - 1}}\right). \numberthis \label{graddiff}
\end{align*}
Let us now focus on the $\SM^y\big((g_{t + 1}(z_{i, j}))\big) - \SM^y\big((g_t(z_{i, j}))\big)$ terms present in Equation \ref{graddiff} above. We will just consider the first case above (mixing between class $y$ and a non-$y$ class), as the other analyses follow similarly. Furthermore, we will omit the $z_{i, j}$ in what follows (in the interest of brevity) and simply write $g_{t + 1}^y$. Additionally, similar to Claim \ref{linexpgap}, we will use the notation $\Lambda_t = \sum_{s \neq y} \exp(g_t^s)$.

Now by the assumption in the statement of the claim, we have that $\Lambda_{t + 1} \geq \Lambda_t$, and since $m = \Theta(k)$ (number of weights per class), we have that $g_{t + 1}^y \leq g_t^y + \Theta(k\eta G_t)$ (all of the updates for weights in the linear regime are identical and strictly larger than updates for those in the poly regime). Thus,
\begin{align*}
    \SM^y(g_{t + 1}) - \SM^y(g_t) &\leq \frac{\exp(g_t^y + \Theta(k\eta G_t))}{\Lambda_t + \exp(g_t^y + \Theta(k\eta G_t))} - \frac{\exp(g_t^y)}{\Lambda_t + \exp(g_t^y)} \\
    &= \frac{\Lambda_t \exp(g_t^y) (\exp(\Theta(k\eta G_t)) - 1)}{\Lambda_t^2 + \Lambda_t \exp(g_t^y) (\exp(\Theta(k\eta G_t)) + 1) + \exp(2g_t^y + \Theta(k\eta G_t))} \\
    &\leq \Theta(k \eta G_t + k^2 \eta^2 G_t^2) = \Theta(k \eta G_t). \numberthis \label{smaxgapterm}
\end{align*}
Where in the last line we again used the inequality $\exp(x) \leq 1 + x + x^2$ for $x \in [0, 1]$. Plugging Equation \ref{smaxgapterm} into Equation \ref{graddiff} (after similar calculations for the other two pieces of Equation \ref{graddiff}) yields the result (since again, $\eta = O(1/\poly(k))$ is suitably small).
\end{subproof}

\begin{corollary}\label{posgradcor}
For any $y \in [k], \ell \in [2], r \in [m], \text{ and } t$, we have that:
\begin{align*}
    \inrprod{-\grad_{w_{y, r}^{(0)}}J_{MM}(g, \X)}{v_{y, \ell}} \geq 0 \implies \inrprod{-\grad_{w_{y, r}^{(t)}}J_{MM}(g, \X)}{v_{y, \ell}} \geq 0.
\end{align*}
\end{corollary}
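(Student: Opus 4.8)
The plan is to prove Corollary \ref{posgradcor} by induction on $t$, feeding Claim \ref{posgrad} at each step; the base case $t=0$ is exactly the hypothesis. Since the corollary is asserted for every class, I would run the induction \emph{simultaneously over all classes}: the inductive invariant at time $t$ is that for every $u\in[k]$, every $\ell'\in[2]$, and every $r'\in[m]$ with $\inrprod{-\grad_{w_{u,r'}^{(0)}}J_{MM}(g,\X)}{v_{u,\ell'}}\geq 0$, one has $\inrprod{-\grad_{w_{u,r'}^{(\tau)}}J_{MM}(g,\X)}{v_{u,\ell'}}\geq 0$ for all $\tau\leq t$. Given the invariant at $t$ and a fixed $(y,\ell,r)$ with non-negative initial gradient correlation, Claim \ref{posgrad} upgrades $\inrprod{-\grad_{w_{y,r}^{(t)}}J_{MM}(g,\X)}{v_{y,\ell}}\geq 0$ to $\inrprod{-\grad_{w_{y,r}^{(t+1)}}J_{MM}(g,\X)}{v_{y,\ell}}\geq 0.99\,\inrprod{-\grad_{w_{y,r}^{(t)}}J_{MM}(g,\X)}{v_{y,\ell}}\geq 0$, closing the induction --- \emph{provided} the hypothesis of Claim \ref{posgrad} holds at step $t$, i.e.\ $\sum_{s\neq y}\exp(g_{t+1}^s(z_{i,j}))\geq \sum_{s\neq y}\exp(g_t^s(z_{i,j}))$ for every mixed point $z_{i,j}$ with $i\in N_y$. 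Verifying this hypothesis is the real content.

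The first step toward the hypothesis is to note that the invariant at time $t$ forces the ``diagonal'' correlations to be non-decreasing over the step $t\to t+1$. For any class $s$ and neuron $r'$ with $\inrprod{w_{s,r'}^{(0)}}{v_{s,\ell'}}>0$, the analysis underlying Claims \ref{mixoffdiag}--\ref{mixgrowth} (the derivation of Equations \ref{fulldiaglb} and \ref{diaglb}, evaluated at initialization) shows $\inrprod{-\grad_{w_{s,r'}^{(0)}}J_{MM}(g,\X)}{v_{s,\ell'}}\geq 0$, so the invariant keeps this true at time $t$, and the gradient-descent update \ref{gradupdate} gives $\inrprod{w_{s,r'}^{(t+1)}}{v_{s,\ell'}}\geq\inrprod{w_{s,r'}^{(t)}}{v_{s,\ell'}}$; weights with $\inrprod{w_{s,r'}^{(0)}}{v_{s,\ell'}}\leq 0$ only ever become more negative along $v_{s,\ell'}$ and so never contribute to $g_t^s$ through a signal patch. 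Hence, for every class $s$, the \emph{signal-patch part} of $g_t^s(z_{i,j})$ --- the contribution of those patches of $z_{i,j}$ carrying $v_{s,1}$ or $v_{s,2}$ with a $\Theta(1)$ coefficient, which is present only when $j\in N_s$ --- is monotonically non-decreasing along the trajectory.

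It remains to control the \emph{feature-noise-patch part} of $g_t^s(z_{i,j})$ (the only part present when $j\notin N_s$, e.g.\ $j\in N_y$). Here I would invoke Corollary \ref{mixoffdiag2}, which keeps every off-diagonal correlation positive and of size $O\big(\max_{\ell''}\inrprod{w_{s,r'}^{(t)}}{v_{s,\ell''}}/k\big)=O(\log k/k)$, together with $\gamma_{j,\ell}\leq\delta_4=\Theta(k^{-1.5})$ from Assumption \ref{as1}: every feature-noise patch obeys $\inrprod{w_{s,r'}^{(t)}}{z_{i,j}^{(q)}}=O(k^{-1.5}\log k)=o(\rho)$, so all such patches remain in the polynomial regime of $\relu$, and summing $\relu$ of an $o(\rho)$ quantity over the $O(mP)=O(k^3)$ (neuron, patch) pairs shows the feature-noise part of $g_t^s(z_{i,j})$ is $o(1)$ uniformly, with per-step fluctuation $O(\eta)\cdot o(1)$. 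Since only $\Theta(1)$ classes $s$ have a non-trivial feature-noise part on a given $z_{i,j}$, this yields $\sum_{s\neq y}\exp(g_{t+1}^s(z_{i,j}))\geq\sum_{s\neq y}\exp(g_t^s(z_{i,j}))$ up to a negligible correction that is absorbed into the $0.99$-versus-$1$ slack in Claim \ref{posgrad} (equivalently, one refines the bound by observing that the only negative drift on an off-diagonal correlation $\inrprod{w_{s,r'}^{(t)}}{v_{u,\ell'}}$ comes through the $-2\SM^s$ terms of Calculation \ref{mmgrad} on pairs with $\SM^s=\Theta(1/k)$ and tiny $\relu'$, already accounted for in the $o(1)$ estimate). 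This establishes the hypothesis of Claim \ref{posgrad} at step $t$, completing the induction.

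The main obstacle is precisely this verification of Claim \ref{posgrad}'s hypothesis, and within it the two delicate points are: (i) the initialization fact that $\inrprod{-\grad_{w_{s,r'}^{(0)}}J_{MM}(g,\X)}{v_{s,\ell'}}\geq 0$ whenever $\inrprod{w_{s,r'}^{(0)}}{v_{s,\ell'}}>0$, which needs the sign structure of the Midpoint Mixup gradient --- non-negative patch overlaps, the coefficient $\Ind_{y_i=y}+\Ind_{y_j=y}-2\SM^y\geq 1-2\SM^y\geq 0$ on the pairs contributing the signal part because no class dominates the softmax at initialization, minus a uniformly negligible feature-noise correction --- and (ii) making the ``negligible feature-noise fluctuation'' rigorous uniformly over all classes and all class-$y$ mixed points, so that it genuinely does not spoil the monotonicity Claim \ref{posgrad} needs (or, alternatively, checking that Claim \ref{posgrad}'s conclusion is robust to replacing its hypothesis by an approximate one). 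The signal-patch monotonicity, by contrast, is an immediate consequence of the inductive invariant and the gradient-descent update and needs no new work.
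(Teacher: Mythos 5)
Your proposal is correct and follows essentially the same route as the paper: the paper's own proof is exactly ``the monotonicity condition $\sum_{s\neq y}\exp(g_{t+1}^s(z_{i,j}))\geq\sum_{s\neq y}\exp(g_t^s(z_{i,j}))$ holds (at $t=0$, via the analysis in Claim \ref{mixoffdiag}), then induct using Claim \ref{posgrad}.'' You simply spell out the verification of Claim \ref{posgrad}'s hypothesis along the trajectory (signal-patch monotonicity from the inductive invariant plus negligible feature-noise contributions via Corollary \ref{mixoffdiag2}) in more detail than the paper's two-line argument does, which is consistent with its intended reasoning.
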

\begin{subproof}[Proof of Corollary \ref{posgradcor}]
For every class $y$, we have $\sum_{s \neq y} \exp(g_{t + 1}^s(z_{i, j})) \geq \sum_{s \neq y} \exp(g_t^s(z_{i, j}))$ for all mixed points $z_{i, j}$ with $i \in N_y$ for $t = 0$ (see the proof of Claim \ref{mixoffdiag}), and the corollary then follows from an induction argument and Claim \ref{posgrad}.
\end{subproof}

And with Corollary \ref{posgradcor} we may prove that $\max_{y \in [k]} C_{y, 1}^{(t)} = O(\log k)$ holds throughout training.

\begin{claim}\label{mixlogk}
For all $t = O(\poly(k))$, for any polynomial in $k$, we have that $\max_{y \in [k]} C_{y, 1}^{(t)} = O(\log k)$.
\end{claim}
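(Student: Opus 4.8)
The plan is to mirror the proof of Claim \ref{ermbound}: I would show that $\max_{y}C_{y,1}^{(t)}$ can only climb through successive levels $a\log k \to (a+1)\log k$ at a rate that decays geometrically in $a$, so that crossing any level $f(k)\log k$ with $f(k)=\omega(1)$ would require $\omega(\poly(k))$ iterations and is therefore impossible for $t = O(\poly(k))$. Throughout I would assume the setting of Claim \ref{mixsetting} together with the conclusions of Claims \ref{mixoffdiag} and \ref{mixgrowth} and Corollaries \ref{mixoffdiag2} and \ref{posgradcor}: the off-diagonal correlations sit a $O(1/k)$ factor below the diagonal ones, gradients to $v_{y,\ell}$ that started non-negative stay non-negative, and for $t \geq T_y$ the class-$y$ outputs on its mixed points are $\Omega(\log k)$ and, by Claim \ref{mixcontrol}, all within a constant factor of one another. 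It is cleanest to track the potential $\Phi_t \triangleq \max_{y}\big(C_{y,1}^{(t)} + C_{y,2}^{(t)}\big)$, which upper bounds $\max_y C_{y,1}^{(t)}$ and controls the magnitude of all mixed-point outputs (only classes with $C_{y,1}^{(t)}+C_{y,2}^{(t)} = \Theta(\Phi_t)$ can govern the growth of $\Phi_t$, so it suffices to bound the one-step increment for such classes).

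The first step is a per-iteration upper bound on $\Phi_{t+1}-\Phi_t$. Fix a class $y$ with $C_{y,1}^{(t)}+C_{y,2}^{(t)} = \Theta(\Phi_t) = \Theta(a\log k)$. Using Calculation \ref{mmgrad} and the splitting already performed in Equations \ref{lindiaglb}--\ref{diaglb2}, the increment of $C_{y,1}^{(t)}+C_{y,2}^{(t)}$ (summed over the $\Theta(k)$ weights in $B_{y,1}^{(t)}\cup B_{y,2}^{(t)}$, the remaining weights still in the polynomial regime contributing only $o(1)$) decomposes into a within-class-$y$ term with coefficient $2(1-\SM^y(g_t(z_{i,j})))>0$, a cross-class term with coefficient $1-2\SM^y(g_t(z_{i,j}))$, and a feature-noise term of order $\eta P\delta_4^{\alpha}(\log k)^{\alpha-1}$. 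By Claim \ref{mixcontrol} the class-$y$ mixed-point outputs are $\Theta(a\log k)$, so Claim \ref{mixsmax} gives $1-\min_{i,j\in N_y}\SM^y(g_t(z_{i,j})) = O(k^{1-\Theta(a)})$, whence the within-class term is $O(\eta/k^{\Theta(a)})$; the feature-noise term is negligible.

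The main obstacle is the cross-class term: its coefficient $1-2\SM^y(g_t(z_{i,j}))$ is positive precisely when some other class output exceeds $g_t^y(z_{i,j})$ on $z_{i,j}$, and — unlike the ERM coefficient $1-\SM^y(g_t(x_i))$ — this is not automatically small when $C_{y,1}^{(t)}$ is large. I would handle it by exploiting the competitive structure of Midpoint Mixup. On any cross-class mixed point $z_{i,j}$ with $y_j=s$, the coefficients of the two relevant classes satisfy $\big(1-2\SM^y(g_t(z_{i,j}))\big) + \big(1-2\SM^s(g_t(z_{i,j}))\big) = 2\big(1-\SM^y(g_t(z_{i,j}))-\SM^s(g_t(z_{i,j}))\big) = o(1)$, since by Corollary \ref{mixoffdiag2} every other class output on $z_{i,j}$ is $O(\log(k)/k)$; thus cross-class mixing is, to leading order, a zero-sum redistribution of gradient mass and cannot inflate the maximal class. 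To make this quantitative I would apply the sum-adjustment $g_t^u \leftarrow g_t^u - \tfrac{C_P\delta_2}{2}\big(C_{y,1}^{(t)}+C_{y,2}^{(t)}\big)$ and the mixed-point analogues of the bounds in Equations \ref{gty2}, \ref{gts2}, \ref{gtulin}, which yield $g_t^y(z_{i,j})-g_t^s(z_{i,j}) = \tfrac{\tilde{\beta}_i}{2}\Delta_y^{(t)} - \tfrac{\tilde{\beta}_j}{2}\Delta_s^{(t)} + \tfrac{C_P\delta_2}{2}D_{y,s}^{(t)}\pm O(1)$ with $D_{y,s}^{(t)}\geq 0$ for the maximizing class; the anti-concentration of the $\beta_{y,p}$ from Definition \ref{datadist} then confines the set of $(\tilde{\beta}_i,\tilde{\beta}_j)$ on which class $y$ can lose, and on that set $\tilde{\beta}_i<0$, so the $v_{y,1}$-weight $\beta_i/2$ attached to the corresponding gradient term is small and whatever positive push survives is absorbed by the lagging correlation $C_{y,2}^{(t)}$ rather than by $\Phi_t$. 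The technical heart is controlling the residuals here — the imbalance $|B_{y,1}^{(t)}|\neq|B_{y,2}^{(t)}|$ and the $o(1)$ softmax corrections, which must both be shown to contribute only $O(\eta/k^{\Theta(a)})$ — and this is the step I expect to be hardest.

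Granting the per-step bound $\Phi_{t+1}-\Phi_t \leq O(\eta/k^{\Theta(a)})$ whenever $\Phi_t=\Theta(a\log k)$, the finish copies the end of Claim \ref{ermbound}. Since each of the $\Theta(k)$ contributing weights must grow by $\Theta(\log(k)/k)$ for $\Phi_t$ to advance by $\Theta(\log k)$, the number $T_a$ of iterations needed to cross $a\log k$ after first crossing $(a-1)\log k$ satisfies $T_a = \Omega(k^{\Theta(a)}\log(k)/\eta)$, hence $T_a = \Omega(k\,T_{a-1})$ past a fixed constant level. Summing the geometric series exactly as in Equation \ref{lilomega}, reaching $f(k)\log k$ for any $f(k)=\omega(1)$ takes $\omega(\poly(k))$ iterations; since $\eta=O(1/\poly(k))$, this forces $\Phi_t=O(\log k)$, and a fortiori $\max_{y\in[k]}C_{y,1}^{(t)}=O(\log k)$, for every $t=O(\poly(k))$.
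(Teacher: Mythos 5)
There is a genuine gap, and it sits exactly where you flagged it: the per-step bound $\Phi_{t+1}-\Phi_t \leq O(\eta/k^{\Theta(a)})$ for the cross-class term is not established, and the specific mechanism you propose for it does not work. Your ``zero-sum'' observation $(1-2\SM^y)+(1-2\SM^s)=o(1)$ on a cross-class mixed point is morally the right ingredient (it is the paper's identity in Equation \ref{Aminusrep}), but applied per-class it does \emph{not} show that mixing ``cannot inflate the maximal class'': a zero-sum transfer between $y$ and $s$ increases the maximum whenever the maximal class is the one receiving the positive coefficient, and this happens on a constant-measure set of coefficient draws. Concretely, if $C_{y,1}^{(t)}+C_{y,2}^{(t)}$ is maximal but some other class $s$ has $D_{y,s}^{(t)}=O(1)$ and a large gap $\Delta_s^{(t)}$, then after the recentering the comparison is $\tfrac{1}{2}\tilde{\beta}_y\Delta_y^{(t)}$ versus $\tfrac{1}{2}\tilde{\beta}_s\Delta_s^{(t)}-\tfrac{C_P\delta_2}{4}D_{y,s}^{(t)}$, and class $y$ loses with constant probability; anti-concentration does not confine this set. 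On such points the push to $C_{y,1}^{(t)}$ carries coefficient $\beta_i/2 \geq C_P\delta_1/2 = \Theta(1)$ (your inference ``$\tilde{\beta}_i<0$ so $\beta_i/2$ is small'' is false, since $\beta_i\geq C_P\delta_1$ with $\delta_1=\Theta(1)$), and saying the push is ``absorbed by $C_{y,2}^{(t)}$ rather than by $\Phi_t$'' contradicts your own potential, which includes $C_{y,2}^{(t)}$. The deeper problem is structural: the size of the cross-class coefficient is governed by \emph{relative} comparisons between classes on each mixed point, not by the absolute level $a\log k$ of the outputs, so there is no reason for the increment to decay geometrically in $a$, and the geometric-series finish of Claim \ref{ermbound} does not transfer. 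This is precisely why the paper states at the start of the Mixup proof that the ERM technique for bounding outputs needs to be replaced.

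The paper closes the argument globally rather than per class: it sums the gradient correlations $\inrprod{-\grad_{w_{y,r}^{(t)}}J_{MM}(g,\X)}{v_{y,1}}$ over all $y\in[k]$ (Equation \ref{efullgradsum}), pairs the cross-class contributions of $(y,s)$ and $(s,y)$ via the antisymmetry identity \ref{Aminusrep} to produce $\E[A_1(\beta_s,\beta_y)(\beta_y-\beta_s)]$, kills that term with the covariance/monotonicity argument of Claim \ref{linexpgap}, and is left with the within-class terms, which become so small once some output exceeds $C\log k$ that the whole sum would be negative --- contradicting Corollary \ref{posgradcor}, which guarantees each summand stays non-negative. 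So the cancellation you want must be harvested \emph{in aggregate across all classes}, with gradient non-negativity supplying the contradiction; a max-class potential with level-by-level decay cannot see this coupling, and as written your argument does not prove the claim.
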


\begin{subproof}[Proof of Claim \ref{mixlogk}]
The idea is to consider the sum of gradient correlations across classes, and show that the cross-class mixing term in this sum becomes smaller (as this would be our only concern - we already know the same-class mixing term will become smaller by the logic of Claim \ref{ermgrowth}).

As in the previous claims in this section, we will proceed with an expectation analysis. We will focus on the weights $w_{y, r}$ that are in the linear regime for the feature $v_{y, 1}$ for each class $y$, as these are the only relevant weights for $C_{y, 1}^{(t)}$. Additionally, instead of considering the sum of gradient correlations over all $w_{y, r}$ with $r \in B_{y, 1}^{(t)}$, it will suffice for our purposes to just consider the sum of gradient correlations over classes while using an arbitrary weight $w_{y, r}$ in the linear regime. Thus, we will abuse notation slightly and use $w_{y, r}$ to indicate such a weight for each class $y$ for the remainder of the proof of this claim (note that we do not mean to imply by this that weight $r$ is in the linear regime for every class simultaneously, but rather that there exists some $r$ for every class that is in the linear regime).

Now in the same vein as Equation \ref{efullgrad} (referring again to Equation \ref{lindiaglb}), we have that:
\begin{align*}
    \sum_{y \in [k]} \inrprod{-\grad_{w_{y, r}^{(t)}}J_{MM}(g, \X)}{v_{y, 1}} &\leq \Theta\left(\frac{1}{k^2}\right) \sum_{y = 1}^k \sum_{s = y + 1}^k  \E_{\beta_s, \beta_y} \left[A_1(\beta_s, \beta_y) \beta_y\right] + \E_{\beta_y, \beta_s} \left[A_1(\beta_y, \beta_s) \beta_s\right] \\
    &+ \Theta\left(\frac{1}{k^2}\right) \sum_{y = 1}^k \E_{\beta_y} \left[A_2(\beta_y, \beta_y) \beta_y \right] - O\left(P \delta_4^{\alpha} / \poly(k)\right). \numberthis \label{efullgradsum}  
\end{align*}
And we recall that $N$ is sufficiently large so that the deviations from the expectations above are negligible compared to the subtracted term. We have carefully paired the expectations in the leading term of Equation \ref{efullgradsum} so as to make use of the following fact:
\begin{align*}
    A_1(\beta_s, \beta_y) = -A_1(\beta_y, \beta_s) + \frac{\sum_{u \in [k] \setminus \{y, s\}} \exp(g_t^u(z_{y, s}))}{\sum_{u \in [k]} \exp(g_t^u(z_{y, s}))}. \numberthis \label{Aminusrep}
\end{align*}
The second term on the RHS of Equation \ref{Aminusrep} is of course $o(P \delta_4^{\alpha} / \poly(k))$ so long as $g_t^s(z_{y, s})$ and/or $g_t^y(z_{y, s})$ are greater than $C\log k$ for a large enough constant $C$, so we obtain:
\begin{align*}
    \sum_{y \in [k]} \inrprod{-\grad_{w_{y, r}^{(t)}}J_{MM}(g, \X)}{v_{y, 1}} &\leq \Theta\left(\frac{1}{k^2}\right) \sum_{y = 1}^k \sum_{s = y + 1}^k  \E_{\beta_s, \beta_y} \left[A_1(\beta_s, \beta_y) (\beta_y - \beta_s)\right] \\
    &+ \Theta\left(\frac{1}{k^2}\right) \sum_{y = 1}^k \E_{\beta_y} \left[A_2(\beta_y, \beta_y) \beta_y \right] - O\left(P \delta_4^{\alpha} / \poly(k)\right). \numberthis \label{efullgradsum2}  
\end{align*}
Now again by the logic of Claim \ref{linexpgap} we have that $\mathrm{Cov}(A_1(\beta_s, \beta_y), \beta_y - \beta_s) < 0$, so it follows that:
\begin{align*}
    \sum_{y \in [k]} \inrprod{-\grad_{w_{y, r}^{(t)}}J_{MM}(g, \X)}{v_{y, 1}} &\leq \Theta\left(\frac{1}{k^2}\right) \sum_{y = 1}^k \E_{\beta_y} \left[A_2(\beta_y, \beta_y) \beta_y \right] - O\left(P \delta_4^{\alpha} / \poly(k)\right). \numberthis \label{efullgradsum3}  
\end{align*}
And if $g_t^y(z_{y, s}) \geq C\log k$ for a sufficiently large constant $C$, we have that the RHS above would be negative, which contradicts Corollary \ref{posgradcor}, proving the claim.
\end{subproof}
We have now wrapped up all of the pieces necessary to prove Theorem \ref{mixupresult}. Indeed, we can now show that for every class the correlation with both features becomes large over the course of training.

\begin{claim}\label{mainmixup}
For every class $y \in [k]$, in $O(\poly(k))$ iterations (for a sufficiently large polynomial in $k$) we have that both $C_{y, 1}^{(t)} = \Omega(\log k)$ and $C_{y, 2}^{(t)} = \Omega(\log k)$.
\end{claim}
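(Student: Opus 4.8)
The plan is to assemble the pieces from Parts~I and~II into a single dynamical argument for a fixed class $y$. By Claim~\ref{mixgrowth} together with Claim~\ref{mixlogk}, after $t_0 = O(\poly(k))$ iterations one of the two features --- the leading one in the sense of Equation~\ref{leadfeat}, which we relabel $v_{y,1}$ --- satisfies $C_{y,1}^{(t)} = \Theta(\log k)$ from $t_0$ onward; moreover, by Corollary~\ref{mixoffdiag2} all off-diagonal correlations for class $y$ stay $O(C_{y,1}^{(t)}/k)$, so only the weights correlated with $v_{y,1}$ and $v_{y,2}$ are relevant. The whole task then reduces to showing that the lagging correlation $C_{y,2}^{(t)}$ also reaches $\Omega(\log k)$ and stays there; the latter will be automatic, since the weight $r_2^*$ with $\inrprod{w_{y,r_2^*}^{(0)}}{v_{y,2}} = \Omega(1/\sqrt d)$ (Claim~\ref{mixsetting}) has nonnegative gradient throughout by Corollary~\ref{posgradcor}, so $C_{y,2}^{(t)} \geq \inrprod{w_{y,r_2^*}^{(t)}}{v_{y,2}}$ is eventually non-decreasing, and likewise $C_{y,1}^{(t)} \geq \inrprod{w_{y,r_1^*}^{(t)}}{v_{y,1}}$ stays $\Omega(\log k)$.

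First I would sharpen the growth analysis of Claim~\ref{mixgrowth}: as long as $C_{y,2}^{(t)}$ remains below a small constant multiple $\epsilon_0\log k$, the leading correlation is in fact bounded below by $(1+c_0)\log k$ for some constant $c_0>0$. The mechanism is the alignment pressure of Lemma~\ref{mixalign}: on a mixed point $z_{i,j}$ pairing class $y$ (with minimal signal-coefficient sum) against another class $s$ that has already learned a feature, the class-$y$ output $g_t^y(z_{i,j}) \approx \tfrac12\big(\sum_p \beta_{i,p}\big) C_{y,1}^{(t)}$ stays below $g_t^s(z_{i,j})$, so $1-2\SM^y(g_t(z_{i,j}))$ is $\Theta(1)$ and positive there (feature-noise and polynomial-regime contributions being $O(1)$ by Claim~\ref{mixcontrol}), and the gradient lower bounds behind Claim~\ref{mixgrowth} keep $\inrprod{w_{y,r_1^*}^{(t)}}{v_{y,1}}$ increasing at rate $\Theta(\eta/k)$ until $g_t^y$ is a large constant times $\log k$ even on these worst mixed points; since the relevant signal coefficient there is $\Theta(1)$, this forces $C_{y,1}^{(t)} \geq (1+c_0)\log k$.

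With that in hand, whenever $C_{y,2}^{(t)} \le \epsilon_0\log k$ we have $\Delta_y^{(t)} = C_{y,1}^{(t)} - C_{y,2}^{(t)} \geq \log k - o(1)$, so Claim~\ref{y2gradlb} applies while $r_2^*$ is still in the polynomial part of $\relu$, and Claims~\ref{linexp}--\ref{linexpgap} apply once $r_2^*$ enters $B_{y,2}^{(t)}$; in both regimes the positivity hypothesis of Equation~\ref{y1lb} is discharged by Corollary~\ref{posgradcor}, since $r_1^*$ has nonnegative gradient throughout (it does at initialization by the argument behind Equation~\ref{diaglb}). These yield a uniform per-step increase $\Theta\big(\eta\,\tau/(\rho^{\alpha-1}k^2)\big) = \Theta(\eta/\poly(k))$ with $\tau = \Omega(1/\sqrt d)$ in $\inrprod{w_{y,r_2^*}^{(t)}}{v_{y,2}}$, so after $O(\poly(k)/\eta)$ iterations this correlation --- hence $C_{y,2}^{(t)}$ --- exceeds $\epsilon_0\log k = \Omega(\log k)$, at which point the argument terminates. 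Feeding the conclusion back gives $C_{y,1}^{(t)}, C_{y,2}^{(t)} = \Theta(\log k)$ for all remaining $t = O(\poly(k))$, as required by Theorem~\ref{mixupresult}.

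I expect the main obstacle to be the sharpened growth step --- showing that the leading correlation overshoots $\log k$ by a constant factor before its gradient decays. This requires carefully tracking the sign of $1 - 2\SM^y(g_t(z_{i,j}))$ across the full range of signal-coefficient values $\beta_{y,p}$ (positive on low-coefficient mixed points, potentially negative on high-coefficient ones) and verifying that the net contribution to the gradient of $\inrprod{w_{y,r_1^*}^{(t)}}{v_{y,1}}$, integrated against the coefficient distribution, stays positive until the overshoot is reached, all while reusing the expectation-form analysis of Claims~\ref{linexp}--\ref{linexpgap} and respecting the $O(\log k)$ ceiling of Claim~\ref{mixlogk}. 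A secondary nuisance is bridging the transition of $r_2^*$ from the polynomial to the linear regime of $\relu$ and confirming the per-step lower bound survives both that switch and the corresponding change of case in Claim~\ref{mixsmax}.
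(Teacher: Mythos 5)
Your proposal follows essentially the same route as the paper's proof: Claim \ref{mixgrowth} delivers $C_{y,1}^{(t)} = \Omega(\log k)$ in polynomially many iterations, Claim \ref{y2gradlb} with $\tau = \Theta(1/\sqrt{d})$ (its hypothesis in Equation \ref{y1lb} discharged via Corollary \ref{posgradcor}) brings a $v_{y,2}$-weight into $B_{y,2}^{(t)}$ in polynomially many further iterations, and Claims \ref{linexp} and \ref{linexpgap} together with Corollary \ref{posgradcor} and Claim \ref{mixlogk} then drive $C_{y,2}^{(t)}$ up to $\Omega(\log k)$, which is exactly the paper's three-case argument. The only difference is the ``sharpened growth'' sub-claim you flag as the main obstacle (an overshoot $C_{y,1}^{(t)} \geq (1+c_0)\log k$ whenever $C_{y,2}^{(t)} \leq \epsilon_0 \log k$, to keep the hypothesis $\Delta_y^{(t)} \geq \log k - o(1)$ of the catch-up claims alive until $C_{y,2}^{(t)}$ is genuinely $\Omega(\log k)$): the paper's proof does not carry out any such step --- it simply chains the claims above and leaves that endgame implicit --- so your extra step is added care on a point the paper glosses over rather than a divergence in method.
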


\begin{subproof}[Proof of Claim \ref{mainmixup}]
Claim \ref{mixgrowth} guarantees $C_{y, 1}^{(t)} = \Omega(\log k)$ in polynomially many iterations. If at this point $C_{y, 2}^{(t)} = \Omega(\log k)$, we are done. If this is not the case, but we have $B_{y, 2}^{(t)} \neq \emptyset$, then we are done by Claim \ref{linexp}, Corollary \ref{posgradcor}, and Claim \ref{mixlogk}. On the other hand, if $B_{y, 2}^{(t)} = \emptyset$, then we have by Claim \ref{y2gradlb} that in polynomially many iterations (as we can take $\tau = \Theta(1/\sqrt{d})$ by our setting) $B_{y, 2}^{(t)} \neq \emptyset$, after which we have reverted back to the previous case and we are still done.
\end{subproof}

Now we may conclude the overall proof by observing that Claims \ref{mixgrowth} and \ref{mixlogk} in tandem imply that we achieve (and maintain) perfect training accuracy in polynomially many iterations, while Claim \ref{mainmixup} implies that both features are learned in the sense of Definition \ref{flearn}.
\end{proof}

\section{Proofs of Auxiliary Results}\label{auxiliaryproofs}
\subsection{Proofs of Lemma \ref{mixalign} and Proposition \ref{conmixalign}}
\mixalign*
\begin{proof}
We first prove sufficiency. If $g$ satisfies the conditions in the Lemma, then we have for any data point $(x_i, y_i)$ that:
\begin{align*}
    g^{y_i}(x_i) &= \inrprod{w_{y_i}}{\beta_{y, i} v_{y_i, 1} + (1 - \beta_{y, i}) v_{y_i, 2}} \\
    &= \inrprod{w_{y_i}}{v_{y_i, 1}} = C, \numberthis \label{simplifywcorr}
\end{align*}
for some universal constant $C$. We also have that $g^{s}(x_i) = 0$ for any $s \neq y_i$ (by the cross-class orthogonality condition), from which we then get:
\begin{align*}
     \SM^y(\gamma g(z_{i, j})) = \SM^y(\gamma g^{y_j}(z_{i, j})) = \frac{\exp(\gamma C/2)}{O(k) + 2\exp(\gamma C/2)}, \numberthis \label{gamsmax}
\end{align*}
for any mixed point $z_{i, j}$ with $y_i \neq y_j$. Equation \ref{gamsmax} tends to $1/2$ as $\gamma \to \infty$, and one can easily check that this is the global optimal prediction for the classes $y_i$ and $y_j$ on the Mixup point $z_{i, j}$ (for any such mixed point). Similarly, if $z_{i, j}$ is a mixed point with $y_i = y_j$, then Equation \ref{gamsmax} becomes the ERM case, we obtain the optimal prediction of 1 for the correct class in the limit.

On the other hand, if there exists a pair of classes $(y, s)$ with $s \neq y$ and $\ell_1, \ell_2 \in [2]$ such that $\inrprod{w_y}{v_{y, \ell_1}} \neq \inrprod{w_s}{v_{s, \ell_2}}$, then with probability $1 - \exp(-\Theta(N))$ there exists a mixed point $z_{i, j}$ in $\X$ (where $y_i = y$, $y_j = s$, and $y \neq s$) such that $g^y(z_{i, j}) \neq g^s(z_{i, j})$, and hence $\lim_{\gamma \to \infty} \SM^y(\gamma g(z_{i, j})) \neq 1/2$, so we cannot achieve the infimum of the Midpoint Mixup loss.
\end{proof}

\conmixalign*
\begin{proof}
Firstly, we observe (just from properties of cross-entropy):
\begin{align*}
    \inf J_M(h, \X, \D_{\lambda}) = -\E_{\lambda \sim \D_{\lambda}} [\lambda \log \lambda + (1 - \lambda) \log (1 - \lambda)]. \numberthis \label{jminf}
\end{align*}
Now suppose a model $g$ satisfies the conditions of Lemma \ref{mixalign}. We recall that this implies $g^{y_i}(x_i) = g^{y_j}(x_j) = C > 0$ for a constant $C$ and every pair $(x_i, y_i)$ and $(x_j, y_j)$ as in the proof of Lemma \ref{mixalign}. 

With at least probability $1 - \exp(-\Theta(N))$, we then have that there exist a pair of points $(x_i, y_i)$ and $(x_j, y_j)$ in $\X$ with $y_i \neq y_j$. The Mixup loss restricted to this pair (for which we use the notation $J_M(g, z_{i, j}, \D_{\lambda})$) is then:
\begin{align*}
    J_M(g, z_{i, j}, \D_{\lambda}) = -\E_{\lambda \sim \D_{\lambda}} \big[\lambda \log \SM^{y_i}(g(z_{i, j})) + (1 - \lambda) \log \SM^{y_j}(g(z_{i, j}))\big]. \numberthis \label{singlepoint}
\end{align*}
Furthermore, we have that:
\begin{align*}
    \SM^{y_i}(g(z_{i, j}(\lambda))) = \frac{\exp(\lambda C)}{O(k) + \exp(\lambda C) + \exp((1 - \lambda) C)}. \numberthis \label{lambdasmax}
\end{align*}
From Equations \ref{singlepoint} and \ref{lambdasmax} we can see that, since $D_{\lambda}$ is supported on more than just $0, 1, \text{ and } 1/2$, $J_M(g, z_{i, j}, \D_{\lambda}) \to \infty$ as $C \to \infty$ (Equation \ref{lambdasmax} implies that in the limit $\SM^{y_i}(g(z_{i, j}(\lambda)))$ can only take the values, 0, 1, or $1/2$). Thus it suffices to constrain our attention to $C \in [0, M]$ for some $M \in \R$ depending only on $\D_{\lambda}$, noting that the loss $J_M(g, z_{i, j}, \D_{\lambda})$ is finite for any $C$ in this interval.

However, $J_M(g, z_{i, j}, \D_{\lambda}) - \inf J_M(h, z_{i, j}, \D_{\lambda}) > 0$ (observe from the conditions of the Lemma that $\inf J_M(h, z_{i, j}, \D_{\lambda}) = \inf J_M(h, \X, \D_{\lambda})$) for all $C \in [0, M]$ due to Equation \ref{jminf}. Since this is a continuous function of $C$ over a compact set, it must obtain a minimum greater than 0, and we may choose $\epsilon_0$ to be this minimum (rescaled by a factor of $\Omega(1/N^2)$), thereby finishing the proof.
\end{proof}

\subsection{Proofs of Propositions \ref{linearmixupcase} and \ref{linearerm}}

\linearmixupcase*
\begin{proof}
The idea of proof will be to analyze the gradient correlation with $v_{y, 1} - v_{y, 2}$, and either show that this is significantly negative or, in the case where it is not, the gradient correlation with $v_{y, 2}$ is still significant. Firstly, using the cross-class orthogonality assumption and Calculation \ref{mmgrad}, we can compute:
\begin{align*}
    \inrprod{-\grad_{w_y}J_{MM}(g, \X)}{v_{y, 1} - v_{y, 2}} &= \frac{1}{N^2} \sum_{i \in N_y} \sum_{j \notin N_y} \Big(1 - 2\SM^y\big((g(z_{i, j}))\big)\Big) \left(\beta_i - \frac{1}{2}\right) \\
    &+ \frac{1}{N^2} \sum_{i \in N_y} \sum_{j \in N_y} \Big(1 - \SM^y\big((g(z_{i, j}))\big)\Big) \left(\beta_i - \frac{1}{2}\right). \numberthis \label{fullgradlinmix}
\end{align*}
Where above we used $N_y$ to indicate the indices corresponding to class $y$ data points (as we do in the proofs of the main results). Now using concentration of measure for bounded random variables and the fact that $N$ is sufficiently large, we have from Equation \ref{fullgradlinmix} that with high probability (and with $\poly(k)$ representing a very large polynomial in $k$):
\begin{align*}
    \inrprod{-\grad_{w_y}J_{MM}(g, \X)}{v_{y, 1} - v_{y, 2}} &\leq \Theta\left(\frac{1}{k^2}\right) \sum_{s \neq y} \E_{\beta_s, \beta_y} \left[A_1(\beta_s, \beta_y) (\beta_y - 1/2)\right] \\
    &+ \Theta\left(\frac{1}{k^2}\right) \E_{\beta_y} \left[A_2(\beta_y) (\beta_y - 1/2)\right] + O(1/\poly(k)).
    \numberthis \label{efullgradlinmix}
\end{align*}
Where we define the functions $A_1$ and $A_2$ as:
\begin{align*}
    z_{y, s} &\triangleq \frac{1}{2}\left(\beta_y v_{y, 1} + (1 - \beta_y) v_{y, 2} + \beta_s v_{s, 1} + (1 - \beta_s) v_{s, 2}\right), \\
    A_1(\beta_s, \beta_y) &\triangleq 1 - 2\SM^y\big((g(z_{y, s}))\big), \numberthis \label{a1linmix} \\
    A_2(\beta_s, \beta_y) &\triangleq 1 - \SM^y\big((g(z_{y, s}))\big). \numberthis \label{a2linmix}
\end{align*}
Here $z_{y, s}$ is a random variable denoting the midpoint of a class $y$ point and a class $s$ point (distributed according to Definition \ref{datadist}). Note that Equations \ref{a1linmix} and \ref{a2linmix} are not abuses of notation - the functions $A_1$ and $A_2$ depend only on the random variables $\beta_s$ and $\beta_y$, since we can ignore the cross-class correlations due to orthogonality.

Let us immediately observe that the first two terms (the expectation terms) in Equation \ref{efullgradlinmix} are bounded above by 0. This is due to the fact that $\beta_y - 1/2$ is a symmetric, centered random variable and the functions $A_1$ and $A_2$ are monotonically decreasing in $\beta_y$. We will focus on showing that the first term is significantly negative, as that will be sufficient for our purposes.

Now we perform the transformation (essentially centering):
\begin{align*}
    g_t^u \leftarrow g_t^u - \frac{\inrprod{w_y}{v_{y, 1}} + \inrprod{w_y}{v_{y, 2}}}{4}. \numberthis \label{smaxtransformlin}
\end{align*}
For all $u \in [k]$, noting that this doesn't change the value of the softmax outputs. Under this transformation we have that $g_t^y(z_{y, s}) = (\beta_y - 1/2) \Delta_y/2$, which isolates the gap term $\Delta_y$.

For further convenience, let us use $\Lambda_s \triangleq \sum_{u \neq y} \exp(g_t^u(z_{y, s}))$, and observe that $\Lambda_s$ depends only on $\beta_s$ due to orthogonality. Using the change of variables $\Tilde{\beta_y} = \beta_y - 1/2$ and $\Tilde{\beta_s} = \beta_s - 1/2$ we can then compute the expectation in the first term of Equation \ref{efullgradlinmix} as:
\begin{align*}
    \E_{\beta_s, \beta_y} \left[A_1(\beta_s, \beta_y) (\beta_y - 1/2)\right]  &= \frac{1}{0.64} \int_{0.1}^{0.9} \int_{0.1}^{0.9} \frac{\Lambda_s - \exp(g_t^y(z_{y, s}))}{\Lambda_s + \exp(g_t^y(z_{y, s}))} (\beta_y - 1/2) \ d\beta_y \ d\beta_s \\
    &= \frac{1}{0.64} \int_{-0.4}^{0.4} \int_{-0.4}^{0.4} \frac{\Lambda_s - \exp(\Tilde{\beta_y} \Delta_y/2)}{\Lambda_s + \exp(\Tilde{\beta_y} \Delta_y/2)} \Tilde{\beta_y} \ d\Tilde{\beta_y} \  d\Tilde{\beta_s}. \numberthis \label{doubleintlinmix}
\end{align*}
We will focus on the inner integral in Equation \ref{doubleintlinmix}. Using the symmetry of $\Tilde{\beta_y}$, we have that:
\begin{align*}
    \E_{\beta_y} \left[A_1(\beta_s, \beta_y) (\beta_y - 1/2)\right] &= \frac{1}{0.8} \int_{-0.4}^{0.4} \frac{\Lambda_s - \exp(\Tilde{\beta_y}\Delta_y/2)}{\Lambda_s + \exp(\Tilde{\beta_y}\Delta_y/2)} \ d\Tilde{\beta_y} \\
    &= -\frac{1}{0.8} \int_0^{0.4} \left(\frac{\Lambda_s - \exp(-\Tilde{\beta_y}\Delta_y/2)}{\Lambda_s + \exp(-\Tilde{\beta_y}\Delta_y/2)} - \frac{\Lambda_s - \exp(\Tilde{\beta_y}\Delta_y/2)}{\Lambda_s + \exp(\Tilde{\beta_y}\Delta_y/2)}\right) \Tilde{\beta_y} \ d\Tilde{\beta_y} \\
    &= -\frac{1}{0.8} \int_0^{0.4} \frac{2\Lambda_s\left(\exp(\Tilde{\beta_y} \Delta_y/2) - \exp(-\Tilde{\beta_y} \Delta_y/2)\right)}{\Lambda_s^2 + \Lambda_s \left(\exp(\Tilde{\beta_y} \Delta_y/2) + \exp(-\Tilde{\beta_y} \Delta_y/2)\right) + 1} \Tilde{\beta_y} \ d\Tilde{\beta_y}. \numberthis \label{innerintlinmix} 
\end{align*}
Since $\Delta_y \geq \log k$, the $\exp(-\Tilde{\beta_y} \Delta_y/2)$ term will not have any contribution asymptotically, so we get:
\begin{align*}
    \E_{\beta_s, \beta_y} \left[A_1(\beta_s, \beta_y) (\beta_y - 1/2)\right] = -\int_{-0.4}^{0.4} \int_0^{0.4} \Theta\left(\frac{\Lambda_s \Tilde{\beta_y} \exp(\Tilde{\beta_y} \Delta_y/2)}{\Lambda_s^2 + \Lambda_s \exp(\Tilde{\beta_y} \Delta_y/2)}\right) \ d\Tilde{\beta_y} \ d\Tilde{\beta_s}.  \numberthis \label{asyintlinmix}
\end{align*}
Now we consider two cases. First suppose that there exists an $s$ for which the set $U_s \subset [-0.4, 0.4] \times [0, 0.4]$ consisting of all $(\tilde{\beta_s}, \tilde{\beta_y})$ such that $\Lambda_s \leq \exp(\Tilde{\beta_y}\Delta_y/2) + \kappa$ for some sufficiently small universal constant $\kappa$ has non-zero constant measure. Then it follows that Equation \ref{asyintlinmix} is $\Theta(1)$, and we have the desired result from Equation \ref{efullgradlinmix}.

On the other hand, if this is not the case, then for every $s \neq y$ we have that $U_s^c \cap [-0.4, 0.4] \times [0, 0.4]$ has constant measure, and from Equation \ref{doubleintlinmix} we have that $\E_{\beta_s, \beta_y}[A_1(\beta_s, \beta_y)(1 - \beta_y)] = \Theta(1)$, which directly allows us to conclude that $\inrprod{-\grad_{w_y}J_{MM}(g, \X)}{v_{y, 2}} = \Omega(1/k^2)$.
\end{proof}

\linearerm*
\begin{proof}
From the facts that $\inrprod{w_y}{v_{y, 2}} \geq 0$ and $\Delta_y \geq C\log k$, we have that $g^y(x_i) \geq \beta_{y, i} C \log k$ for every $i \in N_y$ (where $\beta_{y, i}$ represents the coefficient in front of $v_{y, 1}$ in $x_i$). Since $\beta_{y, i} \in [0.1, 0.9]$, we immediately have the result from the logic of Claim \ref{ermsmax} and Calculation \ref{ermgrad}.
\end{proof}

\subsection{Proof of Proposition \ref{nonlinsep}}

\nonlinsep*
\begin{proof}
For hyperparameters, we can choose $\delta_1 = \delta_2 - \delta_1 = 1, \ \delta_3 = \delta_4 = k^{-1.5}$ while being consistent with Assumption \ref{as1}. For the distribution $\D$, for each point $(x_i, y_i)$, we sample a class $s \in [k] \setminus y_i$ uniformly and choose it to be the single class used in the feature noise patches for $x_i$. This clearly falls within the scope of Definition \ref{datadist}.

Now in $N$ i.i.d. samples from $\D$ as specified above, we have with probability at least $1-k^2\exp(\Theta(-N/k^2))$ (Chernoff bound, as in Claim \ref{ermsetting}) that a sample with each possible pair $y, \ s \in [k]$ of signal and noise classes exists. Suppose now that there exist classes $y, \ u \in [k]$ such that $u \notin \argmax_{s \in [k]} \inrprod{w_s}{v_{y, 1} + v_{y, 2}}$. This necessarily implies that $u \neq \max_{s \in [k]} g^s(x)$ for all points $x$ with label $u$ but having feature noise class $y$ (since the order of the sum of the feature noise is $\Theta(\sqrt{k})$), which gives the result.

On the other hand, if there does not exist such a class pair $y, \ u$, then we are also done as that implies all of the weight-feature correlations are the same. 
\end{proof}

\end{document}